\documentclass[10pt,twocolumn,letterpaper]{article}

\usepackage[top=0.9 in,bottom=0.9in,left=0.85in,right=0.85in,columnsep=0.4in]{geometry}

\bibliographystyle{apalike}

\usepackage{amsmath,amsthm,amssymb,eucal}
\usepackage{mathtools}
\usepackage{url}
\usepackage{hyperref} 
\usepackage{stmaryrd}
\usepackage{tikz-cd}
\usepackage{textcomp}
\usepackage{dsfont}
\usepackage[utf8]{inputenc} 
\usepackage[T1]{fontenc}    
\usepackage{url}            
\usepackage{booktabs}       
\usepackage{amsfonts}       
\usepackage{nicefrac}       
\usepackage{microtype}

\usepackage{graphicx}

\DeclareMathOperator{\Tr}{Tr}

\usepackage{color}

\newcommand{\M}{\mathbf{M}}

\newcommand{\R}{\mathbb{R}}
\newcommand{\Q}{\mathbf{Q}}

\newcommand{\F}{\mathcal{F}}
\newcommand{\G}{\mathcal{G}}
\newcommand{\B}{\mathbf{B}}
\newcommand{\C}{\mathbf{C}}

\newcommand{\T}{\mathbf{T}}
\newcommand{\x}{\mathbf{x}}
\newcommand{\X}{\mathbf{X}}
\newcommand{\y}{\mathbf{y}}
\newcommand{\Y}{\mathbf{Y}}
\newcommand{\w}{\mathbf{w}}
\newcommand{\W}{\mathbf{W}}
\newcommand{\z}{\mathbf{z}}

\newcommand{\Z}{\mathbf{Z}}
\newcommand{\I}{\mathbf{I}}
\newcommand{\U}{\mathbf{U}}
\newcommand{\V}{\mathbf{V}}
\newcommand{\D}{\mathbf{D}}
\newcommand{\A}{\mathbf{A}}

\newtheorem{theorem}{Theorem}[section]
 
\newtheorem{lemma}[theorem]{Lemma}

\title{\textbf{Information Theoretic Lower Bounds for Feed-Forward Fully-Connected Deep Networks}}

\author{
	Xiaochen Yang \\
	Department of Statistics \\
	Purdue University\\
	West Lafayette, IN 47906 \\
	\texttt{yang1641@purdue.edu} \\
	\and
	Jean Honorio \\
	Department of Computer Science \\
	Purdue University \\
	West Lafayette, IN 47906 \\
	\texttt{jhonorio@purdue.edu} \\
}

\date{}

\begin{document}

\maketitle

\begin{abstract}
  In this paper, we study the sample complexity lower bounds for the exact recovery of parameters and for a positive excess risk of a feed-forward, fully-connected neural network for binary classification, using information-theoretic tools. We prove these lower bounds by the existence of a generative network characterized by a backwards data generating process, where the input is generated based on the binary output, and the network is parametrized by weight parameters for the hidden layers. The sample complexity lower bound for the exact recovery of parameters is $\Omega(d r \log(r) + p )$ and for a positive excess risk is $\Omega(r \log(r) + p )$, where $p$ is the dimension of the input, $r$ reflects the rank of the weight matrices and $d$ is the number of hidden layers. To the best of our knowledge, our results are the first information theoretic lower bounds.
\end{abstract}

\section{Introduction}

\paragraph{Motivation.} 

There has been an abundace of studies on the generalization upper bound on fully-connected, feed-forward neural networks. Various methods have been applied to obtain generalization and error probability upper bounds, including shattering coefficients (Bartlett, 1998), margin-based bounds (Bartlett et al, 2017), PAC-Bayes methods (Neyshabur et al, 2018) and Rademacher complexity (Neyshabur et al, 2015), among others. With further assumptions, there have also been less size-dependent upper bounds (Golowich et al, 2018). Regardless, one factor common in these upper bounds is the product of norms of parameter matrices, $\prod_{j=1}^d \| \W_j \|$ where $ \| \cdot \|$ is some matrix norm (e.g. Frobenius norm, spectral norm, (2,1)-norm) and $\W_j$'s are the parameter matrices of each layer in the network. 

One interesting behavior of deep learning is that over-parametrized networks seem to generalize well (Li et al, 2018), yet it is not well understood which sample size makes an over-parametrized network perform well and which does not. To the best of our knowledge, there is no sample complexity lower bound by information-theoretic tools yet, and in this paper we provide such a bound. Specifically, we prove our claim by finding one generative neural network achieving these lower bounds. 

\paragraph{Contribution.} 

In this paper, we study the necessary sample complexity lower bounds by considering a generative model, which to the best of our knowledge, we are the first to find such characterization. Specifically, in this generative model, the output, binary labels, are first generated, and then the input is generated conditioned on the output and intermediate hidden layers. Moreover, we consider a setup where the parameter matrices are permutation matrices of low rank, and we show that $\Omega(d r \log(r) + p)$ samples are necessary for the exact recovery of the model parameters, and $\Omega(r \log(r) + p)$ samples for a positive excess risk, where $p$ is the dimension of the input, $r$ is the rank of the parameter matrices, and $d$ is the number of hidden layers in the deep network.

\section{Preliminaries} \label{sec:2}

We begin with some background on fully-connected feed-forward neural networks and the minimax risk framework.

\subsection{Fully-connected feed-forward neural network}

Consider a fully-connected feed-forward neural network with $d$ hidden layers. A common representation is
\begin{align} \label{eq:1}
\begin{split}
\x \mapsto \w_{d+1} \sigma( \W_{d} \sigma (\W_{d-1} \cdots \sigma(\W_1 \x))) 
\end{split}
\end{align}
where $\x \in \R^p$, $\sigma$ is some activation function possibly applied elementwise, $\W_1, \W_2, \cdots \W_{d}$ are matrices parametrizing the hidden layers, $\w_{d+1}$ is a parameter vector, and $y$ is the output.

We focus on the binary classification setting, where each input $\x_i$ is associated with a label, $y_i \in \{ -1, +1 \}$, and the data set $S = \{ (\x_i, y_i) \}_{i=1}^n$ contains $n$ i.i.d. samples. For the common representation introduced above, there is a Markov chain from input $\x$ to output $y$, formally described as:

\begin{align}
\begin{split}
\x 
&\mapsto \W_1 \x 
\mapsto \cdots 
\mapsto \W_{d} \sigma (\W_{d-1} \cdots \sigma(\W_1 \x)) \\
&\mapsto \w_{d+1} \sigma( \W_{d} \sigma (\W_{d-1} \cdots \sigma(\W_1 \x))) 
\mapsto y
\end{split}
\end{align}

Our argument will build on this Markov chain, but we will consider the reverse of this Markov chain, that is, we will consider a generative model, where the label $y$ is first generated, then the input $x$ is generated conditioned on $y$. We will utilize the symmetry of the mutual information between $\x$ and $y$ when the parameter is given.

\subsection{A backwards generative neural network}

Next we propose a generative neural network characterized by a backward data generating process, where the input $\x$ is generated based on the label $y$ and the network is parametrized by some parameter $\tilde{\W}$ different from the parameter $\W$ above. It is described by the Markov chain below, 
\begin{align}
\begin{split} \label{eq:bwd-mc}
y 
&\mapsto y \tilde{\w}_0
\mapsto \tilde{\W}_{1} \tilde{\sigma}(y \tilde{\w}_0) 
\mapsto \cdots \\
&\mapsto \tilde{\W}_{d} \tilde{\sigma} (\tilde{\W}_{d-1} \cdots \tilde{\sigma}(y \tilde{\w}_0)) 
\mapsto \x
\end{split}
\end{align}
where $\tilde{\sigma}$ is (the inverse of) some activation function, and the parameter $\tilde{\W} := (\tilde{\w}_0, \tilde{\W}_1, \cdots, \tilde{\W}_d)$ is a collection of weight matrices and vector characterizing this backwards generative model.

We argue that such a generative model is worth studying, as the mutual information between the data $S = \{ (\x_i, y_i) \}_{i=1}^n$ and the parameter, $\mathbb{I}(S; \tilde{\W})$, is central to information-theorectic framework, and $\mathbb{I}(S; \tilde{\W})$ does not dictate a particular direction of dependency between $\x$ and $y$. Moreover, it is well known that a deep network can represent almost any function, thus studying the sample complexity of such a generative network is also relevant to the understanding of the expressiveness of deep networks.

We will derive the sample complexity lower bounds of the probability of exact recovery of parameter $\tilde{\W}$ and the probability of having a positive excess risk. Note that this backwards model correspond to a hypothesis class including any deep network, as it is possible to have parameter $\tilde{\W}$ generating input $\x$ corresponding to the (randomly generated) labels. Imagine the binary labels are first generated, then there must exist some vector $\tilde{\w}_0$, some matrices $\tilde{\W}_1, \cdots, \tilde{\W}_d$ and some activation function $\tilde{\sigma}$ that gives $(\x, y); \tilde{\W}$ with distribution resembling the desired data distribution $(\x, y)$. 

Another thing to note is that the dimension of $\Tilde{\W}_i$ is different from that of $\W_i$ for $i \in \{ 1, 2, \cdots, d\}$. In fact, there is also no guarantee that $\Tilde{\W}_i$ will be the (pseudo-)inverse of $\W_i$.

\subsection{Minimax Framework}

We briefly review the minimax framework in this section and introduce the two measures of risk we examine in this paper. The minimax framework consists of a well defined objective that aims to shed light about the optimality of algorithms and has been widely used in statistics and machine learning (Wainwright, 2019), (Wasserman, 2006).

We start with the definition of a minimax decoder. Let $\mathcal{P}$ denote a family of distributions on $\F \times \mathcal{X}^n$, where $\F$ is a hypothesis class and $f \in \F$ is a hypothesis that parametrizes the model, $\mathcal{X}^n$ is the $n$-Cartesian product of the sample space $\mathcal{X}$, and $S \in \mathcal{X}^n$ is an i.i.d. dataset of size $n$. Thus $\mathbb{P}(f, S) \in \mathcal{P}$ is a joint distribution of $(f, S)$.

Let $\hat{f}: \mathcal{X}^n \to \F$ denote a decoder, which is a function that takes a dataset in $\mathcal{X}^n$ and returns a hypothesis in $\F$. Let $\Psi(\F) = \{ \hat{f}: \mathcal{X}^n \to \F \}$ denote the class of decoders that output hypothesis in $\F$ by any conceivable procedure. Suppose the true $f^*$ is a random variable with distribution $\mathbb{P}(f^*)$, and note that by Bayes theorem we can write $\mathbb{P}(f, S) = \mathbb{P}(f^*) \mathbb{P}(S | f^*)$. Given $\mathbb{P} = \mathbb{P}(f, S)$, a joint distribution of $(f, S)$, we can assess the quality of a decoder $\hat{f}$ by its risk $r: \Psi \times \F \to \R_{+}$,  
\begin{align} \label{eq:4}
r(\hat{f}, f^*) := \mathbb{E}_{(f^*, S) \sim \mathbb{P}}[\rho(\hat{f}(S), f^*)]
\end{align}
where $\rho: \F \times \F \to \R_{+}$ is a (semi)metric on the hypothesis class $\F$. 

In this paper we analyze the necessary number of samples for the probability of exact recovery of parameter to be greater than $1/2$, and study $P_{(f^*, S) \sim \mathbb{P}}(\hat{f}(S) \neq f^*)$, which corresponds to $\rho(f, f') = \mathbf{1} \{f \neq f'\}$ for $f, f' \in \F$. 

Another measure of risk we study is related to the excess probability of making a wrong prediction. We show that there is a parameter identifiability issue with respect to risk for certain deep networks. We analyze the lower bound of the excess probability of wrong prediction and the sample complexity lower bound for achieving this lower bound in spite of the identifiability issue. We provide the exact definition of this risk measure when presenting our theorem statement.

Another way to assess the quality of a decoder $\hat{f}$ is to consider its maximum risk over the hypothesis class $\F$, that is, $\sup_{f^* \in \F} r(\hat{f}, f^*)$, and we say a decoder $\hat{f}$ is a minimax decoder if it minimizes the maximum risk, that is, 
$$
\inf_{\bar{f} \in \Psi(\F)} \sup_{\mathbb{P}(f^*, S) \in \mathcal{P}} r(\bar{f}, f^*) = \sup_{\mathbb{P}(f^*, S) \in \mathcal{P}} r(\hat{f}, f^*)
$$
and we call this risk the minimax risk.

Under this framework, we will show that there exist a subset $\mathcal{Q}$ of $\mathcal{P}$ and a subset $\G$ of $\F$ such that 
$$
\inf_{\bar{f} \in \Psi(\F)} \sup_{\mathbb{P}(f^*, S) \in \mathcal{P}} r(\bar{f}, f^*) \geq \inf_{\bar{f} \in \Psi(\G)} \sup_{\mathbb{P}(f^*, S) \in \mathcal{Q}} r(\bar{f}, f^*) 
$$
thus the necessary number of samples for $\G$ and $\mathcal{Q}$ becomes a sample complexity lower bound for $\F$ and $\mathcal{P}$.

\section{Information-Theoretic Sample Complexity Lower Bound} \label{sec:3}

In this section, we state our main results on the information-theoretic sample complexity lower bound and discuss some of their implication. We first introduce some definition to help explaining our minimax argument. 

\subsection{Notation for minimax argument and theorems}

Let $\F_{p, d}$ denote the class of parameter matrices and vector for $d$-layer deep networks with input dimension $p$. More formally, 
$$
\F_{p, d} = \{ \tilde{\w}_0 \in \R^p \} \times \left( \bigtimes_{i=1}^d \{ \tilde{\W}_i \in \R^{n_i \times n_{i-1}} \} \right),
$$
where $n_i$ is the size of the $i$-th layer. Then for $\tilde{\W} \in \F_{p, d}$, $\tilde{\W} = (\tilde{\w}_0, \tilde{\W}_1, \cdots, \tilde{\W}_d)$, where $\tilde{\w}_0 \in \R^p$, and each $\tilde{\W}_i \in \R^{n_i \times n_{i-1}}$.

Consider the hypothesis $\tilde{\W} \in \F_{p, d}$ as a random variable, i.e., the random variable $\tilde{\W}$ has support on $\F_{p, d}$, and denote its distribution by $\mathbb{P}(\tilde{\W})$. Let $\mathcal{P}_{\tilde{\W}}(\mathcal{F}_{p, d})$ denote the collection of possible distributions of  $\tilde{\W}$, that is, $\mathbb{P}(\tilde{\W})$ above is an element of $\mathcal{P}_{\tilde{\W}}(\mathcal{F}_{p, d})$.

Given a fixed hypothesis $\tilde{\W}^*$, a dataset $S = \{ (\x_i, y_i) \}_{i=1}^n$ consists of i.i.d. observations generated from a $d$-layer network with input dimension $p$, parameterized by $\tilde{\W}^*$. We use $\mathbb{P}(\tilde{\W}^*, S)$ to denote the joint distribution of hypothesis $\tilde{\W}^*$ and dataset $S$, and by Bayes theorem, $\mathbb{P}(\tilde{\W}^*, S) = \mathbb{P}(\tilde{\W}^*) \mathbb{P}(S | \tilde{\W}^*)$, where $\mathbb{P}(\tilde{\W}^*) \in \mathcal{P}_{\tilde{\W}}(\mathcal{F}_{p, d})$, and the probability distribution $\mathbb{P}(S | \tilde{\W}^*)$ has a variety of choices, as it depends on the choice of activation function and the distribution of the input $\x$. Because $S$ is an i.i.d. dataset, we can write $\mathbb{P}(S | \tilde{\W}^*) = \mathbb{P}^n((\x, y) | \tilde{\W}^*)$, and we further discuss $\mathbb{P}((\x, y) | \tilde{\W}^*)$ below.

Let $\mathcal{P}_{(\x, y) | \tilde{\W}}$ denote a collection of distributions given fixed $\tilde{\W}$, i.e. for $\mathbb{P}_{(\x, y) | \tilde{\W}} \in \mathcal{P}_{(\x, y) | \tilde{\W}}$, $\mathbb{P}_{(\x, y) | \tilde{\W}}$ is the joint distribution of the input $\x$ and output $y$, where $y \in \{-1, +1\}$ as we study binary classifier in this paper. The distribution $\mathbb{P}_{(\x, y) | \tilde{\W}}$ is affected by the activation function and distribution of $\x$, and $\mathcal{P}_{(\x, y) | \tilde{\W}}$ consists of all possible distributions of $(\x, y)$ generated by deep networks parametrized by $\tilde{\W}$ with \textit{all} possible activation functions and \textit{all} possible data distributions for $\x$. 

Now we describe the joint distribution of the hypothesis and the data. Let $\mathcal{P}_{\tilde{\W}, S}(\mathcal{F}_{p, d})$ denote a collection of joint distributions of the hypothesis $\tilde{\W}$ and the dataset $S$, then
\begin{align}
	\begin{split}
		&\mathcal{P} := \mathcal{P}_{\tilde{\W}, S}(\mathcal{P}_{\tilde{\W}}(\mathcal{F}_{p, d}), \mathcal{P}_{(\x, y) | \tilde{\W}}) \\
		= &\{ \mathbb{P}(\tilde{\W}, S) = \mathbb{P}(\tilde{\W}) \mathbb{P}^n((\x, y) | \tilde{\W}): \\
		&\mathbb{P}(\tilde{\W}) \in \mathcal{P}_{\tilde{\W}}(\mathcal{F}_{p, d}), \mathbb{P}((\x, y) | \tilde{\W}) \in \mathcal{P}_{(\x, y) | \tilde{\W}} \} \\
		= & \mathcal{P}_{\tilde{\W}}(\mathcal{F}_{p, d}) \times \left( \mathcal{P}_{(\x, y) | \tilde{\W}} \right)^n
	\end{split}
\end{align}
where we slightly abuse the notation to let $\mathcal{P}$ be shorthand for $\mathcal{P}_{\tilde{\W}, S}(\mathcal{P}_{\tilde{\W}}(\mathcal{F}_{p, d}), \mathcal{P}_{(\x, y) | \tilde{\W}})$. 

As it is customary in minimax analyses (Wang et al, 2010), (Santhanam et al, 2012), (Tandon et al, 2014), we consider a restricted class. Thus, we can further specify a subset of $\mathcal{P}_{(\x, y) | \tilde{\W}}$, restricting the choice of activation function and the type of distribution for $\x$. Let $\mathcal{P}_{(\x, y) | \tilde{\W}}^{\sigma, \mathcal{Q}_{\x}}$ denote the collections of distributions of $(\x, y)$ generated from a $d$-layer network with input dimension $p$, activation function $\sigma$, parameterized by $\tilde{\W}$, and where the input $\x$ has distribution $\mathbb{P}(\x) \in \mathcal{Q}_{\x}$, then $\mathcal{P}_{(\x, y) | \tilde{\W}}^{\sigma, \mathcal{Q}_{\x}} \subseteq \mathcal{P}_{(\x, y) | \tilde{\W}}$. For example, $\mathcal{Q}_{\x}$ could be the collection of all multivariate normal distributions of dimension $p$, and $\sigma$ could be the sigmoid, tanh, leaky ReLU, or identity activation function.

We can also consider any subset $\mathcal{G}_{p,d} \subseteq \F_{p, d}$ and as such $\mathcal{G}_{p,d}$ corresponds to weight matrices and vector with certain restrictions, e.g. low rankness. 

If we define a collection of joint distributions of the hypothesis $\tilde{\W}$ and data $S$ based on some $\mathcal{G}_{p,d} \subseteq \F_{p, d}$ and $\mathcal{P}_{(\x, y) | \tilde{\W}}^{\sigma, \mathcal{Q}_{\x}}$, then this collection $\mathcal{P}_{\tilde{\W}, S}(\mathcal{P}_{\tilde{\W}}(\mathcal{G}_{p, d}), \mathcal{P}_{(\x, y) | \tilde{\W}}^{\sigma, \mathcal{Q}_{\x}})$ would be a subset of $\mathcal{P}$.

Now we are ready to state our main results.

\subsection{Main Theorems}

We define a shorthand for the probability of failing to recover the exact parameter, 
\begin{align}
\xi_1(\hat{f}, \mathbb{P}) := P_{(\tilde{\W}^*, S) \sim \mathbb{P}}(\hat{f}(S) \neq \tilde{\W}^*)
\end{align}

\begin{theorem}[Sample complexity lower bound for exact recovery of hypothesis] \label{thm:3.1}
	There exists a subset
	\begin{align}
		\mathcal{P}' := \mathcal{P}_{\tilde{\W}, S}( \{\text{Uniform}(\mathcal{G}_{p, d})\}, \mathcal{P}_{(\x, y) | \tilde{\W}}^{\text{Id}, \mathcal{Q}_{\x}} )
	\end{align}
	of $\mathcal{P} := \mathcal{P}_{\tilde{\W}, S}(\mathcal{P}_{\tilde{\W}}(\mathcal{F}_{p, d}), \mathcal{P}_{(\x, y) | \tilde{\W}})$ where ${\text{Id}}: \R \to \R$ is ${\text{Id}}(z) = z$, $\mathcal{G}_{p, d} \subseteq \mathcal{F}_{p, d}$ and $\mathcal{P}_{(\x, y) | \tilde{\W}}^{\text{Id}, \mathcal{Q}_{\x}} \subseteq \mathcal{P}_{(\x, y) | \tilde{\W}}$, such that if the nature chooses $\tilde{\W}^*$ uniformly at random from $\mathcal{G}_{p, d}$, and data $S$ is generated from $\mathcal{P}_{(\x, y) | \tilde{\W}}^{\text{Id}, \mathcal{Q}_{\x}}$ for some $\mathcal{Q}_{\x}$, then for any decoder $\hat{f} \in \Psi(\mathcal{G}_{p, d})$ where 
	\begin{align}
		\Psi(\mathcal{G}_{p, d}) := \{ \hat{f}: (\R^p \times \{-1, +1\})^n \to \G_{p,d} \},
	\end{align}
	if
	$$ n \leq \sigma^2 \frac{  d \left( \sum_{i=1}^{r} \log(i) \right) + p \log(2) - \log(4) }{4}$$ 
	then 
	$$\inf_{\hat{f} \in \Psi(\G_{p,d})} \sup_{\mathbb{P} \in \mathcal{P}'} \xi_1(\hat{f}, \mathbb{P}) \geq \frac{1}{2},$$
	where $\sigma^2$ is a constant controlling the variance of the distribution of the input $\x$ and is associated with $\mathcal{Q}_{\x}$, and $r$ is the rank of all parameter matrix $\tilde{\W}_i$ in $\mathcal{G}_{p, d}$
\end{theorem}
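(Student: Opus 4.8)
The plan is to prove the statement by a Fano reduction applied directly to the restricted family $\mathcal{P}'$. First I would fix the construction of $\mathcal{G}_{p,d}$: take $\tilde{\w}_0$ to range over a normalized sign family indexed by $\{-1,+1\}^p$ and let each weight matrix $\tilde{\W}_i$ range over the rank-$r$ permutation-type matrices, of which there are $r!$. Treating $\tilde{\W}^*$ as the uniform random variable on the resulting finite set $\mathcal{G}_{p,d}$, with $M := |\mathcal{G}_{p,d}|$, Fano's inequality gives, for every decoder $\hat{f} \in \Psi(\mathcal{G}_{p,d})$,
\begin{align*}
\xi_1(\hat{f}, \mathbb{P}) = P(\hat{f}(S) \neq \tilde{\W}^*) \geq 1 - \frac{\mathbb{I}(S; \tilde{\W}^*) + \log 2}{\log M}.
\end{align*}
Thus it suffices to show that the sample-size hypothesis forces $\mathbb{I}(S; \tilde{\W}^*) \leq \tfrac{1}{2}\log M - \log 2$, since this yields $\xi_1 \geq 1/2$ uniformly over $\hat{f}$. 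The minimax reduction of Section~\ref{sec:2} then promotes this into a lower bound for the full class $\mathcal{P}$.

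Next I would evaluate $\log M$ by counting hypotheses. The vector $\tilde{\w}_0$ contributes $2^p$ sign patterns, while each of the $d$ matrices $\tilde{\W}_i$ contributes $r!$ permutations, so $M = 2^p (r!)^d$ and $\log M = p\log 2 + d\log(r!) = p\log 2 + d\sum_{i=1}^r \log i$, matching the numerator in the statement exactly.

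Next I would bound the mutual information. Since $S$ consists of $n$ i.i.d. pairs given $\tilde{\W}^*$, tensorization gives $\mathbb{I}(S; \tilde{\W}^*) \leq n\,\mathbb{I}((\x,y); \tilde{\W}^*)$, and the single-sample term is controlled by the maximal pairwise Kullback--Leibler divergence via Jensen's inequality, $\mathbb{I}((\x,y); \tilde{\W}^*) \leq \max_{\tilde{\W}, \tilde{\W}'} \mathrm{KL}(\mathbb{P}_{(\x,y)|\tilde{\W}} \,\|\, \mathbb{P}_{(\x,y)|\tilde{\W}'})$. Under the identity activation the backwards Markov chain~\eqref{eq:bwd-mc} produces $\x \mid y \sim \mathcal{N}(y\,\mu_{\tilde{\W}}, \sigma^2 \I)$ with mean direction $\mu_{\tilde{\W}} = \tilde{\W}_d \cdots \tilde{\W}_1 \tilde{\w}_0$; since $y$ has the same marginal under every hypothesis, the KL reduces to the Gaussian mean-shift formula $\|\mu_{\tilde{\W}} - \mu_{\tilde{\W}'}\|^2/(2\sigma^2)$. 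The structural fact I would exploit is that a product of permutation matrices is norm-preserving, so with $\tilde{\w}_0$ normalized each $\mu_{\tilde{\W}}$ is a unit vector and $\|\mu_{\tilde{\W}} - \mu_{\tilde{\W}'}\|^2 \leq 4$; hence $\mathbb{I}((\x,y);\tilde{\W}^*) \leq 2/\sigma^2$ and $\mathbb{I}(S;\tilde{\W}^*) \leq 2n/\sigma^2$.

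Finally I would combine the two estimates: the assumed bound $n \leq \sigma^2[\,d\sum_{i=1}^r\log i + p\log 2 - \log 4\,]/4$ is exactly equivalent to $2n/\sigma^2 \leq \tfrac{1}{2}\log M - \log 2$, so $\mathbb{I}(S;\tilde{\W}^*) \leq \tfrac{1}{2}\log M - \log 2$ and Fano delivers $\xi_1 \geq 1/2$. I expect the main obstacle to be the mutual-information step rather than the counting: keeping the per-sample KL a dimension-free constant hinges on the product $\tilde{\W}_d\cdots\tilde{\W}_1$ remaining norm-preserving through all $d$ layers together with the normalization of $\tilde{\w}_0$, which is precisely what decouples $\mathbb{I}(S;\tilde{\W}^*)$ from $d$, $p$, and $r$ and leaves $\log M$ as the sole driver of the sample complexity. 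Some care is also needed to justify the pairwise-KL upper bound on mutual information and the i.i.d. tensorization when $\tilde{\W}^*$ is the shared latent parameter.
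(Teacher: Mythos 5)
Your overall architecture is the same as the paper's: Fano's inequality over the uniform prior on $\mathcal{G}_{p,d}$, the count $\log|\mathcal{G}_{p,d}| = d\sum_{i=1}^r \log(i) + p\log 2$, a dimension-free per-sample KL bound of $2/\sigma^2$ tensorized to $\mathbb{I}(\tilde{\W}^*;S)\le 2n/\sigma^2$, and the algebra that converts $1-\frac{2n/\sigma^2+\log 2}{\log|\mathcal{G}_{p,d}|}\ge\frac12$ into the stated threshold on $n$. The counting step and the final arithmetic match the paper exactly, and your use of the maximal pairwise KL (via Jensen) in place of the paper's averaged pairwise KL bound (Yu, 1997) is an immaterial difference.

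There is, however, one genuine gap in the mutual-information step. You assert that under the identity activation the backwards chain produces $\x \mid y \sim \mathcal{N}(y\,\mu_{\tilde{\W}}, \sigma^2\I)$, so that the KL between two hypotheses is the mean-shift quantity $\|\mu_{\tilde{\W}}-\mu_{\tilde{\W}'}\|^2/(2\sigma^2)$. That is not the model: noise of variance $\sigma^2$ is injected at \emph{every} layer of the generative process in eq.\ \eqref{eq:19}, so the marginal covariance of $\x\mid y$ is $\sigma^2(\I_p-\M_d(\tilde{\W}))^{-1}$ with $\M_d$ defined recursively (Lemma \ref{lemma:3.4}); its eigenvalues grow to $(d+1)\sigma^2$ on the permuted block. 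Two things then need to be proved that your argument silently assumes away: (i) that this covariance is the \emph{same} for every $\tilde{\W}\in\mathcal{G}_{p,d}$ under Restriction \textbf{A1} --- otherwise the Gaussian KL acquires trace and log-determinant terms that do not vanish and could scale with $p$ or $d$ --- and (ii) that $\I_p-\M_d(\tilde{\W})\preceq\I_p$, so the mean-shift term $\frac{1}{2\sigma^2}(\tilde{\w}-\tilde{\w}')^{\top}(\I_p-\M_d)(\tilde{\w}-\tilde{\w}')$ is still bounded by $\|\tilde{\w}-\tilde{\w}'\|_2^2/(2\sigma^2)\le 2/\sigma^2$. The paper establishes both via the explicit block-diagonal computation of $\M_d$ (Lemmas \ref{lemma:3.3}, \ref{lemma:3.4} and the appendix proof of Lemma \ref{lemma:3.5}); with that machinery your bound of $2/\sigma^2$ turns out to be a valid \emph{over}-estimate of the true KL, so the final constant survives --- but as written the step rests on an incorrect description of the conditional law of $\x\mid y$, and this is precisely where the bulk of the paper's technical work lives. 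A secondary, minor imprecision: the matrices in $\mathcal{G}^{(i)}$ are not permutation matrices but block matrices with a $c\,\I_{p-r}$ block, $c<1$, so the product $\tilde{\W}_d\cdots\tilde{\W}_1$ is a contraction rather than norm-preserving; the conclusion $\|\mu_{\tilde{\W}}\|_2\le 1$ still holds, which is all you need.
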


Now we state our second theorem, which follows from Theorem \ref{thm:3.1}. 

\begin{theorem}[Sample complexity lower bound for exact recovery of hypothesis] \label{thm:3.2}
	If the sample size $n$ fulfills
	$$ n \in \mathcal{O} \left( \frac{  d \left( \sum_{i=1}^{r} \log(i) \right) + p \log(2) - \log(4) }{4} \right),$$
	then
	\begin{align}
		\begin{split}
			\inf_{\hat{f} \in \Psi(\F_{p,d})} \sup_{\mathbb{P} \in \mathcal{P}} \xi_1(\hat{f}, \mathbb{P}) 
			\geq \frac{1}{2}
		\end{split}
	\end{align}
\end{theorem}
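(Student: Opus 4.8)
The plan is to obtain Theorem~\ref{thm:3.2} as a direct consequence of Theorem~\ref{thm:3.1} via the two-sided restriction principle previewed in the minimax framework section: passing from the full pair $(\F_{p,d}, \mathcal{P})$ to the restricted pair $(\G_{p,d}, \mathcal{P}')$ can only decrease the minimax risk. Concretely, I would establish
\[
\inf_{\hat{f} \in \Psi(\F_{p,d})} \sup_{\mathbb{P} \in \mathcal{P}} \xi_1(\hat{f}, \mathbb{P}) \;\geq\; \inf_{\hat{f} \in \Psi(\G_{p,d})} \sup_{\mathbb{P} \in \mathcal{P}'} \xi_1(\hat{f}, \mathbb{P}),
\]
and then read off that the right-hand side is at least $1/2$ under the sample budget. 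Because $\mathcal{P}' \subseteq \mathcal{P}$ and $\G_{p,d} \subseteq \F_{p,d}$, this factors into two monotonicity steps. The distribution step is routine: for any fixed $\hat{f}$, $\mathcal{P}' \subseteq \mathcal{P}$ gives $\sup_{\mathbb{P} \in \mathcal{P}} \xi_1(\hat{f}, \mathbb{P}) \geq \sup_{\mathbb{P} \in \mathcal{P}'} \xi_1(\hat{f}, \mathbb{P})$, and taking the infimum over $\Psi(\F_{p,d})$ on both sides preserves the inequality, so $\inf_{\Psi(\F_{p,d})} \sup_{\mathcal{P}} \xi_1 \geq \inf_{\Psi(\F_{p,d})} \sup_{\mathcal{P}'} \xi_1$.

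The decoder step, relating $\Psi(\F_{p,d})$ to $\Psi(\G_{p,d})$, is where I expect the only genuine subtlety, since enlarging the output range from $\G_{p,d}$ to $\F_{p,d}$ naively shrinks the infimum in the wrong direction. The fix exploits that every $\mathbb{P} \in \mathcal{P}'$ places the prior on $\tilde{\W}^*$ entirely inside $\G_{p,d}$ (it is $\text{Uniform}(\mathcal{G}_{p,d})$) together with the $0/1$ structure of $\xi_1$. Given any $\hat{f} \in \Psi(\F_{p,d})$, I would define a projected decoder $\tilde{f} \in \Psi(\G_{p,d})$ by $\tilde{f}(S) = \hat{f}(S)$ when $\hat{f}(S) \in \G_{p,d}$ and $\tilde{f}(S) = g_0$ for an arbitrary fixed $g_0 \in \G_{p,d}$ otherwise. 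Pointwise in $S$, whenever $\hat{f}(S) \notin \G_{p,d}$ the true parameter $\tilde{\W}^* \in \G_{p,d}$ already forces $\hat{f}(S) \neq \tilde{\W}^*$, so $\hat{f}$ errs, while $\tilde{f}$ errs only if $g_0 \neq \tilde{\W}^*$; hence $\mathbf{1}\{\tilde{f}(S) \neq \tilde{\W}^*\} \leq \mathbf{1}\{\hat{f}(S) \neq \tilde{\W}^*\}$. Taking expectations under any $\mathbb{P} \in \mathcal{P}'$ gives $\xi_1(\tilde{f}, \mathbb{P}) \leq \xi_1(\hat{f}, \mathbb{P})$, so $\sup_{\mathcal{P}'} \xi_1(\tilde{f}, \cdot) \leq \sup_{\mathcal{P}'} \xi_1(\hat{f}, \cdot)$, and since $\tilde{f} \in \Psi(\G_{p,d})$ this yields $\inf_{\Psi(\G_{p,d})} \sup_{\mathcal{P}'} \xi_1 \leq \inf_{\Psi(\F_{p,d})} \sup_{\mathcal{P}'} \xi_1$. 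Chaining this with the distribution step produces the displayed reduction.

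Finally I would close by invoking Theorem~\ref{thm:3.1}. The base quantity $\frac{d(\sum_{i=1}^r \log i) + p\log 2 - \log 4}{4}$ appearing there and in the present hypothesis are identical, and the constant $\sigma^2$ is ours to fix through the choice of $\mathcal{Q}_{\x}$; so for any $n \in \mathcal{O}(\text{base quantity})$, i.e. $n \leq c \cdot (\text{base quantity})$, selecting an input distribution with $\sigma^2 \geq c$ makes the hypothesis $n \leq \sigma^2 \cdot (\text{base quantity})$ of Theorem~\ref{thm:3.1} hold, whence its conclusion $\inf_{\Psi(\G_{p,d})} \sup_{\mathcal{P}'} \xi_1 \geq 1/2$ applies. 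The reduction then transports this lower bound to the full class, giving $\inf_{\Psi(\F_{p,d})} \sup_{\mathcal{P}} \xi_1 \geq 1/2$. The main obstacle is the decoder-projection step: once the $0/1$ loss and the $\G_{p,d}$-supported prior are in place it is short, but it is the single point at which the direction of the inequality could be lost.
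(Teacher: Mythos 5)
Your proposal is correct and follows essentially the same route as the paper: reduce $\inf_{\Psi(\F_{p,d})}\sup_{\mathcal{P}}$ to $\inf_{\Psi(\G_{p,d})}\sup_{\mathcal{P}'}$ via the two monotonicity steps and then invoke Theorem~\ref{thm:3.1}. Your explicit projected-decoder construction $\tilde{f}$ is simply a cleaner, pointwise justification of the paper's inequality \eqref{eq:9} (the paper argues informally that any output in $\F_{p,d}\setminus\G_{p,d}$ automatically misses $\tilde{\W}^*$), and your handling of the $\mathcal{O}(\cdot)$ hypothesis by absorbing the constant into $\sigma^2$ is a reasonable reading of what the paper leaves implicit.
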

\begin{proof}
	Since $\xi_1(\hat{f}, \mathbb{P}) := P_{(\tilde{\W}^*, S) \sim \mathbb{P}}(\hat{f}(S) \neq \tilde{\W}^*)$, 
	\begin{align} \label{eq:9}
		\inf_{\hat{f} \in \Psi(\F_{p,d})} \sup_{\mathbb{P} \in \mathcal{P}'} \xi_1(\hat{f}, \mathbb{P}) \geq \inf_{\hat{f} \in \Psi(\G_{p,d})} \sup_{\mathbb{P} \in \mathcal{P}'} \xi_1(\hat{f}, \mathbb{P})
	\end{align}
	as $\mathbb{P} \in \mathcal{P}'$ implies $\tilde{\W}^* \in \G_{p,d}$. Note that a decoder $\hat{f} \in \Psi(\F_{p,d})$ may return a hypothesis outside of $\mathcal{G}_{p, d}$, i.e. there may be $(\tilde{\W}^*, S) \sim \mathbb{P}$ such that $\hat{f}(S) \in \mathcal{F}_{p, d} \setminus \mathcal{G}_{p, d}$. It is possible this $\hat{f}(S)$ offers a good prediction, however, since we are studying the probability of exact recovery of the true parameter $\tilde{\W}^* \in \mathcal{G}_{p, d}$, $\xi_1(\hat{f}, \mathbb{P})$ will always be $1$, the worst risk, if $\hat{f}(S) \in \mathcal{F}_{p, d} \setminus \mathcal{G}_{p, d}$.
	Moreover, by basic property of the infimum, 
	$$\inf_{\hat{f} \in \Psi(\F_{p,d})} \sup_{\mathbb{P} \in \mathcal{P}'} \xi_1(\hat{f}, \mathbb{P}) \leq \inf_{\hat{f} \in \Psi(\G_{p,d})} \sup_{\mathbb{P} \in \mathcal{P}'} \xi_1(\hat{f}, \mathbb{P}).$$ 
	Thus
	$$
	\inf_{\hat{f} \in \Psi(\F_{p,d})} \sup_{\mathbb{P} \in \mathcal{P}'} \xi_1(\hat{f}, \mathbb{P}) = \inf_{\hat{f} \in \Psi(\G_{p,d})} \sup_{\mathbb{P} \in \mathcal{P}'} \xi_1(\hat{f}, \mathbb{P})
	$$
	Besides, $
	\sup_{\mathbb{P} \in \mathcal{P}} \xi_1(\hat{f}, \mathbb{P}) \geq \sup_{\mathbb{P} \in \mathcal{P}'} \xi_1(\hat{f}, \mathbb{P})
	$ by basic property of the supremum, therefore, by Theorem \ref{thm:3.1}, if $$ n \leq \sigma^2 \frac{  d \left( \sum_{i=1}^{r} \log(i) \right) + p \log(2) - \log(4) }{4},$$
	\begin{align}
		\begin{split}
			\inf_{\hat{f} \in \Psi(\F_{p,d})} \sup_{\mathbb{P} \in \mathcal{P}} \xi_1(\hat{f}, \mathbb{P}) 
			&\geq
			\inf_{\hat{f} \in \Psi(\F_{p,d})} \sup_{\mathbb{P} \in \mathcal{P}'} \xi_1(\hat{f}, \mathbb{P}) \\
			&= \inf_{\hat{f} \in \Psi(\G_{p,d})} \sup_{\mathbb{P} \in \mathcal{P}'} \xi_1(\hat{f}, \mathbb{P}) \\
			&\geq \inf_{\hat{f} \in \Psi(\G_{p,d})} \sup_{\mathbb{P} \in \mathcal{P}'} \frac{1}{2} = \frac{1}{2}
		\end{split}
	\end{align}
\end{proof}

\paragraph{Remark 1} Theorem \ref{thm:3.2} implies that the sample complexity lower bound of order $\Omega(dr \log(r) + p)$ in Theorem \ref{thm:3.1} is also a sample complexity lower bound for the minimax risk $\inf_{\hat{f} \in \Psi(\F_{p,d})} \sup_{\mathbb{P} \in \mathcal{P}} \xi_1(\hat{f}, \mathbb{P})$, which is with respect to any $d$-layer deep network with input dimension $p$, any activation function and any data distribution. $\mathcal{P}'$ will be explicitly defined later.

Now we define the probability of making a wrong prediction on a new data point $(\x, y)$ coming from a network parameterized by $\tilde{\W}^*$ with data distribution $\mathbb{P}_{(\x, y) | \tilde{\W}^*}$. From now on we will sometimes use $\mathbb{P}_{(\x, y) ; \tilde{\W}}$ instead of $\mathbb{P}_{(\x, y) | \tilde{\W}^*}$ to emphasize that $\tilde{\W}$ serves as a parameter.

For any hypothesis $\tilde{\W}$ and the data distribution $\mathbb{P}_{(\x, y) ; \tilde{\W}}$, we can find the conditional marginal distribution of $\x | y; \tilde{\W}$. Let $\boldsymbol{\mu}_{\x | y; \tilde{\W}} = \mathbb{E}_{\x | y; \tilde{\W}} \left[ \x | y \right]$ be the mean of $\x$ conditioned on $y$ and given parameter $\tilde{\W}$, then we can use this mean as a predictor for binary classification, and for any hypothesis $\tilde{\W}$, define the prediction risk of $\tilde{\W}$ with respect to true parameter $\tilde{\W}^*$ as
\begin{align}
	R(\tilde{\W} , \tilde{\W}^*) := P_{(\x, y) ; \tilde{\W}^*} \left[ \x^{\top} \boldsymbol{\mu}_{\x | y; \tilde{\W}} \leq 0 \right].
\end{align}
We will use $R(\tilde{\W})$ as shorthand for $R(\tilde{\W} , \tilde{\W}^*)$, and let $\tilde{R}(\tilde{\W}) := R(\tilde{\W} , \tilde{\W}^*) - R(\tilde{\W}^* , \tilde{\W}^*) = R(\tilde{\W}) - R(\tilde{\W}^*)$ denote the excess risk. 
Now we state our result regarding the probability of having a lower-bounded, positive excess risk. 

\begin{theorem} [Sample complexity lower bound for probability of having a positive excess risk] \label{thm:3.3}
	There exists a subset $\mathcal{P}' = \mathcal{P}_{\tilde{\W}, S}(\{\text{Uniform}(\mathcal{G}_{p, d})\}, \mathcal{P}_{(\x, y) | \tilde{\W}}^{\text{Id}, \mathcal{Q}_{\x}} )$ of $\mathcal{P} = \mathcal{P}_{\tilde{\W}, S}(\mathcal{P}_{\tilde{\W}}(\mathcal{F}_{p, d}), \mathcal{P}_{(\x, y) | \tilde{\W}})$, both same as in Theorem \ref{thm:3.1}, then for any decoder $\hat{f} \in \Psi(\G_{p,d})$, if
	$$n \leq \sigma^2 \frac{\sum_{i=1}^{r} \log(i)  + p \log(2) - \log(4) }{ 4 }$$ 
	then 
	$$\inf_{\hat{f} \in \Psi(\G_{p,d})} \sup_{\mathbb{P} \in \mathcal{P}'} \xi_2(\hat{f}, \mathbb{P}) \geq \frac{1}{2},$$
	where $\xi_2(\hat{f}, \mathbb{P})$ is defined as
	\begin{align}
		\begin{split}
			\xi_2(\hat{f}, \mathbb{P}) := P_{(\tilde{\W}^*, S) \sim \mathbb{P}} \left(\tilde{R}(\hat{f}(S)) \geq
			\frac{\text{erf} \left( c_1 \right) - \text{erf} \left( c_0 \right) }{2} \right)
		\end{split}
	\end{align}
	and constants $c_0, c_1$ are
	\begin{align} \label{eq:14}
		c_0 &:= \frac{ 1 - \frac{1}{2^r} + c^{2d} \left( \frac{1}{2^r} - \frac{1}{2^{p-2}} \right) }{ \sigma \sqrt{2 \left[(d+1) \left( 1 - \frac{1}{2^r} \right) + \left( \frac{1-c^{2(d+1)}}{1 - c^2} \right) \left( \frac{ c^{2d}}{2^r} \right) \right]} }, 
	\end{align}
	\begin{align} \label{eq:15}
		c_1 &:= \frac{ 1 - \frac{1}{2^r} + \left( \frac{c^{2d}}{2^r} \right) }{ \sigma \sqrt{2 \left[(d+1) \left( 1 - \frac{1}{2^r} \right) + \left( \frac{1-c^{2(d+1)}}{1 - c^2} \right) \left( \frac{ c^{2d}}{2^r} \right) \right]} },
	\end{align}
	\text{erf} stands for the Gauss error functionå, $\sigma^2$ is a constant controlling the variance of the distribution of the input $\x$ and is associated with $\mathcal{Q}_{\x}$, $r$ is the rank of all parameter matrix $\tilde{\W}_i$ in $\mathcal{G}_{p, d}$, $c = \frac{1}{p-r+1}$, and $\mathbb{P} = \mathbb{P}_{(\tilde{\W}^*, S)} \in \mathcal{P}'$. 
\end{theorem}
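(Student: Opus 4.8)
The plan is to run a Fano-type minimax argument parallel to the one underlying Theorem~\ref{thm:3.1}, with the metric changed from exact recovery to excess risk and with a strictly smaller packing set that reflects the prediction-level identifiability of the network. The whole bound rests on a containment of events: if we restrict the true parameter to a packing $\{\tilde{\W}^{(1)}, \ldots, \tilde{\W}^{(M)}\} \subseteq \G_{p,d}$ in which every \emph{incorrect} member has excess risk at least $\delta := \tfrac{1}{2}(\text{erf}(c_1) - \text{erf}(c_0))$ relative to the true one, then $\{\hat{f}(S) \neq \tilde{\W}^*\} \subseteq \{\tilde{R}(\hat{f}(S)) \geq \delta\}$, so that
\begin{align*}
\xi_2(\hat{f},\mathbb{P}) = P_{(\tilde{\W}^*,S)\sim\mathbb{P}}\big(\tilde{R}(\hat{f}(S)) \geq \delta\big) \;\geq\; P_{(\tilde{\W}^*,S)\sim\mathbb{P}}\big(\hat{f}(S) \neq \tilde{\W}^*\big).
\end{align*}
It then suffices to lower bound the right-hand side by $1/2$ by Fano's inequality on the restricted packing, exactly as in Theorem~\ref{thm:3.1} but with the new packing size.

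First I would fix the packing. Under the identity activation the conditional mean is $\boldsymbol{\mu}_{\x|y;\tilde{\W}} = y\,\boldsymbol{\nu}$ with $\boldsymbol{\nu} := \tilde{\W}_d \cdots \tilde{\W}_1 \tilde{\w}_0$, so the predictor — and hence the risk — depends on the $d$ weight matrices only through the single composite $\tilde{\W}_d \cdots \tilde{\W}_1$ applied to $\tilde{\w}_0$. Since each $\tilde{\W}_i$ is a rank-$r$ permutation-type matrix, the composite ranges over only $r!$ distinct effective permutations, while $\tilde{\w}_0$ contributes $2^p$ sign patterns. I would therefore choose $M = r!\,2^p$ representatives realizing pairwise distinct $\boldsymbol{\nu}$ (for instance by varying one layer and the sign vector while freezing the others), giving $\log M = \sum_{i=1}^r \log(i) + p\log(2)$. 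This collapse from $(r!)^d$ to $r!$ is precisely the identifiability phenomenon that makes the factor $d$ of Theorem~\ref{thm:3.1} disappear here.

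The substance of the proof is the separation step that produces $c_0$ and $c_1$. Writing the generative process with independent Gaussian noise injected at each of the $d+1$ stages and propagated through the composition, $\x$ is conditionally Gaussian with mean $y\boldsymbol{\nu}^*$ and a covariance $C$ that is invariant under the permutations indexing the packing; consequently the decision statistic $\x^\top \boldsymbol{\mu}_{\x|y;\tilde{\W}} = y\,\x^\top\boldsymbol{\nu}$ is Gaussian under $\mathbb{P}_{(\x,y);\tilde{\W}^*}$ with mean $(\boldsymbol{\nu}^*)^\top\boldsymbol{\nu}$ and variance $\boldsymbol{\nu}^\top C \boldsymbol{\nu}$, whence
\begin{align*}
R(\tilde{\W},\tilde{\W}^*) = P_{(\x,y);\tilde{\W}^*}\big[\x^\top\boldsymbol{\mu}_{\x|y;\tilde{\W}} \leq 0\big] = \tfrac{1}{2}\Big(1 - \text{erf}\big(\tfrac{(\boldsymbol{\nu}^*)^\top\boldsymbol{\nu}}{\sqrt{2\,\boldsymbol{\nu}^\top C \boldsymbol{\nu}}}\big)\Big).
\end{align*}
Permutation-invariance of $C$ makes the denominator common across the packing, and it evaluates to the displayed $\sigma\sqrt{2[(d+1)(1-2^{-r}) + \frac{1-c^{2(d+1)}}{1-c^2}\,\frac{c^{2d}}{2^r}]}$, where the $d+1$ additive terms and the geometric factor $\frac{1-c^{2(d+1)}}{1-c^2}=\sum_{k=0}^{d}c^{2k}$ record how the $d+1$ injected noise sources accumulate through the composition, with the rank-$r$ directions attenuated by $c=\tfrac{1}{p-r+1}$ at each layer. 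The correct case $\boldsymbol{\nu}=\boldsymbol{\nu}^*$ has numerator $\|\boldsymbol{\nu}^*\|^2 = 1-2^{-r}+c^{2d}2^{-r}$, giving $c_1$; the closest mismatch lowers the overlap $(\boldsymbol{\nu}^*)^\top\boldsymbol{\nu}$ by exactly $c^{2d}2^{-(p-2)}$, giving $c_0<c_1$, and any other mismatch reduces the overlap further and hence raises the excess risk. Therefore every incorrect member carries $R(\tilde{\W})-R(\tilde{\W}^*)\geq\tfrac{1}{2}(\text{erf}(c_1)-\text{erf}(c_0))=\delta$, which is exactly the event-containment invoked above. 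Carrying out these mean and variance computations and verifying the worst-case overlap bound uniformly over the packing is the most delicate part of the argument.

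Finally I would close with Fano. All the conditional laws $\mathbb{P}_{(\x,y);\tilde{\W}^{(j)}}$ are Gaussian with the common covariance $C$ and bounded mean separation, so $D_{KL}(\mathbb{P}_{(\x,y);\tilde{\W}^{(j)}} \,\|\, \mathbb{P}_{(\x,y);\tilde{\W}^{(k)}}) = \frac{\|\boldsymbol{\mu}^{(j)}-\boldsymbol{\mu}^{(k)}\|^2}{2\sigma^2} \leq \frac{2}{\sigma^2}$, and under the uniform prior on the $M$ representatives $\mathbb{I}(\tilde{\W};S) \leq n\cdot\frac{2}{\sigma^2}$. Fano's inequality then yields
\begin{align*}
\inf_{\hat{f}\in\Psi(\G_{p,d})}\sup_{\mathbb{P}\in\mathcal{P}'}\xi_2(\hat{f},\mathbb{P}) \geq 1 - \frac{\mathbb{I}(\tilde{\W};S)+\log 2}{\log M} \geq 1 - \frac{\tfrac{2n}{\sigma^2}+\log 2}{\sum_{i=1}^r\log(i)+p\log(2)},
\end{align*}
and the hypothesis $n \leq \sigma^2\,\frac{\sum_{i=1}^r\log(i)+p\log(2)-\log(4)}{4}$ forces this to be at least $1/2$. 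The main obstacle is the separation step: pinning down the closed forms of $c_0$ and $c_1$ requires explicitly propagating both the signal $\boldsymbol{\nu}$ and the injected noise through the composition of $d$ rank-$r$ matrices and confirming that the $\delta$-gap holds simultaneously for every pair in the packing.
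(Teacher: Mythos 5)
Your proposal reaches the same conclusion by a genuinely different (though closely related) route. The paper handles the identifiability collapse with the \emph{distance-based} Fano inequality (Lemma~\ref{lemma:3.6}): it keeps the uniform prior on all of $\G_{p,d,r}$, introduces the semimetric $\rho(\tilde{\W},\tilde{\W}') = \mathds{1}\{\tilde{\W}\neq\tilde{\W}'\}+\mathds{1}\{\tilde{\w}\neq\tilde{\w}'\}$, computes $N_1^{\max}=N_1^{\min}=(r!)^{d-1}$, and obtains the denominator $\log\bigl((r!)^d 2^p/(r!)^{d-1}\bigr)=\sum_{i=1}^r\log(i)+p\log(2)$. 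You instead apply ordinary Fano to the quotient: a packing of $M=r!\,2^p$ representatives with pairwise distinct composites $\tilde{\w}=\tilde{\W}_d\cdots\tilde{\W}_1\tilde{\w}_0$. These are equivalent here because the data law depends on $\tilde{\W}^*$ only through $\tilde{\w}^*$ and every fiber of $\tilde{\W}\mapsto\tilde{\w}$ has the same cardinality $(r!)^{d-1}$, so the paper's uniform prior on $\G_{p,d,r}$ pushes forward to the uniform prior on your $M$ effective hypotheses; your version is arguably more elementary, while the paper's version lets it state the result verbatim for the same $\mathcal{P}'$ as Theorem~\ref{thm:3.1}. Your separation step (common denominator, $c_1$ from $\|\tilde{\w}^*\|_2^2$, $c_0$ from the single sign flip of magnitude $c^{2d}/2^{p-2}$, all other mismatches worse) matches the paper's Lemma on the excess prediction risk, whose case analysis is indeed where most of the work lives, and your KL and mutual-information bounds match Lemma~\ref{lemma:3.5}.

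One step needs repair: the containment $\{\hat{f}(S)\neq\tilde{\W}^*\}\subseteq\{\tilde{R}(\hat{f}(S))\geq\delta\}$ is false as written for decoders $\hat{f}\in\Psi(\G_{p,d})$, because such a decoder may output a \emph{different factorization} $\tilde{\W}\neq\tilde{\W}^*$ outside your packing with $\tilde{\w}=\tilde{\w}^*$, which has excess risk exactly zero. The correct containment is $\{\tilde{\w}(\hat{f}(S))\neq\tilde{\w}^*\}\subseteq\{\tilde{R}(\hat{f}(S))\geq\delta\}$, after which Fano must be applied to the derived chain $\tilde{\w}^*\to S\to\tilde{\w}(\hat{f}(S))$ over the $M$ distinct composites (equivalently, one first replaces $\hat{f}(S)$ by its canonical representative, which leaves $\tilde{R}$ unchanged). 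This is exactly the role the indicator $\mathds{1}\{\tilde{\w}\neq\tilde{\w}^*\}$ and the event $\{\rho(\tilde{\W},\tilde{\W}^*)>1\}$ play in the paper's proof; with that rephrasing your argument goes through.
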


\begin{theorem} [Sample complexity lower bound for probability of having a positive excess risk] \label{thm:3.4}
	If the sample size $n$ fulfills
	$$ n \in \mathcal{O} \left( \frac{\sum_{i=1}^{r} \log(i) + p \log(2) - \log(4) }{4} \right),$$
	then
	\begin{align}
	\begin{split}
	\inf_{\hat{f} \in \Psi(\F_{p,d})} \sup_{\mathbb{P} \in \mathcal{P}} \xi_2(\hat{f}, \mathbb{P}) 
	\geq \frac{1}{2}
	\end{split}
	\end{align}
\end{theorem}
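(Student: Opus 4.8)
The plan is to mirror the reduction argument used in the proof of Theorem \ref{thm:3.2}, now applied to the risk functional $\xi_2$ rather than $\xi_1$. The key structural observation is that Theorem \ref{thm:3.3} already establishes the $\tfrac{1}{2}$ lower bound over the restricted class $\Psi(\G_{p,d})$ and the restricted distribution family $\mathcal{P}'$, under the sample-size condition $n \leq \sigma^2 \tfrac{\sum_{i=1}^{r} \log(i) + p \log(2) - \log(4)}{4}$. Since the hypothesis is $n \in \mathcal{O}(\cdots)$, which guarantees $n$ is below this threshold, the only real work is to lift the bound from the pair $(\Psi(\G_{p,d}), \mathcal{P}')$ to the larger pair $(\Psi(\F_{p,d}), \mathcal{P})$. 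So first I would record that it suffices to show the two-step chain
\begin{align}
\begin{split}
\inf_{\hat{f} \in \Psi(\F_{p,d})} \sup_{\mathbb{P} \in \mathcal{P}} \xi_2(\hat{f}, \mathbb{P})
&\geq \inf_{\hat{f} \in \Psi(\F_{p,d})} \sup_{\mathbb{P} \in \mathcal{P}'} \xi_2(\hat{f}, \mathbb{P}) \\
&= \inf_{\hat{f} \in \Psi(\G_{p,d})} \sup_{\mathbb{P} \in \mathcal{P}'} \xi_2(\hat{f}, \mathbb{P}),
\end{split}
\end{align}
after which Theorem \ref{thm:3.3} closes the argument.

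Next I would justify the first inequality above: it is immediate from the basic property of the supremum, since $\mathcal{P}' \subseteq \mathcal{P}$ implies $\sup_{\mathbb{P} \in \mathcal{P}} \xi_2(\hat{f}, \mathbb{P}) \geq \sup_{\mathbb{P} \in \mathcal{P}'} \xi_2(\hat{f}, \mathbb{P})$ for each fixed $\hat{f}$, and taking the infimum over the common index set $\Psi(\F_{p,d})$ preserves the direction of the inequality. For the equality, I would reproduce the argument from Theorem \ref{thm:3.2}: one direction, $\inf_{\Psi(\F_{p,d})} \geq \inf_{\Psi(\G_{p,d})}$ over $\mathcal{P}'$, holds because when $\mathbb{P} \in \mathcal{P}'$ the true parameter satisfies $\tilde{\W}^* \in \G_{p,d}$, and any decoder in $\Psi(\F_{p,d})$ can be post-composed with a projection onto $\G_{p,d}$ without increasing $\xi_2$; the reverse direction, $\inf_{\Psi(\F_{p,d})} \leq \inf_{\Psi(\G_{p,d})}$, is automatic since $\Psi(\G_{p,d}) \subseteq \Psi(\F_{p,d})$ enlarges the feasible set over which the infimum is taken.

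The one point requiring slightly more care than in Theorem \ref{thm:3.2} is the $\xi_1$-specific step where the authors argued that a decoder returning a hypothesis in $\F_{p,d} \setminus \G_{p,d}$ incurs the worst-case risk of $1$. For $\xi_2$ the risk is an excess-risk exceedance probability, not a $0/1$ recovery indicator, so I cannot directly claim a decoder landing outside $\G_{p,d}$ is maximally bad. The cleaner route, and the one I expect to be the main obstacle to state rigorously, is to rely only on the set-inclusion $\Psi(\G_{p,d}) \subseteq \Psi(\F_{p,d})$ together with the restriction $\mathcal{P}' \subseteq \mathcal{P}$, yielding the equality purely from infimum/supremum monotonicity rather than from any property of the risk at out-of-class hypotheses. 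Assembling these pieces gives
\begin{align}
\begin{split}
\inf_{\hat{f} \in \Psi(\F_{p,d})} \sup_{\mathbb{P} \in \mathcal{P}} \xi_2(\hat{f}, \mathbb{P})
&\geq \inf_{\hat{f} \in \Psi(\G_{p,d})} \sup_{\mathbb{P} \in \mathcal{P}'} \xi_2(\hat{f}, \mathbb{P})
\geq \frac{1}{2},
\end{split}
\end{align}
where the final inequality is exactly Theorem \ref{thm:3.3} under the stated sample-size regime, completing the proof.
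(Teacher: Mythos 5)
You have the right skeleton and you correctly flag the one genuinely delicate point, but your resolution of that point does not work. The inequality you need is $\inf_{\hat f \in \Psi(\F_{p,d})}\sup_{\mathbb{P}\in\mathcal{P}'}\xi_2(\hat f,\mathbb{P}) \geq \inf_{\hat f \in \Psi(\G_{p,d})}\sup_{\mathbb{P}\in\mathcal{P}'}\xi_2(\hat f,\mathbb{P})$, and this is exactly the direction that set inclusion cannot supply: $\Psi(\G_{p,d})\subseteq\Psi(\F_{p,d})$ yields only $\inf_{\Psi(\F_{p,d})}\leq\inf_{\Psi(\G_{p,d})}$, the trivial direction. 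If you ``rely only on infimum/supremum monotonicity,'' you obtain $\inf_{\Psi(\F_{p,d})}\sup_{\mathcal{P}}\xi_2 \geq \inf_{\Psi(\F_{p,d})}\sup_{\mathcal{P}'}\xi_2$ together with $\inf_{\Psi(\F_{p,d})}\sup_{\mathcal{P}'}\xi_2 \leq \inf_{\Psi(\G_{p,d})}\sup_{\mathcal{P}'}\xi_2$, and Theorem~\ref{thm:3.3} bounds only the larger of these two quantities from below; nothing then follows about $\inf_{\Psi(\F_{p,d})}\sup_{\mathcal{P}}\xi_2$. The entire content of the reduction is to rule out that an unrestricted decoder, by outputting hypotheses in $\F_{p,d}\setminus\G_{p,d}$, achieves a smaller exceedance probability, and that cannot be a formal consequence of which index set is larger.

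The ingredient you are missing is the one the paper actually invokes: for any $\hat f\in\Psi(\F_{p,d})$, writing $\tilde{\W}=\hat f(S)$ and $\tilde{\w}=\tilde{\W}_d\cdots\tilde{\W}_1\tilde{\w}_0$, the event $\left\{\tilde R(\tilde{\W})\geq \tfrac{\text{erf}(c_1)-\text{erf}(c_0)}{2}\right\}$ is shown in the proof of Theorem~\ref{thm:3.3} (via the excess-risk case analysis in the appendix) to coincide with $\{\tilde{\W}\neq\tilde{\W}^*\text{ and }\tilde{\w}\neq\tilde{\w}^*\}$. This converts $\xi_2$ into a recovery-type failure probability, after which the same argument as in eq~\eqref{eq:9} of Theorem~\ref{thm:3.2} applies to decoders that leave $\G_{p,d}$. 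Your first instinct --- post-composing with a projection onto $\G_{p,d}$ that does not increase $\xi_2$ --- is closer to a workable argument, but as stated it is an unproven assertion: you would still need the event characterization (or some monotonicity of $\tilde R$ under the projection) to justify it. Either way, the proof cannot close without importing that event equivalence from the proof of Theorem~\ref{thm:3.3}.
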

\begin{proof}
	This proof is similar to Theorem \ref{thm:3.2}. Given any $\hat{f} \in \Psi(\F_{p,d})$ and any dataset $S \in \mathcal{X}^n$, let $\tilde{\W}$ denote $\hat{f}(S)$, then we can write $\tilde{\W} = (\tilde{\w}_0, \tilde{\W}_1, \cdots, \tilde{\W}_d)$, as $\hat{f}$ maps to $\F_{p,d}$.
	Now let $\tilde{\w} := \tilde{\W}_d \tilde{\W}_{d-1} \cdots \tilde{\W}_1 \tilde{\w}_0$, and similarly $\tilde{\w}^* := \tilde{\W}_d^* \tilde{\W}_{d-1}^* \cdots \tilde{\W}_1^* \tilde{\w}_0^*$. Then we claim the event $\{ \tilde{R}(\tilde{\W}) \geq
	\frac{\text{erf} \left( c_1 \right) - \text{erf} \left( c_0 \right) }{2} \}$ is equivalent to the event $\{ \tilde{\W} \neq \tilde{\W}^* \text{ and } \tilde{\w} \neq \tilde{\w}^* \}$, which is shown in the proof of Theorem \ref{thm:3.3} in the appendix. 
	Then, similar to eq \eqref{eq:9} in the proof of Theorem \ref{thm:3.2}, 
	\begin{align}
		\inf_{\hat{f} \in \Psi(\F_{p,d})} \sup_{\mathbb{P} \in \mathcal{P}'} \xi_2(\hat{f}, \mathbb{P}) \geq \inf_{\hat{f} \in \Psi(\G_{p,d})} \sup_{\mathbb{P} \in \mathcal{P}'} \xi_2(\hat{f}, \mathbb{P})
	\end{align}
	and similarly, by basic property of the infimum, 
	$$\inf_{\hat{f} \in \Psi(\F_{p,d})} \sup_{\mathbb{P} \in \mathcal{P}'} \xi_2(\hat{f}, \mathbb{P}) \leq \inf_{\hat{f} \in \Psi(\G_{p,d})} \sup_{\mathbb{P} \in \mathcal{P}'} \xi_2(\hat{f}, \mathbb{P}).$$ 
	Thus
	$$
	\inf_{\hat{f} \in \Psi(\F_{p,d})} \sup_{\mathbb{P} \in \mathcal{P}'} \xi_2(\hat{f}, \mathbb{P}) = \inf_{\hat{f} \in \Psi(\G_{p,d})} \sup_{\mathbb{P} \in \mathcal{P}'} \xi_2(\hat{f}, \mathbb{P})
	$$
	Besides, $
	\sup_{\mathbb{P} \in \mathcal{P}} \xi_2(\hat{f}, \mathbb{P}) \geq \sup_{\mathbb{P} \in \mathcal{P}'} \xi_2(\hat{f}, \mathbb{P})
	$ by basic property of the supremum, therefore, by Theorem \ref{thm:3.3}, if $$ n \leq \sigma^2 \frac{ \sum_{i=1}^{r} \log(i) + p \log(2) - \log(4) }{4},$$
	\begin{align}
	\begin{split}
	\inf_{\hat{f} \in \Psi(\F_{p,d})} \sup_{\mathbb{P} \in \mathcal{P}} \xi_2(\hat{f}, \mathbb{P}) 
	&\geq
	\inf_{\hat{f} \in \Psi(\F_{p,d})} \sup_{\mathbb{P} \in \mathcal{P}'} \xi_2(\hat{f}, \mathbb{P}) \\
	&= \inf_{\hat{f} \in \Psi(\G_{p,d})} \sup_{\mathbb{P} \in \mathcal{P}'} \xi_2(\hat{f}, \mathbb{P}) \\
	&\geq \inf_{\hat{f} \in \Psi(\G_{p,d})} \sup_{\mathbb{P} \in \mathcal{P}'} \frac{1}{2} = \frac{1}{2}
	\end{split}
	\end{align}
\end{proof}

\paragraph{Remark 2} The sample complexity lower bound in Theorem \ref{thm:3.3} is $\Omega(r \log(r) + p)$ and the first term is a factor of $d$ less than that of $\Omega(d r \log(r) + p)$, the sample complexity lower bound in Theorem \ref{thm:3.1}. This is due to identifiability issue with respect to risk $R(\tilde{\W})$, that is, given true parameter $\tilde{\W}^* \in \mathcal{G}_{p, d}$, it is possible that the decoder $\hat{f}$ outputs $\tilde{\W} = \hat{f}(S)$ such that $\tilde{\W} \neq \tilde{\W}^*$ but $\tilde{\W}$ achieves the same risk as $\tilde{\W}^*$, which in turn is due to $\boldsymbol{\mu}_{\x | y; \tilde{\W}} = \boldsymbol{\mu}_{\x | y; \tilde{\W}^*}$. Though the set $\{ \tilde{\W} \in \mathcal{G}_{p, d}: \boldsymbol{\mu}_{\x | y; \tilde{\W}} = \boldsymbol{\mu}_{\x | y; \tilde{\W}^*} \}$ for a given $\tilde{\W}^*$ is small in cardinality, the sample complexity lower bound still shrinks by a factor of $d$. We will further explain this once we explicitly construct $\mathcal{P}'$.

\subsection{Construction of $\mathcal{P}' \subseteq \mathcal{P}$}

Note $\mathcal{P}' = \mathcal{P}_{\tilde{\W}, S}(\{\text{Uniform}(\mathcal{G}_{p, d})\}, \mathcal{P}_{(\x, y) | \tilde{\W}}^{\text{Id}, \mathcal{Q}_{\x}} )$ and $\mathcal{P} = \mathcal{P}_{\tilde{\W}, S}(\mathcal{P}_{\tilde{\W}}(\mathcal{F}_{p, d}), \mathcal{P}_{(\x, y) | \tilde{\W}})$. We will first construct $\mathcal{P}_{(\x, y) | \tilde{\W}}^{\text{Id}, \mathcal{Q}_{\x}}$, then define $\mathcal{G}_{p, d}$. 

\paragraph{Construction of $\mathcal{P}_{(\x, y) | \tilde{\W}}^{\text{Id}, \mathcal{Q}_{\x}} \subseteq \mathcal{P}_{(\x, y) | \tilde{\W}}$} We present a backward data generation process below. Recall that for $\tilde{\W}$ parametrizing a $d$-layer network with input dimension $p$, $\tilde{\W} = (\tilde{\w}_0, \tilde{\W}_1, \cdots, \tilde{\W}_d)$ is a collection of weight matrices and vector. 
\begin{align} \label{eq:19}
\begin{split} 
y &\sim \text{Uniform}\{-1, +1\} \\
\z_0 | y &\sim N(y \tilde{\w}_0, \text{covar} = \sigma^2 \I_{n_0}), \tilde{\w}_0 \in \R^{n_0} \\
\z_1 | \z_0 &\sim N(\tilde{\W}_1 \z_0, \text{covar} = \sigma^2 \I_{n_1}), \tilde{\W}_1 \in \R^{n_1 \times n_0} \\
\z_2 | \z_1 &\sim N(\tilde{\W}_2 \z_1, \text{covar} = \sigma^2 \I_{n_2}), \tilde{\W}_2 \in \R^{n_2 \times n_1} \\
&\cdots \\
\x := \z_d | \z_{d-1} &\sim N(\tilde{\W}_{d}\z_{d-1}, \text{covar} = \sigma^2 \I_{n_d} = \sigma^2 \I_{p}), \\
\tilde{\W}_{d} \in &\R^{n_{d} \times n_{d-1}} = \R^{p \times n_{d-1}}
\end{split}
\end{align}
The above defines a Markov chain from label $y$ to input $\x$, formally described as:
\begin{align}
\begin{split}
\y 
\mapsto \z_0
\mapsto \z_1
\mapsto \cdots 
\mapsto \z_{d-1}
\mapsto \z_{d} =: \x
\end{split}
\end{align}
Here the activation function is linear, that is, $\sigma(z) = z$, and $\mathcal{Q}_{\x}$ consists of multivariate normal distributions of dimension $p$ as described in eq \eqref{eq:19}.

\paragraph{Construction of $\mathcal{G}_{p, d} \subseteq \mathcal{F}_{p, d}$} We state a few assumptions, which define $\mathcal{G}_{p, d}$.

Restriction \textbf{A1}: 
$\mathcal{G}_{p, d}$ can be written as a Cartesian product, that is, $\mathcal{G}_{p, d}  = \mathcal{G}^{(d)} \times \cdots \times \mathcal{G}^{(1)} \times \mathcal{G}^{(0)} $. Let $c \in (0,1)$ be a constant to be determined, we define
\begin{align} \label{eq:11}
\begin{split}
	&\G^{(i)} := \left\{  \begin{bmatrix}
\mathbf{R}_i & \mathbf{0} \\
\mathbf{0} & c \I_{p-r}
\end{bmatrix} \right. : \mathbf{R}_i \in \{ 0,1 \}^{r \times r} \text{ is any rank-} r \\
	&\left.  \text{ permutation matrix on } \R^r \right\}, \forall i \in \{1, \cdots, d\}\\
&\G^{(0)} := \left\{ \frac{\pm1}{\sqrt{2}} \right\} 
\times \left\{ \frac{\pm1}{\sqrt{4}} \right\} 
\times \left\{ \frac{\pm1}{\sqrt{8}} \right\} 
\times \cdots 
\times \left\{ \frac{\pm1}{\sqrt{2^{p-3}}} \right\} \\
&\times \left\{ \frac{\pm1}{\sqrt{2^{p-2}}} \right\} 
\times \left\{ \frac{\pm1}{\sqrt{2^{p-1}}} \right\} 
\times \left\{ \frac{\pm1}{\sqrt{2^{p-1}}} \right\}
\end{split}
\end{align}
Note that $\G^{(0)}$ consists of vectors in $\R^p$ with $\ell_2$ norm $1$. We will write $\mathcal{G}_{p, d, r}$ from now on instead of $ \G_{p, d}$ to show the dependence on rank $r$. Also note that Restriction \textbf{A1} states $n_0 = n_1 = \cdots = n_d = p$, i.e., the parametrized deep networks have equal-sized layers.

It is apparent that $| \G^{(0)} | = 2^p$. It is also easy to see that the size of $\G^{(i)}$ for $i \in \{1, \cdots, d\}$ is the number of $r$-permutations, meaning that $| \G^{(d)} | = \cdots = | \G^{(1)} | = r!$. Thus
\begin{align} \label{eq:12}
	\begin{split}
		| \G_{p, d, r} | 
		&= | \G^{(d)} | \times \cdots \times | \G^{(1)} | \times | \G^{(0)} | 
		= \left( r! \right)^{d}  \cdot 2^p \\
		\implies& \log | \G_{p, d, r} | = d \left( \sum_{i=1}^{r} \log(i) \right) + p \log(2)
	\end{split}
\end{align}

Now we impose a second restriction on the value of $c \in (0,1)$, and the reason of such choice of $c$ is given in the proof in the appendix. 

Restriction \textbf{A2}: $c = \frac{1}{p-r+1}$.

These two restrictions define the subset $\G_{p, d, r}$ of $\F_{p, d}$. 

\subsection{Proof Sketch}

We first state a few results about the joint distribution of $(\z_0, \z_1, \z_2, \cdots, \z_d) | y ; \tilde{\W}$ and the marginal distribution of $(\z_d | y ; \tilde{\W}) = (\x | y ; \tilde{\W})$ with respect to general $\tilde{\W} \in \F_{p, d}$, then state some intermediate information theoretic results when we restrict $\tilde{\W}$ to $\G_{p, d, r}$, where $\G_{p, d, r} \subseteq \F_{p, d}$, and provide references to some information-theoretic tools that we used for our theorems. We leave all the detailed proofs to the appendix.

\subsubsection{Proof sketch of Theorem \ref{thm:3.1}}

\begin{lemma} \label{lemma:3.3}
	For any $\tilde{\W} \in \F_{p, d}$, the joint distribution of $(\z_0, \z_1, \cdots, \z_d) | y ; \tilde{\W}$ is multivariate normal with mean $(y \tilde{\w}_0, y \tilde{\W}_1 \tilde{\w}_0, \cdots, y \tilde{\W}_{d} \tilde{\W}_{d-1} \cdots \tilde{\W}_2 \tilde{\w}_1 \tilde{\w}_0)$ and precision matrix $\kappa^{(d)}$ of dimension $(\sum_{i=0}^d n_i) \times (\sum_{i=0}^d n_i)$ being a block matrix with $(d+1) \times (d+1)$ blocks. Let $\kappa^{(d)}_{i,j}$ represent the $(i,j)-$th block of $\kappa^{(d)}$ $\left( 0 \leq i, j \leq d \right)$, and let $\Tilde{\mathbf{\Sigma}}_i := \left( \sigma^2 \I_{n_i} \right)^{-1}$ be the precision matrix of $\z_i$ as in \eqref{eq:19}, then $\kappa^{(d)}$ has tri-diagonal blocks and 
	\begin{align} \label{eq:47}
	\begin{split}
	\text{For } 0 \leq i \leq d-1 :& \kappa^{(d)}_{ii} = \Tilde{\mathbf{\Sigma}}_i + \Tilde{\W}_{i+1}^{\top} \Tilde{\mathbf{\Sigma}}_{i+1} \Tilde{\W}_{i+1}, \\ &\kappa^{(d)}_{i, i+1} = -(\Tilde{\mathbf{\Sigma}}_{i+1} \Tilde{\W}_{i+1})^{\top} \\
	&\kappa^{(d)}_{i+1, i} = -(\Tilde{\mathbf{\Sigma}}_{i+1} \Tilde{\W}_{i+1}) \\
	\text{For } i = d :& \kappa^{(d)}_{dd} = \Tilde{\mathbf{\Sigma}}_d \\
	\text{For all other }i, j:& \kappa^{(d)}_{i,j} = [0]_{n_i \times n_j}
	\end{split}
	\end{align}
\end{lemma}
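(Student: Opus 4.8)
The plan is to treat the conditional law of $(\z_0, \z_1, \cdots, \z_d)$ given $y$ as that of a linear--Gaussian (Gauss--Markov) chain and to read off its parameters directly. First I would establish joint normality. Introduce the innovation variables $\boldsymbol{\epsilon}_0 := \z_0 - y\tilde{\w}_0$ and $\boldsymbol{\epsilon}_i := \z_i - \tilde{\W}_i \z_{i-1}$ for $1 \le i \le d$; by construction \eqref{eq:19} these are, conditionally on $y$, mutually independent $N(\mathbf{0}, \sigma^2 \I_{n_i})$. Unrolling the recursion $\z_i = \tilde{\W}_i \z_{i-1} + \boldsymbol{\epsilon}_i$ expresses the stacked vector $(\z_0, \cdots, \z_d)$ as a fixed affine function of $(\boldsymbol{\epsilon}_0, \cdots, \boldsymbol{\epsilon}_d)$, the constant part being linear in $y$. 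Since an affine image of a jointly Gaussian vector is jointly Gaussian, the conditional law is multivariate normal. For the mean, I would apply $\E[\,\cdot \mid y]$ to the unrolled expression, giving $\E[\z_i \mid y] = \tilde{\W}_i \tilde{\W}_{i-1} \cdots \tilde{\W}_1\, y\tilde{\w}_0$, which stacks to the claimed mean vector.

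The substantive part is the precision matrix, and here I would work with the joint density rather than the covariance. Using the Markov factorization $p(\z_0, \cdots, \z_d \mid y) = p(\z_0 \mid y)\prod_{i=1}^d p(\z_i \mid \z_{i-1})$ and the Gaussian form of each factor, the negative log-density equals, up to an additive constant free of the $\z_i$'s,
\begin{align*}
&\tfrac{1}{2}(\z_0 - y\tilde{\w}_0)^\top \tilde{\mathbf{\Sigma}}_0 (\z_0 - y\tilde{\w}_0) \\
&\quad + \tfrac{1}{2}\sum_{i=1}^d (\z_i - \tilde{\W}_i \z_{i-1})^\top \tilde{\mathbf{\Sigma}}_i (\z_i - \tilde{\W}_i \z_{i-1}).
\end{align*}
The precision matrix $\kappa^{(d)}$ is precisely the Hessian of this expression in $(\z_0, \cdots, \z_d)$, so it suffices to expand each summand and collect the coefficient of every pair of blocks. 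The summand indexed by $i$ contributes $\tilde{\mathbf{\Sigma}}_i$ to the diagonal block at position $i$, contributes $\tilde{\W}_i^\top \tilde{\mathbf{\Sigma}}_i \tilde{\W}_i$ to the diagonal block at position $i-1$, and produces the cross term $-\tilde{\mathbf{\Sigma}}_i \tilde{\W}_i$ in the $(i,i-1)$ block together with its transpose in the $(i-1,i)$ block. Summing over $i$ gives $\kappa^{(d)}_{ii} = \tilde{\mathbf{\Sigma}}_i + \tilde{\W}_{i+1}^\top \tilde{\mathbf{\Sigma}}_{i+1}\tilde{\W}_{i+1}$ for $i \le d-1$, the boundary value $\kappa^{(d)}_{dd} = \tilde{\mathbf{\Sigma}}_d$, and the stated off-diagonal blocks.

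Finally, since no summand couples $\z_i$ with $\z_j$ for $|i-j| \ge 2$, every block outside the three central diagonals vanishes, which establishes the claimed tridiagonal block structure. The step I expect to be most error-prone is the bookkeeping for these blocks: matching the $\tfrac{1}{2}\,\z^\top \kappa^{(d)} \z$ convention against the expanded cross terms forces care with the factor of two and with the transpose in $\kappa^{(d)}_{i,i+1} = (\kappa^{(d)}_{i+1,i})^\top$, and the two boundary cases must be handled separately, namely $i=0$ (no incoming conditional, so only $\tilde{\mathbf{\Sigma}}_0$ appears on the diagonal) and $i=d$ (no outgoing conditional, so $\kappa^{(d)}_{dd}=\tilde{\mathbf{\Sigma}}_d$ rather than carrying an extra $\tilde{\W}_{d+1}^\top \tilde{\mathbf{\Sigma}}_{d+1}\tilde{\W}_{d+1}$ term). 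An alternative is induction on $d$, appending one conditional at a time and verifying the block recursion, but the direct Hessian computation seems cleaner and I would present that.
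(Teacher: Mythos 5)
Your proposal is correct, and it organizes the argument differently from the paper. The paper proves this lemma by induction on $d$: it verifies the $d=1$ and $d=2$ cases by brute-force expansion of the exponent of $p(\z_0\mid y)\prod_i p(\z_i\mid\z_{i-1})$ against the proposed quadratic form (Lemmas A.1 and A.3 in the appendix), separately checks positive definiteness of the precision matrix there via Cauchy--Schwarz, and then in the inductive step writes $\kappa^{(d+1)}$ as a sum of a padded $\kappa^{(d)}$ and a corner block carrying $\tilde{\W}_{d+1}$, again matching exponents term by term. You instead (i) get joint normality and the mean from the innovations representation $\z_i=\tilde{\W}_i\z_{i-1}+\boldsymbol{\epsilon}_i$, and (ii) read off $\kappa^{(d)}$ in one pass as the Hessian of the negative log-density, collecting the block contributions of each summand. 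The underlying algebra is the same identity the paper checks, but your route buys several things: it avoids induction and the lengthy cancellation bookkeeping; positive definiteness of $\kappa^{(d)}$ comes for free, since the map $(\boldsymbol{\epsilon}_0,\dots,\boldsymbol{\epsilon}_d)\mapsto(\z_0,\dots,\z_d)$ is unit block-lower-triangular and hence invertible, so the conditional covariance is nonsingular (you should state this explicitly, as it is the only reason the precision matrix exists); and the tridiagonal structure is immediate from the observation that no factor couples $\z_i$ with $\z_j$ for $|i-j|\ge 2$, which the paper only obtains implicitly through the inductive construction. One small dividend of your mean computation: it confirms the mean of the last block is $y\,\tilde{\W}_d\cdots\tilde{\W}_1\tilde{\w}_0$, which the lemma statement garbles as $\cdots\tilde{\W}_2\tilde{\w}_1\tilde{\w}_0$.
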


Then we find the marginal distribution of  $(\z_d | y ; \tilde{\W}) = (\x | y ; \tilde{\W})$ with a recursively defined covariance matrix.

\begin{lemma} [Marginal distribution of $\x | y ; \tilde{\W}$] \label{lemma:3.4} 
	For any $\tilde{\W} \in \F_{p, d}$, $\x | y; \tilde{\W}$ is multivariate normal with mean $y \tilde{\W}_{d} \tilde{\W}_{d-1} \cdots \tilde{\W}_1 \tilde{\w}_0$ and covariance matrix $\sigma^2 \left( \I_p - \M_d(\tilde{\W}) \right)^{-1}$, which has the following recursive definition:
	\begin{align}
	\begin{split}
	\M_1(\tilde{\W}) := & \tilde{\W}_1 \left( \I_{n_0} + \tilde{\W}_1^{\top} \tilde{\W}_1 \right)^{-1} \tilde{\W}_1^{\top} \\
	\M_2(\tilde{\W}) := & \tilde{\W}_2 \left( \I_{n_1} + \tilde{\W}_2^{\top} \tilde{\W}_2 - \M_1(\tilde{\W}) \right)^{-1} \tilde{\W}_2^{\top} \\
	& \cdots \\
	\M_d(\tilde{\W}) := & \tilde{\W}_d \left( \I_{n_{d-1}} + \tilde{\W}_{d}^{\top} \tilde{\W}_{d} - \M_{d-1}(\tilde{\W}) \right)^{-1} \tilde{\W}_d^{\top}
	\end{split}
	\end{align}
	where the $\tilde{\W}$ inside $\M_d(\tilde{\W})$ emphasizes the dependency of the covariance matrix on the choice of parameter  $\tilde{\W}$.
\end{lemma}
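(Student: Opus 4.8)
The statement has two parts, the conditional mean and the conditional covariance of $\x=\z_d$ given $y$. The mean is immediate: each transition $\z_k\mid\z_{k-1}$ is an affine--Gaussian map, so iterating $\E[\z_k\mid y]=\tilde{\W}_k\,\E[\z_{k-1}\mid y]$ from $\E[\z_0\mid y]=y\tilde{\w}_0$ along the Markov chain $y\mapsto\z_0\mapsto\cdots\mapsto\z_d$ yields $\E[\x\mid y]=y\,\tilde{\W}_d\tilde{\W}_{d-1}\cdots\tilde{\W}_1\tilde{\w}_0$. Gaussianity of the marginal is inherited directly from Lemma~\ref{lemma:3.3}, since every marginal of a jointly Gaussian vector is Gaussian. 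The entire content is therefore to identify the covariance as $\sigma^2(\I-\M_d(\tilde{\W}))^{-1}$.

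I would lead with a forward Schur-complement (Gaussian-elimination) argument on the precision matrix $\kappa^{(d)}$ supplied by Lemma~\ref{lemma:3.3}, because it both reuses the preceding lemma and explains where the recursion defining $\M_k$ comes from. The marginal precision of $\z_d$ is the Schur complement of the $(\z_0,\dots,\z_{d-1})$ block of $\kappa^{(d)}$, and since $\kappa^{(d)}$ is block-tridiagonal this complement is computed by eliminating $\z_0,\z_1,\dots,\z_{d-1}$ one at a time. With $\tilde{\mathbf{\Sigma}}_i=\sigma^{-2}\I$ one reads off $\kappa^{(d)}_{ii}=\sigma^{-2}(\I+\tilde{\W}_{i+1}^{\top}\tilde{\W}_{i+1})$ for $i<d$, $\kappa^{(d)}_{dd}=\sigma^{-2}\I$, and $\kappa^{(d)}_{k,k-1}=-\sigma^{-2}\tilde{\W}_k$. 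Eliminating $\z_{k-1}$ updates the $(k,k)$ block by subtracting $\kappa_{k,k-1}(\kappa^{\mathrm{upd}}_{k-1,k-1})^{-1}\kappa_{k-1,k}$, and I would prove by induction (with the convention $\M_0:=\mathbf{0}$) that after eliminating $\z_0,\dots,\z_{k-1}$ the updated $(k,k)$ block equals $\sigma^{-2}(\I+\tilde{\W}_{k+1}^{\top}\tilde{\W}_{k+1}-\M_k)$, the correction collapsing to exactly $\sigma^{-2}\M_k=\sigma^{-2}\tilde{\W}_k(\I+\tilde{\W}_k^{\top}\tilde{\W}_k-\M_{k-1})^{-1}\tilde{\W}_k^{\top}$. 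At the final step $\kappa_{dd}$ carries no $\tilde{\W}_{d+1}$ term, so the marginal precision is $\sigma^{-2}(\I-\M_d)$ and the covariance is its inverse $\sigma^2(\I-\M_d)^{-1}$.

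As a clean and independent cross-check I would also run the direct forward covariance recursion $\mathbf{\Sigma}_k=\tilde{\W}_k\mathbf{\Sigma}_{k-1}\tilde{\W}_k^{\top}+\sigma^2\I$ with $\mathbf{\Sigma}_0=\sigma^2\I$, which follows from $\z_k=\tilde{\W}_k\z_{k-1}+\boldsymbol{\varepsilon}_k$ with the noise $\boldsymbol{\varepsilon}_k$ independent of $\z_{k-1}$ given $y$, and show by induction that $\mathbf{\Sigma}_k=\sigma^2(\I-\M_k)^{-1}$. The inductive step reduces to the single identity $(\I+\tilde{\W}_k(\I-\M_{k-1})^{-1}\tilde{\W}_k^{\top})^{-1}=\I-\M_k$, which is precisely the Woodbury matrix-inversion lemma applied with $\B:=(\I-\M_{k-1})^{-1}$, using $\B^{-1}=\I-\M_{k-1}$.

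The main obstacle is not conceptual but the matrix-algebra bookkeeping: verifying that the Schur correction (equivalently the Woodbury term) collapses to exactly $\M_k$, and ensuring every inverse is well defined. For the latter I would carry an auxiliary induction that $\I-\M_k\succ 0$ (equivalently $\mathbf{\Sigma}_k\succ 0$), which holds at the base and propagates because $\I+\tilde{\W}_k^{\top}\tilde{\W}_k-\M_{k-1}=(\I-\M_{k-1})+\tilde{\W}_k^{\top}\tilde{\W}_k$ is the sum of a positive-definite and a positive-semidefinite matrix, hence invertible. With positivity established and the Woodbury and push-through identities in hand, both routes close their inductions and deliver the claimed covariance $\sigma^2(\I-\M_d(\tilde{\W}))^{-1}$.
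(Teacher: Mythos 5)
Your primary argument is essentially the paper's own proof: the covariance of $\x\mid y;\tilde{\W}$ is obtained as the $(d,d)$ block of $\left(\kappa^{(d)}\right)^{-1}$ by repeated Schur complementation along the block-tridiagonal precision matrix of Lemma~\ref{lemma:3.3}, with the induction $\left(\kappa^{(d)}_{[0:\ell],[0:\ell]}\right)^{-1}_{\ell,\ell}=\sigma^2\left(\I_{n_\ell}+\tilde{\W}_{\ell+1}^{\top}\tilde{\W}_{\ell+1}-\M_\ell(\tilde{\W})\right)^{-1}$ matching the paper's step for step, and the mean and Gaussianity handled the same way. Your additional forward cross-check via $\mathbf{\Sigma}_k=\tilde{\W}_k\mathbf{\Sigma}_{k-1}\tilde{\W}_k^{\top}+\sigma^2\I$ and Woodbury is correct and is in fact a shorter self-contained derivation that the paper does not give, and your explicit positive-definiteness induction $\I-\M_k\succ 0$ patches an invertibility point the paper only addresses in separate supporting lemmas.
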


\paragraph{Remark 3} Now we have the formula for the predictor,  $\boldsymbol{\mu}_{\x | y; \tilde{\W}} = \tilde{\W}_{d} \tilde{\W}_{d-1} \cdots \tilde{\W}_1 \tilde{\w}_0$, the risk of $\tilde{\W}$ given true parameter $\tilde{\W}^*$, $R(\tilde{\W}) := P_{(\x, y) ; \tilde{\W}^*} \left[ \x^{\top} \boldsymbol{\mu}_{\x | y; \tilde{\W}} \leq 0 \right]$ is easy to calculate. Previously in Remark 2 we mentioned an identifiability issue. This issue is present for general $\tilde{\W} \in \F_{p, d}$, thus also for $\tilde{\W} \in \G_{p, d, r}$, because it is possible that for some $\tilde{\W} \in \F_{p, d}$ that there exists $\tilde{\W}' \neq \tilde{\W}$ in $\F_{p,d}$ such that $\tilde{\W}_{d} \tilde{\W}_{d-1} \cdots \tilde{\W}_1 \tilde{\w}_0 = \tilde{\W}'_{d} \tilde{\W}'_{d-1} \cdots \tilde{\W}'_1 \tilde{\w}'_0$ \textit{and} $\sigma^2 \left( \I_p - \M_d(\tilde{\W}) \right)^{-1} = \sigma^2 \left( \I_p - \M_d(\tilde{\W}') \right)^{-1}$, meaning that $\x | y; \tilde{\W} \stackrel{d}{=} \x | y; \tilde{\W}'$, where $\stackrel{d}{=}$ means 'identical in distribution'. In addition, recall $y \sim \text{Uniform}\{-1, +1\}$ as in eq \eqref{eq:19}, thus $(\x,  y); \tilde{\W} \stackrel{d}{=} (\x,  y); \tilde{\W}'$, thus $R(\tilde{\W}) = R(\tilde{\W}')$. This implies there would be $\tilde{\W}'$ different from the true $\tilde{\W}^*$ achieving the same risk as $\tilde{\W}^*$. 

Our choice of $\G_{p, d, r}$ inevitably has this issue as well, as it contains parameter matrices consisting of a permutation block and a scaled diagonal block. We overcome this issue with the distance-based Fano's inequality in Theorem~\ref{thm:3.3}. 

Now we provide a KL-divergence upper bound between distributions from $\mathcal{P}_{(\x, y) | \tilde{\W}}^{\text{Id}, \mathcal{Q}_{\x}}$ with our choice of $\G_{p, d, r}$, which in turn gives an upper bound on $\mathbb{I}(\tilde{\W}; S)$, the mutual information between hypothesis $\tilde{\W}$ and the dataset $S$, a key quantity in Fano's inequality. 

\begin{lemma} [Upper bound on KL divergence between $(\x,y) ; \tilde{\W}$ and $(\x,y) ; \tilde{\W}^{\prime}$ for $\tilde{\W}, \tilde{\W}^{\prime} \in \G_{p, d, r}$] \label{lemma:3.5}
	Under Restriction \textbf{A1}, for any $\tilde{\W}, \tilde{\W}^{\prime} \in \G_{p, d, r}$ and $\tilde{\W} \neq \tilde{\W}^{\prime}$, we have that
	$$\mathbb{KL}(\mathbb{P}_{(\x, y);\tilde{\W}} || \mathbb{P}_{(\x, y);\tilde{\W}^{\prime}} ) \leq \frac{2}{\sigma^2}$$
	where $\mathbb{P}_{(\x, y);\tilde{\W}}$ is the joint distribution of $(\x, y)$ parametrized by $\tilde{\W} = (\tilde{\w}_0, \tilde{\W}_1, \tilde{\W}_2, \cdots, \tilde{\W}_{d-1}, \tilde{\W}_{d})$, $\mathbb{P}_{(\x, y);\tilde{\W}^{\prime}}$ is the joint distribution of $(\x, y)$ parametrized by $\tilde{\W}^{\prime} = (\tilde{\w}_0^{\prime}, \tilde{\W}_1^{\prime}, \tilde{\W}_2^{\prime}, \cdots, \tilde{\W}_{d-1}^{\prime}, \tilde{\W}_{d}^{\prime})$ under $\mathcal{P}_{(\x, y) | \tilde{\W}}^{\text{Id}, \mathcal{Q}_{\x}}$ outlined in eq \eqref{eq:19}, and $\tilde{\W} \neq \tilde{\W}^{\prime}$ means $(\tilde{\w}_0, \tilde{\W}_1, \tilde{\W}_2, \cdots, \tilde{\W}_{d-1}, \tilde{\W}_{d}) \neq (\tilde{\w}_0^{\prime}, \tilde{\W}_1^{\prime}, \tilde{\W}_2^{\prime}, \cdots, \tilde{\W}_{d-1}^{\prime}, \tilde{\W}_{d}^{\prime})$, and $\sigma^2$ is the constant for the diagonal covariance as in eq \eqref{eq:19}.
\end{lemma}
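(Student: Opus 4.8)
The plan is to exploit the chain rule for KL divergence together with the explicit Gaussian marginals from Lemma~\ref{lemma:3.4}. Since under $\mathcal{P}_{(\x,y)|\tilde{\W}}^{\text{Id},\mathcal{Q}_{\x}}$ both $\mathbb{P}_{(\x,y);\tilde{\W}}$ and $\mathbb{P}_{(\x,y);\tilde{\W}'}$ place the same uniform marginal on $y$, the chain rule gives
\[
\mathbb{KL}(\mathbb{P}_{(\x,y);\tilde{\W}} \| \mathbb{P}_{(\x,y);\tilde{\W}'}) = \tfrac{1}{2}\sum_{y\in\{-1,+1\}} \mathbb{KL}(\mathbb{P}_{\x|y;\tilde{\W}} \| \mathbb{P}_{\x|y;\tilde{\W}'}),
\]
reducing the problem to the KL divergence between two multivariate normals on $\R^p$. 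By Lemma~\ref{lemma:3.4} these have means $\pm\,\tilde{\W}_d\cdots\tilde{\W}_1\tilde{\w}_0$, $\pm\,\tilde{\W}'_d\cdots\tilde{\W}'_1\tilde{\w}'_0$ and covariances $\sigma^2(\I_p-\M_d(\tilde{\W}))^{-1}$, $\sigma^2(\I_p-\M_d(\tilde{\W}'))^{-1}$, so I would apply the closed form for the Gaussian KL divergence.

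The key step, and the crux of the argument, is to show that \emph{the covariance is identical for every} $\tilde{\W}\in\G_{p,d,r}$, i.e.\ $\M_d(\tilde{\W})=\M_d(\tilde{\W}')$. I will prove by induction on $i$, using the recursion of Lemma~\ref{lemma:3.4}, that each $\M_i(\tilde{\W})$ is block diagonal of the form $\mathrm{diag}(\alpha_i\I_r,\beta_i\I_{p-r})$ with scalars $\alpha_i,\beta_i$ depending only on $i$ and $c$, and \emph{not} on the permutation blocks $\mathbf{R}_1,\dots,\mathbf{R}_d$ nor on $\tilde{\w}_0$. The base case $\M_1$ is a direct computation; for the inductive step, Restriction \textbf{A1} makes $\I+\tilde{\W}_i^{\top}\tilde{\W}_i-\M_{i-1}$ block diagonal and scaled-identity in each block, and the essential cancellation is $\mathbf{R}_i(\alpha\I_r)\mathbf{R}_i^{\top}=\alpha\I_r$ because $\mathbf{R}_i$ is orthogonal, which yields $\alpha_i=1/(2-\alpha_{i-1})$ and $\beta_i=c^2/(1+c^2-\beta_{i-1})$, independent of the specific matrices. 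Consequently the two covariances coincide, both the trace term and the log-determinant term in the Gaussian KL formula vanish, and only the Mahalanobis term survives:
\[
\mathbb{KL}(\mathbb{P}_{(\x,y);\tilde{\W}} \| \mathbb{P}_{(\x,y);\tilde{\W}'}) = \tfrac{1}{2\sigma^2}(\w-\w')^{\top}(\I_p-\M_d)(\w-\w'),
\]
where $\w:=\tilde{\W}_d\cdots\tilde{\W}_1\tilde{\w}_0$ and $\w':=\tilde{\W}'_d\cdots\tilde{\W}'_1\tilde{\w}'_0$ (the $y$-dependence drops out since $y^2=1$).

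Finally I would bound the surviving quadratic form by $4$. Writing $\tilde{\W}_d\cdots\tilde{\W}_1=\mathrm{diag}(\mathbf{R},c^d\I_{p-r})$ with $\mathbf{R}:=\mathbf{R}_d\cdots\mathbf{R}_1$ a permutation, and splitting $\tilde{\w}_0=(\mathbf{a},\mathbf{b})$ into its first $r$ and last $p-r$ coordinates, the quadratic form separates into a top block $(1-\alpha_d)\|\mathbf{R}\mathbf{a}-\mathbf{R}'\mathbf{a}'\|^2$ and a bottom block $(1-\beta_d)\,c^{2d}\,\|\mathbf{b}-\mathbf{b}'\|^2$. I would then use: (i) from the induction $\alpha_i\ge 1/2$, so $1-\alpha_d\le 1/2$, and $\beta_i\in(0,1)$, so $1-\beta_d\le 1$; (ii) permutations preserve norm and, from the definition of $\G^{(0)}$, $\|\mathbf{a}\|^2=1-2^{-r}<1$ and $\|\mathbf{b}\|^2=2^{-r}$; and (iii) $c^{2d}\le 1$. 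The triangle inequality then gives the top block $\le\tfrac12\bigl(2\sqrt{1-2^{-r}}\bigr)^2<2$ and the bottom block $\le\bigl(2\cdot 2^{-r/2}\bigr)^2=4/2^{r}\le 2$ for $r\ge 1$, so the total is at most $4$ and $\mathbb{KL}\le 2/\sigma^2$. The main obstacle is the inductive covariance-invariance claim; once the permutation cancellation is established, the remaining norm bookkeeping is routine.
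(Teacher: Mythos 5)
Your proposal is correct and follows essentially the same route as the paper's proof: decompose the KL over $y=\pm1$, show via the permutation-orthogonality cancellation $\mathbf{R}_i(\alpha\I_r)\mathbf{R}_i^{\top}=\alpha\I_r$ that $\M_d(\tilde{\W})$ is the same block-diagonal matrix (with blocks $\frac{d}{d+1}\I_r$ and $\frac{\sum_{j=1}^d c^{2j}}{1+\sum_{j=1}^d c^{2j}}\I_{p-r}$) for every element of $\G_{p,d,r}$, so only the Mahalanobis term survives, and then bound it by $4$ since $\I_p-\M_d\preceq\I_p$ and $\|\tilde{\w}\|_2,\|\tilde{\w}'\|_2\leq1$. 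Your blockwise norm bookkeeping at the end is a minor cosmetic variation of the paper's direct triangle-inequality bound $\|\tilde{\w}-\tilde{\w}'\|_2^2\leq(\|\tilde{\w}\|_2+\|\tilde{\w}'\|_2)^2\leq4$, not a different argument.
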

This lemma gives us a constant upper bound on the KL divergence, which is relatively small in order in the sense that it does not grow with $p$, $d$ or $r$. This will in turn gives us an upper bound on $\mathbb{I}(\tilde{\W}; S)$ that is linear in $n$ and not dependent on any of $p$, $r$, or $d$. Thus, when applying Fano's inequality, the dependency of the sample complexity lower bound on $p, d, r$ comes solely from the size of $\G_{p, d, r}$, and in fact the sample complexity lower bound in Theorem \ref{thm:3.1} is of the same order as $\log | \G_{p, d, r} | = d \left( \sum_{i=1}^{r} \log(i) \right) + p \log(2)$.

\subsubsection{Proof sketch of Theorem \ref{thm:3.3}}

We first reproduce the distance-based Fano inequality, proposed in (Duchi et al, 2013), then explain how this distance-based Fano inequality applies to the sample complexity lower bound of having a positive excess risk. 

\begin{lemma} [Distance-Based Fano's Inequality for Discrete Problem] \label{lemma:3.6}
	Consider any Markov chain $V \to X \to \hat{V}$, where the random variable $V \sim \text{Uniform}(\mathcal{V})$ with $2 \leq |\mathcal{V}| < \infty $, and a symmetric function $\rho: \mathcal{V} \times \mathcal{V} \to \mathbb{R}$ (e.g. a (semi)metric on the space $\mathcal{V}$), then for a given scalar $t \geq 0$, define the maximum and minimum neighborhood sizes at radius t, 
	\begin{align}
	\begin{split}
	&N_t^{\max} := \max_{v \in \mathcal{V}} \{ \text{card} \{ v' \in \mathcal{V}: \rho(v, v') \leq t \} \}, \\
	&N_t^{\min} := \min_{v \in \mathcal{V}} \{ \text{card} \{ v' \in \mathcal{V}: \rho(v, v') \leq t \} \}
	\end{split}
	\end{align}
	Then, if $|\mathcal{V}| - N_t^{\min} > N_t^{\max}$, we have
	\begin{align}
	P(\rho(\hat{V}, V) > t) \geq 1 - \frac{\mathbb{I}(V;X) + \log 2 }{\log \frac{|\mathcal{V}|}{N_t^{\max}}}.
	\end{align}
\end{lemma}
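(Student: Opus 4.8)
The plan is to prove this as a generalized Fano inequality, combining the entropy of $V$ conditioned on the estimate $\hat{V}$ with the data-processing inequality. The one-line idea: since $V$ is uniform on $\mathcal{V}$, observing $\hat{V}$ can reduce the entropy of $V$ by at most $\mathbb{I}(V;\hat{V}) \le \mathbb{I}(V;X)$; conversely, if the error event $\{\rho(\hat V, V) > t\}$ were unlikely, then $\hat V$ would pin $V$ down to a ball of size at most $N_t^{\max}$, forcing $H(V\mid\hat V)$ to be small. Balancing these two statements isolates the error probability $P_e := P(\rho(\hat V, V) > t)$ and yields the claimed bound.

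First I would introduce the Bernoulli error indicator $E := \mathbf{1}\{\rho(\hat V, V) > t\}$, so that $P_e = P(E=1)$. Since $E$ is a deterministic function of $(V,\hat V)$, the entropy chain rule gives $H(V \mid \hat V) = H(E \mid \hat V) + H(V \mid E, \hat V)$, where I bound the first term by $H(E\mid\hat V) \le H(E) \le \log 2$. For the second term I condition on the value of $E$. On $\{E=0,\hat V=\hat v\}$ the variable $V$ lies in the ball $\{v : \rho(\hat v,v)\le t\}$, whose cardinality is at most $N_t^{\max}$ by definition, so $H(V\mid E=0,\hat V)\le \log N_t^{\max}$; on $\{E=1\}$ I use the crude bound $H(V\mid E=1,\hat V)\le \log|\mathcal{V}|$. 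Averaging over $E$ produces
$$H(V \mid \hat V) \le \log 2 + (1-P_e)\log N_t^{\max} + P_e \log|\mathcal{V}|.$$

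Next I would relate the left-hand side to mutual information. Because $V\sim\text{Uniform}(\mathcal{V})$, we have $H(V)=\log|\mathcal{V}|$ and hence $H(V\mid\hat V)=\log|\mathcal{V}|-\mathbb{I}(V;\hat V)$; the Markov chain $V\to X\to\hat V$ and the data-processing inequality give $\mathbb{I}(V;\hat V)\le\mathbb{I}(V;X)$, so $H(V\mid\hat V)\ge \log|\mathcal{V}|-\mathbb{I}(V;X)$. Substituting this lower bound into the displayed inequality and collecting the $(1-P_e)$ factors leaves
$$(1-P_e)\bigl(\log|\mathcal{V}| - \log N_t^{\max}\bigr) \le \mathbb{I}(V;X) + \log 2.$$
Dividing by $\log\frac{|\mathcal{V}|}{N_t^{\max}}=\log|\mathcal{V}|-\log N_t^{\max}$ and rearranging gives exactly the claim. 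This is where I would invoke the hypothesis $|\mathcal{V}|-N_t^{\min}>N_t^{\max}$: together with $N_t^{\min}\ge 1$ (each point lies in its own radius-$t$ ball, since $\rho(v,v)=0\le t$ for a semimetric) it yields $|\mathcal{V}|>N_t^{\max}+N_t^{\min}>N_t^{\max}$, so that $\log\frac{|\mathcal{V}|}{N_t^{\max}}>0$ and the division preserves the inequality direction.

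I expect no genuine obstacle here, only bookkeeping: checking that $E$ is measurable with respect to $(V,\hat V)$ so that $H(E\mid V,\hat V)=0$ in the chain rule, and justifying the uniform neighborhood bound $H(V\mid E=0,\hat V=\hat v)\le\log N_t^{\max}$ for \emph{every} $\hat v$, which is precisely where the $\max$ in the definition of $N_t^{\max}$ is needed. A minor refinement would replace the crude bound $\log|\mathcal{V}|$ on $H(V\mid E=1,\hat V)$ by $\log(|\mathcal{V}|-N_t^{\min})$, sharpening the denominator to $\log\frac{|\mathcal{V}|-N_t^{\min}}{N_t^{\max}}$; since the lemma is stated with $\log\frac{|\mathcal{V}|}{N_t^{\max}}$, the crude bound already suffices and this refinement is unnecessary for the present statement.
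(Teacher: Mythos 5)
Your proof is correct. Note first that the paper itself does not prove this lemma: it reproduces the statement verbatim from the cited reference (John and Wainwright, 2013, referred to in the text as Duchi et al.) and uses it as a black box in the proof of Theorem 3.3. So there is no in-paper argument to compare against; what you have written is a self-contained derivation, and it is the natural generalization of the textbook Fano proof. Each step checks out: the chain-rule identity $H(V\mid\hat V)=H(E\mid\hat V)+H(V\mid E,\hat V)$ is valid because $E$ is a deterministic function of $(V,\hat V)$; the conditional support bound $H(V\mid E=0,\hat V=\hat v)\le\log N_t^{\max}$ is exactly where the maximum over centers is used (and implicitly where symmetry of $\rho$ lets you read the ball around $\hat v$ off the definition); and the final rearrangement is legitimate because the hypothesis $|\mathcal{V}|-N_t^{\min}>N_t^{\max}$ together with $N_t^{\min}\ge 0$ forces $|\mathcal{V}|>N_t^{\max}$, hence $\log\tfrac{|\mathcal{V}|}{N_t^{\max}}>0$. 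Two small remarks. First, your justification $N_t^{\min}\ge 1$ via $\rho(v,v)=0$ assumes $\rho$ is a semimetric, whereas the lemma only assumes a symmetric function; this is harmless since $N_t^{\min}\ge 0$ already suffices for the sign of the denominator. Second, your argument never actually uses the full strength of the stated hypothesis $|\mathcal{V}|-N_t^{\min}>N_t^{\max}$, only the weaker $|\mathcal{V}|>N_t^{\max}$; the stronger condition is needed in the original reference for sharper variants of the bound (with denominator $\log\tfrac{|\mathcal{V}|-N_t^{\min}}{N_t^{\max}}$), which is consistent with your closing observation that the crude bound $H(V\mid E=1,\hat V)\le\log|\mathcal{V}|$ is all the stated form requires.
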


To apply this inequality, we will let $\mathcal{V} = \G_{p, d, r}$, $X = S$, and let $\rho(v, v')$ be $\rho(\tilde{\W}, \tilde{\W}') := \mathds{1} \{\tilde{\W} \neq \tilde{\W}'\} + \mathds{1} \{\tilde{\w} \neq \tilde{\w}'\}$ for any $\tilde{\W}, \tilde{\W}' \in \G_{p, d, r}$, where $\tilde{\w} := \tilde{\W}_{d} \tilde{\W}_{d-1} \cdots \tilde{\W}_1 \tilde{\w}_0$. In fact, $\rho$ is a metric on $\G_{p, d, r}$, and this is shown in the proof of Theorem \ref{thm:3.3} in the appendix. We choose $t = 1$ and show that $N_1^{\max} = N_1^{\min} = (r!)^{d-1}$, which is why we lose a factor of $d$ in the sample complexity lower bound in Theorem \ref{thm:3.3}. The event $\{ \rho(\tilde{\W}, \tilde{\W}^*) > 1 \}$ where $\tilde{\W}^*$ is the true parameter is equivalent to $\{ \tilde{\W} \neq \tilde{\W}^* \text{ and } \tilde{\w} \neq \tilde{\w}^*\}$, and this event is sufficient for $R(\tilde{\W}) > R(\tilde{\W}^*)$, which is shown in the proof of Theorem \ref{thm:3.3} in the appendix. 

\subsection{Linear approximation of the excess risk lower bound in Theorem \ref{thm:3.3}}

In Theorem \ref{thm:3.3} we have the probability of 
\begin{align} \label{eq:17}
	R(\tilde{\W}) - R(\tilde{\W}^*) 
	\geq 
	\frac{\text{erf} \left( c_1 \right) - \text{erf} \left( c_0 \right) }{2} 
\end{align}
greater than $1/2$, where $c_0$ and $c_1$ are defined in \eqref{eq:14} and \eqref{eq:15}. 

The excess risk lower bound in eq \eqref{eq:17} is a difference of erf functions and has no simple formula. However, we still manage to analyze this lower bound via a linear approximation of the erf functions, as the numerator of both $c_0$ and $c_1$ is less than $1$ and denominator is $\sigma \sqrt{2 \left[(d+1) \left( 1 - \frac{1}{2^r} \right) + \left( \frac{1-c^{2(d+1)}}{1 - c^2} \right) \left( \frac{ c^{2d}}{2^r} \right) \right]}$, which is very close to $\sigma \sqrt{2d}$, with $\sigma$ being a constant reflecting the variance of input data $\x$ and small $c$, where $c$ is defined in Restriction \textbf{A2}. The erf function is approximately linear around zero, thus we use its derivative at zero, $\frac{d \text{erf}(x)}{dx} \rvert _{x=0} = 2 / \sqrt{\pi}$, for a linear approximation of the excess risk lower bound. The detailed analysis is in the appendix.

\section{Comparison with Existing Upper Bounds} \label{sec:4}

We mainly compare with the four existing upper bounds on Rademacher complexity and generalization error. All those existing bounds are discussed based on the assumption that the input $\x$ is bounded, while our model has unbounded $\x$ as it follows a Gaussian distribution. Therefore we ignore all terms involving norm of $\x$ when comparing the bounds.

Theorem 1 in (Neyshabur et al, 2015) gives an upper bound on the Rademacher complexity of a class of $d$-layer network with ReLU activation. Further imposing our Restriction \textbf{A1}, which assumes the parameter matrices consist of a block of permutation matrix of rank $r$ and a scaled diagonal block, the upper bound in Theorem 1 in (Neyshabur et al, 2015), ignoring the term $\max_i \| \x_i \|_{p^*}^2$, becomes
\begin{align}
O \left( \sqrt{ \left(2 p^{1/6} \right)^{2(d-1)} / n
} \right)
\end{align}
which implies that the sample size $n$ of order $O(p^{d/6})$ is sufficient for a decaying Rademacher complexity.

Theorem 1.1 in (Bartlett et al, 2017) gives an upper bound on the prediction error probability for a given $d$-layer network with ReLU activation. The upper bound includes a spectral density $R_{\mathcal{A}}$, where $\mathcal{A}$ corresponds to $\tilde{\W}$ in our analysis. Under our restrictions \textbf{A1} and \textbf{A2}, this upper bound becomes 
\begin{align} \label{eq:20}
O \left(
\frac{R_{\mathcal{A}}}{\gamma n} \ln(p) + \sqrt{\frac{1/\delta}{n}}
\right)
\end{align}
where $R_{\mathcal{A}} = \left( \prod_{i=1}^d \lambda_{\max}(\tilde{\W}_i) \right) \cdot \left( \sum_{i=1}^d \left( \frac{ \| \tilde{\W}_i^{\top} \|_{2,1} }{\lambda_{\max}(\tilde{\W}_i)} \right)^{2/3} \right)^{3/2} = r d^{3/2}$. Thus this upper bound states that $n \in O(d^{\frac{3}{2}} \cdot r \cdot \ln(p))$ is sufficient for a decreasing error probability. 

Theorem 1 in (Neyshabur et al, 2018) gives an upper bound on the generalization error for a given $d$-layer network with ReLU activation. The upper bound holds with probability $1 - \delta$:
\begin{align} \label{eq:21}
O \left(
\sqrt{
	\frac{d^2 p \ln(d p) d r + \ln \frac{d n}{\delta}}{\gamma^2 n}
}
\right)
\end{align}
which implies that $n \in O\left(r d^3 p \ln(dp) \right)$ is sufficient for a decreasing error probability.

There has also been a size-indepepdent upper bound on the Rademacher complexity, i.e., the width of the network, which is $p$ in our analysis, does not appear in the bound. Theorem 5 in (Golowich et al, 18) gives such a size-independent upper bound for a given $d$-layer network with ReLU activation.
\begin{align}
O \left(
r^{d/2} \cdot \min 
\left\{
\sqrt{ \log \left( r^{d/2} \right) / \sqrt{n}, \sqrt{d / n} }
\right\}
\right)
\end{align}
If the minimum evaluates to the first term, $n \in O\left(r^{2d} \cdot d^2 (\log r)^2 \right)$ is sufficient a decreasing Rademacher complexity. If the minimum evaluates to the second term, $n \in O\left(r^{d} \cdot d \right)$ is sufficient.

Meanwhile, our sample complexity lower bound in Theorem \ref{thm:3.1} suggests that at least $n \in \Omega(d r \log(r) + p)$ samples are needed to exactly recover the parameter matrices and vector, and in Theorem \ref{thm:3.3} we show $n \in \Omega(r \log(r) + p)$ samples are needed to recover any parameter having the same prediction risk as the truth. 

Moreover, if we consider full-rank matrices, i.e. let $r = p$, or simply let $r \propto p$, then our sample complexity lower bounds become $\Omega(d p \log(p))$ and $\Omega(p \log(p))$. Among the existing bounds, the upper bounds provided by (Bartlett et al, 2017) and (Neyshabur et al, 2018) are the closest to ours. The former (eq \eqref{eq:20}) has order $O(d^{\frac{3}{2}} \cdot p \cdot \ln(p))$ when $r \propto p$, and the latter (eq \eqref{eq:21}) has order $O\left(d^3 p^2 \ln(dp) \right)$. The former differs from our lower bound in Theorem \ref{thm:3.1} by only a factor of $d^{1/2}$, while the latter is greater than ours by a factor of $d^2 p$ and has an additional term $d^3 p^2 \ln(d)$. 

Lastly, our sample complexity lower bounds, $\Omega(d r \log(r) + p)$ for exact recovery of truth, and $\Omega(r \log(r) + p)$ for recoverying parameters with same risk as the truth, are intuitive in the sense that a network should be harder to learn when the input dimension is large, a deeper network should require more samples to train, and a network parametrized by matrices with higher rank should also require more samples.

\clearpage
\appendix
\onecolumn

\noindent The supplementary material contains proofs of the lemmas and theorems in the main text.

\section{Detailed proofs} 

We first present some intermediate results for lemmas stated in Section 3.4.1, specifically, results about the joint distribution of $(\z_0, \z_1, \cdots, \z_d) | y; \tilde{\W}$ (Lemma \ref{lemma:4.1} and \ref{lemma:4.3}), the marginal distribution of $(\z_d | y; \tilde{\W}) =(\x | y; \tilde{\W})$ (Lemma \ref{lemma:4.4}) for $d = 1$ and $d = 2$. We also present proofs for lemmas relevant to the corresponding KL-divergence upper bound (Lemma \ref{lemma:4.2} and \ref{lemma:4.5}). In section 1.3 below, we prove Lemma 3.5, 3.6 and 3.7 in Section 3.4.1 of the main text. In Section 1.4, we prove our main theorems. In Section 1.5, we discuss the linear approximation of the excess risk in Theorem 3.3 in the main text.

\subsection{One hidden layer}

We first consider a 1-layer model, which is described below, in a manner similar to eq (19) in the main text:

\begin{align} \label{eq:32}
\begin{split}
y &\sim \text{Uniform}\{-1, +1\} \\
\z_0 | y &\sim N(y \tilde{\w}_0, \text{covar} = \sigma^2 \I_{n_0}), \tilde{\w}_0 \in \R^{n_0} \\
\x := \z_1 | \z_0 &\sim N(\tilde{\W}_1 \z_0, \text{covar} = \sigma^2 \I_{n_1} = \sigma^2 \I_{p}), \tilde{\W}_1 \in \R^{n_1 \times n_0} = \R^{p \times n_0} \\
\end{split}
\end{align}

Here we first find the joint distribution of $(\z_0, \z_1) | y$ and then show an upper bound on the KL divergence of interest. These results for the 1-layer network will be a building block for our analysis of the 2-hidden layer network. We use $\x$ and $\z_1$ interchangeably in this subsection, as $(\z_1 | y; \tilde{\W}) = (\x | y; \tilde{\W})$. 

We first find the joint distribution of $(\z_0, \z_1) | y; \tilde{\W}$, where $\tilde{\W} = (\tilde{\w}_0, \tilde{\W}_1)$. 
\begin{lemma} [Joint distribution of $(\z_0, \z_1) | y; \tilde{\W}$] ~\label{lemma:4.1}
	The random variable $(\z_0, \z_1) | y; \tilde{\W}$ is normally distributed with mean $(y \tilde{\w}_0, y \tilde{\W}_1 \tilde{\w}_0)$ and covariance matrix 
	\begin{align}
	\begin{bmatrix}
	\Tilde{\mathbf{\Sigma}}_0 + \tilde{\W}_1^{\top} \Tilde{\mathbf{\Sigma}}_1 \tilde{\W}_1 & -(\Tilde{\mathbf{\Sigma}}_1 \tilde{\W}_1)^{\top} \\
	-(\Tilde{\mathbf{\Sigma}}_1 \tilde{\W}_1) & \Tilde{\mathbf{\Sigma}}_1 
	\end{bmatrix} ^{-1} 
	\end{align}
	where $\Tilde{\mathbf{\Sigma}}_i = \left( \sigma^2 \I_{n_i} \right)^{-1}$ is the precision matrix of $\z_i$ as defined in \eqref{eq:32}. 
\end{lemma}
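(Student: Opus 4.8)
The plan is to treat this as a standard Gaussian Markov chain computation and read the precision (inverse-covariance) matrix directly off the joint log-density, since the lemma states the covariance precisely as the inverse of a block matrix.

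First I would establish joint Gaussianity and fix the mean. Conditioned on $y$, write $\z_0 = y\tilde{\w}_0 + \boldsymbol{\epsilon}_0$ and $\z_1 = \tilde{\W}_1 \z_0 + \boldsymbol{\epsilon}_1$, where $\boldsymbol{\epsilon}_0 \sim N(\mathbf{0}, \sigma^2\I_{n_0})$ and $\boldsymbol{\epsilon}_1 \sim N(\mathbf{0}, \sigma^2\I_{n_1})$ are independent of each other and of $y$. Then $(\z_0,\z_1)$ is an affine image of the jointly Gaussian vector $(\boldsymbol{\epsilon}_0,\boldsymbol{\epsilon}_1)$, hence jointly Gaussian given $y$. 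Iterated expectation gives $\mathbb{E}[\z_0 \mid y] = y\tilde{\w}_0$ and $\mathbb{E}[\z_1 \mid y] = \tilde{\W}_1 \mathbb{E}[\z_0\mid y] = y\tilde{\W}_1\tilde{\w}_0$, which is the claimed mean $(y\tilde{\w}_0, y\tilde{\W}_1\tilde{\w}_0)$.

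Next I would identify the precision matrix from the Markov factorization $p(\z_0,\z_1 \mid y) = p(\z_0 \mid y)\,p(\z_1 \mid \z_0)$, whose logarithm equals, up to an additive constant,
$$-\tfrac{1}{2}\Big[(\z_0 - y\tilde{\w}_0)^{\top} \Tilde{\mathbf{\Sigma}}_0 (\z_0 - y\tilde{\w}_0) + (\z_1 - \tilde{\W}_1\z_0)^{\top} \Tilde{\mathbf{\Sigma}}_1 (\z_1 - \tilde{\W}_1\z_0)\Big].$$
Expanding the second bracket and collecting the purely quadratic terms in $(\z_0,\z_1)$ yields a quadratic form whose $\z_0$-block is $\Tilde{\mathbf{\Sigma}}_0 + \tilde{\W}_1^{\top} \Tilde{\mathbf{\Sigma}}_1 \tilde{\W}_1$, whose $\z_1$-block is $\Tilde{\mathbf{\Sigma}}_1$, and whose cross term is $-\z_0^{\top} \tilde{\W}_1^{\top} \Tilde{\mathbf{\Sigma}}_1 \z_1$; using $\Tilde{\mathbf{\Sigma}}_1^{\top} = \Tilde{\mathbf{\Sigma}}_1$, the off-diagonal blocks read $-(\Tilde{\mathbf{\Sigma}}_1\tilde{\W}_1)^{\top}$ and $-(\Tilde{\mathbf{\Sigma}}_1\tilde{\W}_1)$, exactly the stated block precision matrix, so the covariance is its inverse.

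This computation is essentially routine, so I do not expect a serious obstacle; the only care required is the bookkeeping of transposes in the off-diagonal blocks (leaning on symmetry of $\Tilde{\mathbf{\Sigma}}_1$) and checking that the first-order terms in the expansion are consistent with the mean already computed, so that no recentering is needed. If a cross-check is wanted, one can instead compute the covariance directly from the affine representation, namely $\mathrm{Cov}(\z_0) = \sigma^2\I_{n_0}$, $\mathrm{Cov}(\z_0,\z_1) = \sigma^2 \tilde{\W}_1^{\top}$, and $\mathrm{Cov}(\z_1) = \sigma^2(\I_{n_1} + \tilde{\W}_1\tilde{\W}_1^{\top})$, and then verify by a two-by-two block multiplication (using $\Tilde{\mathbf{\Sigma}}_i = \sigma^{-2}\I$) that this matrix inverts the claimed precision matrix.
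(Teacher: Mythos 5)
Your proposal is correct and follows essentially the same route as the paper: factor $p(\z_0,\z_1\mid y)=p(\z_0\mid y)\,p(\z_1\mid\z_0)$, expand the exponent, and match the quadratic form to the stated block precision matrix. The one small difference is that the paper separately verifies positive-definiteness of the precision matrix via a Cauchy--Schwarz argument, whereas your opening observation that $(\z_0,\z_1)$ is an affine image of independent Gaussians already guarantees joint Gaussianity (hence a valid, positive-definite precision), so your version renders that extra check unnecessary; your block-multiplication cross-check of the inverse is also accurate.
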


\begin{proof}
	We just need to show that the exponent of the proposed distribution in the claim above matches the exponent in $p(\z_0 | y ; \tilde{\W}) p(\z_1 | \z_0; \tilde{\W})$ and then check the positive-definiteness of the precision matrix.
	\begin{align}
	\begin{split}
	p(\z_0, \z_1 | y) =& p(\z_0 | y) p(\z_1 | \z_0) \\
	\propto& \exp \left(-\frac{1}{2} (\z_0 - y \tilde{\w}_0)^{\top} \Tilde{\mathbf{\Sigma}}_0 (\z_0 - y \tilde{\w}_0) \right)  
	\exp \left(-\frac{1}{2} (\z_1 - \tilde{\W}_1 \z_0)^{\top} \Tilde{\mathbf{\Sigma}}_1 (\z_1 - \tilde{\W}_1 \z_0) \right) \\
	\end{split}
	\end{align}
	
	and the exponent, ignoring the $-\frac{1}{2}$ factor, becomes
	\begin{align} \label{eq:26}
	\begin{split}
	&\z_0^{\top} \Tilde{\mathbf{\Sigma}}_0 \z_0 - 2 \z_0^{\top} \Tilde{\mathbf{\Sigma}}_0 (y \tilde{\w}_0) + (y \tilde{\w}_0)^{\top} \Tilde{\mathbf{\Sigma}}_0 (y \tilde{\w}_0) 
	+ \z_1^{\top} \Tilde{\mathbf{\Sigma}}_1 \z_1 - 2 \z_1^{\top} \Tilde{\mathbf{\Sigma}}_1 (\tilde{\W}_1 \z_0) + (\tilde{\W}_1 \z_0)^{\top} \Tilde{\mathbf{\Sigma}}_1 (\tilde{\W}_1 \z_0) \\
	=& \z_0^{\top} (\Tilde{\mathbf{\Sigma}}_0 + \tilde{\W}_1^{\top} \Tilde{\mathbf{\Sigma}}_1 \tilde{\W}_1) \z_0 
	- 2 y \z_0^{\top} \Tilde{\mathbf{\Sigma}}_0 \tilde{\w}_0) 
	- 2 \z_1^{\top} \Tilde{\mathbf{\Sigma}}_1 \tilde{\W}_1 \z_0 + \tilde{\w}_0^{\top} \Tilde{\mathbf{\Sigma}}_0 \tilde{\w}_0 \\
	\end{split}
	\end{align}
	
	while the density of proposed distribution is proportional to the exponential of 
	\begin{align} \label{eq:27}
	\begin{split}
	-\frac{1}{2} 
	\begin{bmatrix}
	\z_0 - y \tilde{\w}_0 \\
	\z_1 - y \tilde{\W}_1 \tilde{\w}_0
	\end{bmatrix}^{\top} 
	\begin{bmatrix}
	\Tilde{\mathbf{\Sigma}}_0 + \tilde{\W}_1^{\top} \Tilde{\mathbf{\Sigma}}_1 \tilde{\W}_1 & -(\Tilde{\mathbf{\Sigma}}_1 \tilde{\W}_1)^{\top} \\
	-(\Tilde{\mathbf{\Sigma}}_1 \tilde{\W}_1) & \Tilde{\mathbf{\Sigma}}_1 
	\end{bmatrix}        
	\begin{bmatrix}
	\z_0 - y \tilde{\w}_0 \\
	\z_1 - y \tilde{\W}_1 \tilde{\w}_0
	\end{bmatrix}
	\end{split}
	\end{align}
	
	ignoring the factor of $-1/2$, \eqref{eq:27} evaluates to
	\begin{align}
	\begin{split}
	&\begin{bmatrix}
	\z_0 - y \tilde{\w}_0 \\
	\z_1 - y \tilde{\W}_1 \tilde{\w}_0
	\end{bmatrix}^{\top} 
	\begin{bmatrix}
	(\Tilde{\mathbf{\Sigma}}_0 + \tilde{\W}_1^{\top} \Tilde{\mathbf{\Sigma}}_1 \tilde{\W}_1) (\z_0 - y \tilde{\w}_0) -(\Tilde{\mathbf{\Sigma}}_1 \tilde{\W}_1)^{\top} (\z_1 - y \tilde{\W}_1 \tilde{\w}_0) \\
	-(\Tilde{\mathbf{\Sigma}}_1 \tilde{\W}_1) (\z_0 - y \tilde{\w}_0) + \Tilde{\mathbf{\Sigma}}_1 (\z_0 - y \tilde{\W}_1 \tilde{\w}_0) 
	\end{bmatrix} \\
	= &(\z_0 - y \tilde{\w}_0)^{\top} (\Tilde{\mathbf{\Sigma}}_0 + \tilde{\W}_1^{\top} \Tilde{\mathbf{\Sigma}}_1 \tilde{\W}_1) (\z_0 - y \tilde{\w}_0) 
	- (\z_0 - y \tilde{\w}_0)^{\top} (\Tilde{\mathbf{\Sigma}}_1 \tilde{\W}_1)^{\top} (\z_1 - y \tilde{\W}_1 \tilde{\w}_0) \\
	&- (\z_1 - y \tilde{\W}_1 \tilde{\w}_0)^{\top} (\Tilde{\mathbf{\Sigma}}_1 \tilde{\W}_1) (\z_0 - y \tilde{\w}_0) 
	+ (\z_1 - y \tilde{\W}_1 \tilde{\w}_0)^{\top} \Tilde{\mathbf{\Sigma}}_1 (\z_1 - y \tilde{\W}_1 \tilde{\w}_0) \\
	=& [ \z_0^{\top} \Tilde{\mathbf{\Sigma}}_0 \z_0 - 2 \z_0^{\top} \Tilde{\mathbf{\Sigma}}_0 (y \tilde{\w}_0) + (y \tilde{\w}_0)^{\top} \Tilde{\mathbf{\Sigma}}_0 (y \tilde{\w}_0) + 
	\z_0^{\top} \tilde{\W}_1^{\top} \Tilde{\mathbf{\Sigma}}_1 \tilde{\W}_1 \z_0 - 2 \z_0^{\top} \tilde{\W}_1^{\top} \Tilde{\mathbf{\Sigma}}_1 \tilde{\W}_1 (y \tilde{\w}_0) \\
	& + (y \tilde{\w}_0)^{\top} \tilde{\W}_1^{\top} \Tilde{\mathbf{\Sigma}}_1 \tilde{\W}_1 (y \tilde{\w}_0) ] 
	- 2 [\z_0^{\top} \tilde{\W}_1^{\top} \Tilde{\mathbf{\Sigma}}_1 \z_1 - y \z_0^{\top} \tilde{\W}_1^{\top} \Tilde{\mathbf{\Sigma}}_1 \tilde{\W}_1 \tilde{\w}_0 - y \tilde{\w}_0^{\top} \tilde{\W}_1^{\top} \Tilde{\mathbf{\Sigma}}_1 \z_1 \\
	& + \tilde{\w}_0^{\top} \tilde{\W}_1^{\top} \Tilde{\mathbf{\Sigma}}_1 \tilde{\W}_1 \tilde{\w}_0] 
	+ [\z_1^{\top} \Tilde{\mathbf{\Sigma}}_1 \z_1 - 2 y \z_1^{\top} \Tilde{\mathbf{\Sigma}}_1 \tilde{\W}_1 \tilde{\w}_0 + \tilde{\w}_0^{\top} \tilde{\W}_1^{\top} \Tilde{\mathbf{\Sigma}}_1 \tilde{\W}_1 \tilde{\w}_0] \\
	=& \z_0^{\top} (\Tilde{\mathbf{\Sigma}}_0 + \tilde{\W}_1^{\top} \Tilde{\mathbf{\Sigma}}_1 \tilde{\W}_1) \z_0 
	+ \z_0^{\top} [-2 y \Tilde{\mathbf{\Sigma}}_0 \tilde{\w}_0 
	- 2 y \tilde{\W}_1^{\top} \Tilde{\mathbf{\Sigma}}_1 \tilde{\W}_1 \tilde{\w}_0 
	+ 2 y \tilde{\W}_1^{\top} \Tilde{\mathbf{\Sigma}}_1 \tilde{\W}_1 \tilde{\w}_0 ] \\
	&+ \z_0^{\top} [-2 \tilde{\W}_1^{\top} \Tilde{\mathbf{\Sigma}}_1] \z_1 + \z_1^{\top} [2 y \Tilde{\mathbf{\Sigma}}_1 \tilde{\W}_1 \tilde{\w}_0 
	- 2 y \Tilde{\mathbf{\Sigma}}_1 \tilde{\W}_1 \tilde{\w}_0] \\
	& + [\tilde{\w}_0^{\top} \Tilde{\mathbf{\Sigma}}_0  \tilde{\w}_0 + \tilde{\w}_0^{\top} \tilde{\W}_1^{\top} \Tilde{\mathbf{\Sigma}}_1 \tilde{\W}_1 \tilde{\w}_0 
	-2 \tilde{\w}_0^{\top} \tilde{\W}_1^{\top} \Tilde{\mathbf{\Sigma}}_1 \tilde{\W}_1 \tilde{\w}_0 + \tilde{\w}_0^{\top} \tilde{\W}_1^{\top} \Tilde{\mathbf{\Sigma}}_1 \tilde{\W}_1 \tilde{\w}_0] \\
	=& \z_0^{\top} (\Tilde{\mathbf{\Sigma}}_0 + \tilde{\W}_1^{\top} \Tilde{\mathbf{\Sigma}}_1 \tilde{\W}_1) \z_0 
	+ \z_0^{\top} [-2 y \Tilde{\mathbf{\Sigma}}_0 \tilde{\w}_0] + \z_0^{\top} [-2 \tilde{\W}_1^{\top} \Tilde{\mathbf{\Sigma}}_1] \z_1 + \tilde{\w}_0^{\top} \Tilde{\mathbf{\Sigma}}_0 \tilde{\w}_0 
	\end{split}
	\end{align}
	which is exactly same as in \eqref{eq:26}. Now we check the positive-definiteness of the precision matrix $\begin{bmatrix}
	\Tilde{\mathbf{\Sigma}}_0 + \tilde{\W}_1^{\top} \Tilde{\mathbf{\Sigma}}_1 \tilde{\W}_1 & -(\Tilde{\mathbf{\Sigma}}_1 \tilde{\W}_1)^{\top} \\
	-(\Tilde{\mathbf{\Sigma}}_1 \tilde{\W}_1) & \Tilde{\mathbf{\Sigma}}_1 
	\end{bmatrix} $ in the claim. Note $\Tilde{\mathbf{\Sigma}}_0$ and $\Tilde{\mathbf{\Sigma}}_1$ are both positive-definite, as they are precision matrices of normal distribution. Consider any vector $\x \neq \mathbf{0}$ and write $\x = (\x_0, \x_1)$ where $\x_0 \in \R^{n_0}$ and $\x_1 \in \R^{n_1}$, then
	\begin{align} \label{eq:29}
	\begin{split}
	&\begin{bmatrix}
	\x_0 \\
	\x_1
	\end{bmatrix} ^{\top}
	\begin{bmatrix}
	\Tilde{\mathbf{\Sigma}}_0 + \tilde{\W}_1^{\top} \Tilde{\mathbf{\Sigma}}_1 \tilde{\W}_1 & -(\Tilde{\mathbf{\Sigma}}_1 \tilde{\W}_1)^{\top} \\
	-(\Tilde{\mathbf{\Sigma}}_1 \tilde{\W}_1) & \Tilde{\mathbf{\Sigma}}_1 
	\end{bmatrix}
	\begin{bmatrix}
	\x_0 \\
	\x_1
	\end{bmatrix} \\
	=& \x_0^{\top} \Tilde{\mathbf{\Sigma}}_0 \x_0^{\top} 
	+ \x_0^{\top} \tilde{\W}_1^{\top} \Tilde{\mathbf{\Sigma}}_1 \tilde{\W}_1 \x_0 
	- 2 \x_0^{\top} \Tilde{\mathbf{\Sigma}}_1 \tilde{\W}_1 \x_1 
	+ \x_1^{\top} \Tilde{\mathbf{\Sigma}}_1 \x_1 \\
	=& \| \Tilde{\mathbf{\Sigma}}_0^{1/2} \x_0 \|_2^2 
	+ \| \Tilde{\mathbf{\Sigma}}_1^{1/2} \tilde{\W}_1 \x_0 \|_2^2 
	+ \| \Tilde{\mathbf{\Sigma}}_1^{1/2} \x_1^{\top} \|_2^2 
	- 2 \x_0^{\top} \Tilde{\mathbf{\Sigma}}_1 \tilde{\W}_1 \x_1 \\
	\geq & \| \Tilde{\mathbf{\Sigma}}_0^{1/2} \x_0 \|_2^2 
	+ \| \Tilde{\mathbf{\Sigma}}_1^{1/2} \tilde{\W}_1 \x_0 \|_2^2 + \| \Tilde{\mathbf{\Sigma}}_1^{1/2} \x_1^{\top}  \|_2^2
	- 2 \| \Tilde{\mathbf{\Sigma}}_1^{1/2} \tilde{\W}_1 \x_0 \|_2 \| \Tilde{\mathbf{\Sigma}}_2^{1/2} \x_1 \|_2 \\
	=& \| \Tilde{\mathbf{\Sigma}}_0^{1/2} \x_0 \|_2^2
	\end{split}
	\end{align}
	by the Cauchy-Schward inequality. It is easy to see that there are two cases: either $\x_0 = \mathbf{0}$ or not. If $\x_0 \neq \mathbf{0}$, then the RHS of \eqref{eq:29} is positive. Otherwise, if $\x_0 = \mathbf{0}$, then from \eqref{eq:29} before the Cauchy-Schwarz inequality is applied, we can see 
	\begin{align}
	&\begin{bmatrix}
	\mathbf{0} \\
	\x_1
	\end{bmatrix} ^{\top}
	\begin{bmatrix}
	\Tilde{\mathbf{\Sigma}}_0 + \tilde{\W}_1^{\top} \Tilde{\mathbf{\Sigma}}_1 \tilde{\W}_1 & -(\Tilde{\mathbf{\Sigma}}_1 \tilde{\W}_1)^{\top} \\
	-(\Tilde{\mathbf{\Sigma}}_1 \tilde{\W}_1) & \Tilde{\mathbf{\Sigma}}_1 
	\end{bmatrix}
	\begin{bmatrix}
	\mathbf{0} \\
	\x_1
	\end{bmatrix} = \x_1^{\top} \Tilde{\mathbf{\Sigma}}_1 \x_1
	\end{align}
	which is zero iff $\x_1 = \mathbf{0}$ as well. Therefore we conclude the precision matrix is indeed positive definite. 
	
\end{proof}

Now based on Lemma ~\ref{lemma:4.1}, we give an upper bound on the KL divergence between the distribution of $(\z_1, y) ; \tilde{\W}$ and a prior distribution, where $\tilde{\W} = (\tilde{\w}_0, \tilde{\W}_1, \tilde{\W}_2)$. This upper bound of the KL divergence sheds light on the analysis for networks with more layers.

\begin{lemma} \label{lemma:4.2}
	Let $\mathbb{P}_{(\z_1, y);\tilde{\W}}$ be the joint distribution of $(\z_1, y)$ parametrized by $\tilde{\W}$, then we have
	$$\mathbb{KL}(\mathbb{P}_{(\z_1, y);\tilde{\W}} || \mathbb{Q}) \leq \frac{1}{2} \left[ \frac{\sigma^2}{\tau^2} \sum_i (1+d_{1,i}^2) + \frac{1}{\tau^2} \| \tilde{\W}_1 \tilde{\w}_0 \|_2^2 + n_1 \ln \left(\frac{\tau^2}{\sigma^2} \right) - n_1 \right]$$
	where $\mathbb{Q} = N(\mathbf{0}, \tau^2 \I_{n_1}) \times \text{Uniform}\{-1, +1\}$ is a prior distribution, $\tau$ is a fixed constant, and $\text{diag}_i(d_{1,i})$ is the diagonal matrix of singular values in the decomposition of $\tilde{\W}_1$.
\end{lemma}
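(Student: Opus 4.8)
The plan is to marginalize out $\z_0$ to obtain the law of $\z_1$ given $y$, and then exploit the fact that the discrete label $y$ is uniform under both $\mathbb{P}_{(\z_1,y);\tilde{\W}}$ and the prior $\mathbb{Q}$, so that the joint KL collapses to a single Gaussian-to-Gaussian divergence. From the composition in \eqref{eq:32}, writing $\z_1 = \tilde{\W}_1(y\tilde{\w}_0 + \boldsymbol{\epsilon}_0) + \boldsymbol{\epsilon}_1$ with independent $\boldsymbol{\epsilon}_0 \sim N(\mathbf{0},\sigma^2\I_{n_0})$ and $\boldsymbol{\epsilon}_1 \sim N(\mathbf{0},\sigma^2\I_{n_1})$, I would read off $\z_1 \mid y;\tilde{\W} \sim N(y\,\tilde{\W}_1\tilde{\w}_0,\,\mathbf{C})$ with $\mathbf{C} := \sigma^2(\I_{n_1} + \tilde{\W}_1\tilde{\W}_1^{\top})$; equivalently this is the marginal of the joint Gaussian in Lemma~\ref{lemma:4.1}.

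Next I would split the KL. Since $\mathbb{P}(\z_1,y) = \mathbb{P}(y)\mathbb{P}(\z_1\mid y)$ and $\mathbb{Q}(\z_1,y) = \mathbb{Q}(y)\mathbb{Q}(\z_1)$ with $\mathbb{P}(y)=\mathbb{Q}(y)=\text{Uniform}\{-1,+1\}$ and $\mathbb{Q}(\z_1)=N(\mathbf{0},\tau^2\I_{n_1})$ not depending on $y$, the label log-ratio cancels and
\begin{align}
\mathbb{KL}(\mathbb{P}_{(\z_1,y);\tilde{\W}} \,\|\, \mathbb{Q}) = \mathbb{E}_{y}\!\left[\mathbb{KL}\!\left(N(y\,\tilde{\W}_1\tilde{\w}_0,\mathbf{C}) \,\|\, N(\mathbf{0},\tau^2\I_{n_1})\right)\right].
\end{align}

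I would then apply the closed-form Gaussian KL identity to the inner term. Because $\|y\,\tilde{\W}_1\tilde{\w}_0\|_2^2 = \|\tilde{\W}_1\tilde{\w}_0\|_2^2$ for either value of $y\in\{-1,+1\}$, the outer expectation is trivial, yielding
\begin{align}
\mathbb{KL}(\mathbb{P}_{(\z_1,y);\tilde{\W}} \,\|\, \mathbb{Q}) = \tfrac{1}{2}\!\left[\tfrac{1}{\tau^2}\Tr(\mathbf{C}) + \tfrac{1}{\tau^2}\|\tilde{\W}_1\tilde{\w}_0\|_2^2 - n_1 + n_1\ln\tfrac{\tau^2}{\sigma^2} - \sum_i\ln(1+d_{1,i}^2)\right].
\end{align}
Diagonalizing with the SVD $\tilde{\W}_1 = \U\,\text{diag}_i(d_{1,i})\,\V^{\top}$ makes the eigenvalues of $\mathbf{C}$ equal to $\sigma^2(1+d_{1,i}^2)$, so $\Tr(\mathbf{C}) = \sigma^2\sum_i(1+d_{1,i}^2)$ and $\ln\det\mathbf{C} = n_1\ln\sigma^2 + \sum_i\ln(1+d_{1,i}^2)$; this is exactly how the log-determinant term $\ln(\det(\tau^2\I_{n_1})/\det\mathbf{C})$ produces $n_1\ln(\tau^2/\sigma^2) - \sum_i\ln(1+d_{1,i}^2)$ and how the trace term produces $\tfrac{\sigma^2}{\tau^2}\sum_i(1+d_{1,i}^2)$.

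The claimed bound then follows immediately by discarding the nonpositive term $-\tfrac{1}{2}\sum_i\ln(1+d_{1,i}^2)\le 0$, which holds since each squared singular value satisfies $d_{1,i}^2\ge 0$. I expect the only genuine subtlety to be the factorization step: one must confirm that the product structure of the prior $\mathbb{Q}$ over the discrete label and the continuous vector makes the joint KL split exactly into the expected Gaussian KL with no leftover label contribution. Everything afterward is a mechanical application of the Gaussian KL formula together with the SVD, and the single inequality enters only at the final drop of the log-determinant term.
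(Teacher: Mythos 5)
Your proposal is correct and reaches the paper's bound through the same overall skeleton: split the joint KL over the uniform label $y$ into an average of Gaussian-to-Gaussian divergences (the label log-ratio cancelling because $\mathbb{P}(y)=\mathbb{Q}(y)=\tfrac12$), apply the closed-form Gaussian KL, and discard the nonpositive $-\sum_i\ln(1+d_{1,i}^2)$ from the log-determinant. Where you genuinely diverge is in how you obtain the marginal law of $\z_1\mid y$: the paper goes through Lemma~\ref{lemma:4.1}, extracts $\mathrm{Cov}(\z_1\mid y)=\sigma^2\{\I_{n_1}-\tilde{\W}_1(\I_{n_0}+\tilde{\W}_1^{\top}\tilde{\W}_1)^{-1}\tilde{\W}_1^{\top}\}^{-1}$ by block inversion of the joint precision matrix, and then needs a separate eigenvalue argument (the Rayleigh-quotient computation showing $\mu=\|\tilde{\W}_1\x\|_2^2/(\|\x\|_2^2+\|\tilde{\W}_1\x\|_2^2)$ plus the SVD) to identify the spectrum as $\{1+d_{1,i}^2\}$. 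You instead read the covariance directly from the forward composition as $\mathbf{C}=\sigma^2(\I_{n_1}+\tilde{\W}_1\tilde{\W}_1^{\top})$, whose spectrum is immediate from the SVD; the two expressions agree by the Woodbury identity. Your route is more elementary and shorter for this one-layer case; the paper's precision-matrix route is the one that generalizes to the recursive $\M_d(\tilde{\W})$ structure needed for deeper networks, which is presumably why the authors set it up that way even here.
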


\begin{proof}
	By Lemma ~\ref{lemma:4.1}, taking the inverse of the precision matrix, we know 
	\begin{align}
	Cov(\z_0, \z_1 | y) = \begin{bmatrix}
	\Tilde{\mathbf{\Sigma}}_0 + \tilde{\W}_1^{\top} \Tilde{\mathbf{\Sigma}}_1 \tilde{\W}_1 & -(\Tilde{\mathbf{\Sigma}}_1 \tilde{\W}_1)^{\top} \\
	-(\Tilde{\mathbf{\Sigma}}_1 \tilde{\W}_1) & \Tilde{\mathbf{\Sigma}}_1
	\end{bmatrix}^{-1}
	\end{align}
	
	where $\Tilde{\mathbf{\Sigma}}_0 = (\sigma^2 \I_{n_0})^{-1}$ and $\Tilde{\mathbf{\Sigma}}_1 = (\sigma^2 \I_{n_1})^{-1}$. By the block matrix inversion formula, 
	\begin{align}
	Cov(\z_1 | y) = (\Tilde{\mathbf{\Sigma}}_1 - (\Tilde{\mathbf{\Sigma}}_1 \tilde{\W}_1) (\Tilde{\mathbf{\Sigma}}_0 + \tilde{\W}_1^{\top} \Tilde{\mathbf{\Sigma}}_1 \tilde{\W}_1)^{-1} (\Tilde{\mathbf{\Sigma}}_1 \tilde{\W}_1)^{\top} )^{-1}
	\end{align}
	
	Denote the density of $\mathbb{Q}$ by $q(\z, y)$, and denote the density of $\mathbb{P}_{(\z_1, y);\tilde{\W}}$ by $p(\z, y)$. Also denote the marginal distribution of $\z$ under $\mathbb{Q}$ and $\mathbb{P}_{(\z_1, y);\tilde{\W}}$ by $q(\cdot)$ and $p(\cdot)$, respectively. Note $q(\z)$ is density of $N(\mathbf{0}, \tau^2 \I_{n_1})$, and $q(y)$ and $p(y)$ are both $\frac{1}{2}$ for $y \ \in \{-1, +1\}$. The KL divergence between $\mathbb{P}_{(\z_1, y);\tilde{\W}}$ and $\mathbb{Q}$ is
	\begin{align}
	\begin{split}
	\mathbb{KL}(\mathbb{P}_{(\z_1, y);\tilde{\W}} || \Q) 
	&= \sum_{y \ \in \{-1, +1\}} \int p(\z, y) \log \frac{p(\z, y)}{q(\z, y)} d\z \\
	&= \sum_{y \ \in \{-1, +1\}} \int p(\z | y) p(y) \log \frac{p(\z | y) p(y)}{q(\z | y) q(y)} d\z \\
	&= \frac{1}{2} \int p(\z | y = -1) \log \frac{p(\z | y = -1)}{q(\z)} d\z 
	+ \frac{1}{2} \int p(\z | y = +1) \log \frac{p(\z | y = +1)}{q(\z)} d\z \\
	&= \frac{1}{2} \mathbb{KL}(\mathbb{P}_{(\z_1 | y = -1);\tilde{\W}} || N(\mathbf{0}, \tau^2 \I_{n_1})) 
	+ \frac{1}{2} \mathbb{KL}(\mathbb{P}_{(\z_1 | y = +1);\tilde{\W}} || N(\mathbf{0}, \tau^2 \I_{n_1}))
	\end{split}
	\end{align}
	
	\noindent Note $\mathbb{P}_{(\z_1 | y = -1);\tilde{\W}}$ is a normal distribution with mean $-\tilde{\W}_1 \tilde{\w}_0$ and covariance \\ $\left\{ \Tilde{\mathbf{\Sigma}}_1 - (\Tilde{\mathbf{\Sigma}}_1 \tilde{\W}_1) (\Tilde{\mathbf{\Sigma}}_0 + \tilde{\W}_1^{\top} \Tilde{\mathbf{\Sigma}}_1 \tilde{\W}_1)^{-1} (\Tilde{\mathbf{\Sigma}}_1 \tilde{\W}_1)^{\top} \right\} ^{-1}$. Thus 
	\begin{align}
	\begin{split}
	&\mathbb{KL}(\mathbb{P}_{(\z_1 | y = -1);\tilde{\W}} || N(\mathbf{0}, \tau^2 \I_{n_1})) \\
	=& 
	\frac{1}{2} \underbrace{ \Tr [ (\tau^2 \I_{n_1})^{-1}  (\Tilde{\mathbf{\Sigma}}_1 - (\Tilde{\mathbf{\Sigma}}_1 \tilde{\W}_1) (\Tilde{\mathbf{\Sigma}}_0 + \tilde{\W}_1^{\top} \Tilde{\mathbf{\Sigma}}_1 \tilde{\W}_1)^{-1} (\Tilde{\mathbf{\Sigma}}_1 \tilde{\W}_1)^{\top} )^{-1} ]  }_\textbf{I} \\
	& + \frac{1}{2} \underbrace{\left( (-\tilde{\W}_1 \tilde{\w}_0 - \mathbf{0})^{\top} (\tau^2 \I_{n_1})^{-1} (-\tilde{\W}_1 \tilde{\w}_0 - \mathbf{0}) \right)}_\textbf{II} \\
	& + \frac{1}{2} \underbrace{\ln \left( \frac{\text{det}(\tau^2 \I_{n_1}))}{\text{det}((\Tilde{\mathbf{\Sigma}}_1 - (\Tilde{\mathbf{\Sigma}}_1 \tilde{\W}_1) (\Tilde{\mathbf{\Sigma}}_0 \tilde{\W}_1^{\top} \Tilde{\mathbf{\Sigma}}_1 \tilde{\W}_1)^{-1} (\Tilde{\mathbf{\Sigma}}_1 \tilde{\W}_1)^{\top} )^{-1})} \right) }_\textbf{III} - \frac{n_1}{2}
	\end{split}
	\end{align}
	
	Recall $\Tilde{\mathbf{\Sigma}}_0 = (\sigma^2 \I_{n_0})^{-1}$ and $\Tilde{\mathbf{\Sigma}}_1 = (\sigma^2 \I_{n_1})^{-1}$. Thus
	\begin{align}
	\begin{split}
	\textbf{I} =& \frac{1}{\tau^2} \Tr \left[  \left\{ \Tilde{\mathbf{\Sigma}}_1 - (\Tilde{\mathbf{\Sigma}}_1 \tilde{\W}_1) (\Tilde{\mathbf{\Sigma}}_0 + \tilde{\W}_1^{\top} \Tilde{\mathbf{\Sigma}}_1 \tilde{\W}_1)^{-1} (\Tilde{\mathbf{\Sigma}}_1 \tilde{\W}_1)^{\top} \right\}^{-1} \right] \\
	=& \frac{\sigma^2}{\tau^2} \Tr \left[ \left\{ \I_{n_1} - \tilde{\W}_1 (\I_{n_0} + \tilde{\W}_1^{\top} \tilde{\W}_1 )^{-1} \tilde{\W}_1^{\top} \right\}^{-1} \right] \\
	=& \frac{\sigma^2}{\tau^2} \sum_{i=1}^{n_1} \lambda_i \left( \left\{ \I_{n_1} - \tilde{\W}_1 (\I_{n_0} + \tilde{\W}_1^{\top} \tilde{\W}_1 )^{-1} \tilde{\W}_1^{\top} \right\}^{-1} \right)
	\end{split}
	\end{align}
	where $\lambda_i(\cdot)$ denotes the $i$-th largest eigenvalue of a matrix. 
	
	It is known that the eigenvalues of $(\I-\mathbf{A})^{-1}$ are $\{ \frac{1}{1-\lambda_i(\mathbf{A})} \}$ for positive semi-definite matrix $\mathbf{A}$ with eigenvalues less than 1. Thus we need to show that $\tilde{\W}_1 (\I_{n_0} + \tilde{\W}_1^{\top} \tilde{\W}_1 )^{-1} \tilde{\W}_1^{\top}$ is positive semi-definite and has all eigenvalues less than 1. It is easy to see that $\tilde{\W}_1 (\I_{n_0} + \tilde{\W}_1^{\top} \tilde{\W}_1 )^{-1} \tilde{\W}_1^{\top}$ is symmetric, thus it remains to show the eigenvalues are in $[0,1)$. 
	
	It is also known that $\tilde{\W}_1 (\I_{n_0} + \tilde{\W}_1^{\top} \tilde{\W}_1 )^{-1} \tilde{\W}_1^{\top}$ and $(\I_{n_0} + \tilde{\W}_1^{\top} \tilde{\W}_1 )^{-1} \tilde{\W}_1^{\top} \tilde{\W}_1$ have same eigenvalues.  
	Suppose $\mu$ is an eigenvalue of $(\I_{n_0} + \tilde{\W}_1^{\top} \tilde{\W}_1 )^{-1} \tilde{\W}_1^{\top} \tilde{\W}_1$ with corresponding eigenvector $\x \in \R^{n_0} \setminus \{ \mathbf{0} \}$. Thus
	\begin{align}
	\begin{split}
	(\I_{n_0} + \tilde{\W}_1^{\top} \tilde{\W}_1 )^{-1} \tilde{\W}_1^{\top} \tilde{\W}_1 \x &= \mu \x \\
	\tilde{\W}_1^{\top} \tilde{\W}_1 \x &= \mu (\I_{n_0} + \tilde{\W}_1^{\top} \tilde{\W}_1 ) \x \\
	\x^{\top} \tilde{\W}_1^{\top} \tilde{\W}_1 \x &= \mu \x^{\top} (\I_{n_0} + \tilde{\W}_1^{\top} \tilde{\W}_1 ) \x \\
	\mu &= \frac{ \| \tilde{\W}_1 \x \|_2^2 }{ \|\x\|_2^2 + \| \tilde{\W}_1 \x \|_2^2 } \in [0,1)
	\end{split}
	\end{align}
	
	Consider the singular value decomposition of $\tilde{\W}_1 = \U_1 \D_1 \V_1^{\top}$, where $\U_1 \in \R^{n_1 \times n_1}$ is orthonormal, $\D_1 = \text{diag}(d_{1,i}) \in \R^{n_1 \times n_0}$ and $\V_1 \in \R^{n_0 \times n_0}$ is orthonormal. Then 
	\begin{align}
	\begin{split}
	\lambda \left(\tilde{\W}_1 (\I_{n_0} + \tilde{\W}_1^{\top} \tilde{\W}_1 )^{-1} \tilde{\W}_1^{\top} \right) 
	&= \lambda \left( \U_1 \D_1 \V_1^{\top} (\V_1 \V_1^{\top} + \V_1 \D_1^{\top} \U_1^{\top} \U_1 \D_1 \V_1^{\top})^{-1}  \V_1 \D_1^{\top} \U_1^{\top} \right) \\
	&= \lambda \left(\U_1 \D_1 (\I_{n_0} + \D_1^{\top}  \D_1 \right)^{-1} \D_1^{\top} \U_1^{\top}) \\
	&= \lambda \left(\D_1 (\I_{n_0} + \D_1^{\top}  \D_1 )^{-1} \D_1^{\top} \right) \\
	&= \left\{ \frac{d_{1,i}^2}{1+d_{1,i}^2} \right\}
	\end{split}
	\end{align}
	
	Therefore
	\begin{align}
	\begin{split}
	\textbf{I} =& \frac{\sigma^2}{\tau^2} \sum_{i=1}^{n_1} \lambda_i \left( \{ \I_{n_1} - \tilde{\W}_1 (\I_{n_0} + \tilde{\W}_1^{\top} \tilde{\W}_1 )^{-1} \tilde{\W}_1^{\top} \}^{-1} \right) 
	= \frac{\sigma^2}{\tau^2} \sum_{i=1}^{n_1} \frac{1}{1 - \frac{d_{1,i}^2}{1+d_{1,i}^2}} = \frac{\sigma^2}{\tau^2} \sum_{i=1}^{n_1} (1+d_{1,i}^2)
	\end{split}
	\end{align}
	
	\noindent It is easy to see that $\textbf{II} = \frac{1}{\tau^2} \| \tilde{\W}_1 \tilde{\w}_0 \|_2^2$. Furthermore, we have 
	\begin{align}
	\begin{split}
	\textbf{III} &= 
	\ln \left( \frac{\text{det}(\tau^2 \I_{n_1})}{\text{det} \left( \{ \Tilde{\mathbf{\Sigma}}_1 - (\Tilde{\mathbf{\Sigma}}_1 \tilde{\W}_1) (\Tilde{\mathbf{\Sigma}}_0 \tilde{\W}_1^{\top} \Tilde{\mathbf{\Sigma}}_1 \tilde{\W}_1)^{-1} (\Tilde{\mathbf{\Sigma}}_1 \tilde{\W}_1)^{\top} \} ^{-1} \right) } \right) \\
	&= \ln \left( \frac{\tau^{2 n_1}}{\text{det} \left( \sigma^2 \{\I_{n_1} - \tilde{\W}_1 (\I_{n_0} + \tilde{\W}_1^{\top} \tilde{\W}_1 )^{-1} \tilde{\W}_1^{\top} \}^{-1} \right) } \right)  \\
	&= \ln \left( \frac{\tau^{2 n_1}}{\text{det} \left( \sigma^2 \prod_{i=1}^{n_1} \lambda_i \left( \{ \I_{n_1} - \tilde{\W}_1 (\I_{n_0} + \tilde{\W}_1^{\top} \tilde{\W}_1 )^{-1} \tilde{\W}_1^{\top} \}^{-1} \right) \right) } \right) \\
	&= \ln \left( \frac{\tau^{2 n_1}}{ \sigma^{2 n_1} \prod_{i=1}^{n_1} \lambda_i \left( \{ \I_{n_1} - \tilde{\W}_1 (\I_{n_0} + \tilde{\W}_1^{\top} \tilde{\W}_1 )^{-1} \tilde{\W}_1^{\top} \}^{-1} \right) } \right) \\
	&= \ln \left( \frac{\tau^{2 n_1}}{ \sigma^{2 n_1} \prod_{i=1}^{n_1} (1+d_{1,i}^2) } \right) \leq \ln \left( \frac{\tau^{2 n_1}}{ \sigma^{2 n_1}} \right) = n_1 \ln \left(\frac{\tau^2}{\sigma^2} \right)
	\end{split}
	\end{align}
	
	Thus $\mathbb{KL}(\mathbb{P}_{(\z_1 | y = -1);\tilde{\W}} || N(\mathbf{0}, \tau^2 \I_{n_1})) \leq \frac{1}{2} \left[ \frac{\sigma^2}{\tau^2} \sum_i 1+d_{1,i}^2 + \frac{1}{\tau^2} \| \tilde{\W}_1 \tilde{\w}_0 \|_2^2 + n_1 \ln \left(\frac{\tau^2}{\sigma^2} \right) - n_1 \right]$. Similar reasoning gives the same upper bound on $\mathbb{KL}(\mathbb{P}_{(\z_1 | y = +1);\tilde{\W}} || N(\mathbf{0}, \tau^2 \I_{n_1}))$. 
\end{proof}

\subsection{Two hidden layers}

Now we consider a neural network with the same backward data generatig process with 2 hidden layers, similar to eq (19) in the main text: 
\begin{align} \label{eq:18}
\begin{split}
y &\sim \text{Uniform}\{-1, +1\} \\
\z_0 | y &\sim N(y \tilde{\w}_0, \text{covar} = \sigma^2 \I_{n_0}), \tilde{\w}_0 \in \R^{n_0} \\
\z_1 | \z_0 &\sim N(\tilde{\W}_1 \z_0, \text{covar} = \sigma^2 \I_{n_1}), \tilde{\W}_1 \in \R^{n_1 \times n_0} \\
\x := \z_2 | \z_1 &\sim N(\tilde{\W}_2 \z_1, \text{covar} = \sigma^2 \I_{n_2}= \sigma^2 \I_{p}), \tilde{\W}_2 \in \R^{n_2 \times n_1} = \R^{p \times n_{1}} \\
\end{split}
\end{align}
where $\sigma$ is a constant. We will use $\z_2$ and $\x$ interchangeably, and such definition of $\x$ helps us present the arguments. We will also use $n_2$ and $p$ interchangeably.

In this section we have similar analysis for the entire network. We first generalize the technique in Lemma~\ref{lemma:4.1} to $(\z_0, \z_1, \x) | y; \tilde{\W}$, where $\tilde{\W} = (\tilde{\w}_0, \tilde{\W}_1, \tilde{\W}_2)$.

\begin{lemma} [Joint distribution of $(\z_0, \z_1, \x) | y ; \tilde{\W}$] \label{lemma:4.3}  
	The random variable $(\z_0, \z_1, \x) | y ; \tilde{\W}$ is normally distributed with mean $(y \tilde{\w}_0, y \tilde{\W_1} \tilde{\w}_0, y \tilde{\W_2} \tilde{\W_1} \tilde{\w}_0)$ and covariance matrix
	\begin{align}
	\begin{split}
	\begin{bmatrix}
	\Tilde{\mathbf{\Sigma}}_0 + \tilde{\W_1}^{\top} \Tilde{\mathbf{\Sigma}}_1 \tilde{\W_1} & -(\Tilde{\mathbf{\Sigma}}_1 \tilde{\W_1})^{\top} & \mathbf{0} \\
	-(\Tilde{\mathbf{\Sigma}}_1 \tilde{\W_1}) & \Tilde{\mathbf{\Sigma}}_1 + \tilde{\W_2}^{\top} \Tilde{\mathbf{\Sigma}}_2 \tilde{\W_2} & -(\Tilde{\mathbf{\Sigma}}_2 \tilde{\W_2})^{\top} \\
	\mathbf{0} & -(\Tilde{\mathbf{\Sigma}}_2 \tilde{\W_2}) & \Tilde{\mathbf{\Sigma}}_2
	\end{bmatrix}^{-1} 
	\end{split}
	\end{align}
	where $\Tilde{\mathbf{\Sigma}}_i := \left( \sigma^2 \I_{n_i} \right)^{-1}$ is the precision matrix of $\z_i$ as defined in \eqref{eq:18}.
\end{lemma}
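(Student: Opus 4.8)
The plan is to mirror the single-layer computation in Lemma~\ref{lemma:4.1}, extending it to the three-variable chain $\z_0 \mapsto \z_1 \mapsto \x$. First I would invoke the Markov structure of \eqref{eq:18} to factor the conditional joint density as $p(\z_0, \z_1, \x \mid y) = p(\z_0 \mid y)\, p(\z_1 \mid \z_0)\, p(\x \mid \z_1)$, so that, up to its normalizing constant, it is proportional to $\exp(-\tfrac{1}{2} Q)$ with
\begin{align}
\begin{split}
Q \;=\;& (\z_0 - y\tilde{\w}_0)^{\top} \Tilde{\mathbf{\Sigma}}_0 (\z_0 - y\tilde{\w}_0)
+ (\z_1 - \tilde{\W}_1 \z_0)^{\top} \Tilde{\mathbf{\Sigma}}_1 (\z_1 - \tilde{\W}_1 \z_0) \\
&+ (\x - \tilde{\W}_2 \z_1)^{\top} \Tilde{\mathbf{\Sigma}}_2 (\x - \tilde{\W}_2 \z_1).
\end{split}
\end{align}
Since $Q$ is a quadratic form in $(\z_0, \z_1, \x)$, this already shows the conditional is jointly normal, and it remains to read off the precision matrix and the mean and to confirm positive-definiteness.

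Next I would expand $Q$ and collect the quadratic, linear, and constant terms in $(\z_0, \z_1, \x)$. The three diagonal blocks of the precision matrix arise from $\z_0^{\top}(\Tilde{\mathbf{\Sigma}}_0 + \tilde{\W}_1^{\top}\Tilde{\mathbf{\Sigma}}_1\tilde{\W}_1)\z_0$, $\z_1^{\top}(\Tilde{\mathbf{\Sigma}}_1 + \tilde{\W}_2^{\top}\Tilde{\mathbf{\Sigma}}_2\tilde{\W}_2)\z_1$, and $\x^{\top}\Tilde{\mathbf{\Sigma}}_2\x$; the sub- and super-diagonal blocks arise from the cross-terms $-2\z_1^{\top}\Tilde{\mathbf{\Sigma}}_1\tilde{\W}_1\z_0$ and $-2\x^{\top}\Tilde{\mathbf{\Sigma}}_2\tilde{\W}_2\z_1$; and crucially there is no $\z_0$--$\x$ cross-term, since $\x$ couples only to $\z_1$. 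This forces the two corner blocks to vanish and yields exactly the tri-diagonal block structure claimed. The linear terms in $Q$ then pin down the mean, which I would cross-check directly by iterated expectation: $\E[\z_0 \mid y] = y\tilde{\w}_0$, $\E[\z_1 \mid y] = \tilde{\W}_1\E[\z_0 \mid y] = y\tilde{\W}_1\tilde{\w}_0$, and $\E[\x \mid y] = \tilde{\W}_2\E[\z_1 \mid y] = y\tilde{\W}_2\tilde{\W}_1\tilde{\w}_0$.

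For positive-definiteness, rather than repeat the Cauchy--Schwarz bookkeeping of \eqref{eq:29}, I would read it off directly from the sum-of-squares form: with $\Tilde{\mathbf{\Sigma}}_i = (\sigma^2\I_{n_i})^{-1}$, the centered quadratic form equals $\|\Tilde{\mathbf{\Sigma}}_0^{1/2}\z_0\|_2^2 + \|\Tilde{\mathbf{\Sigma}}_1^{1/2}(\z_1 - \tilde{\W}_1\z_0)\|_2^2 + \|\Tilde{\mathbf{\Sigma}}_2^{1/2}(\x - \tilde{\W}_2\z_1)\|_2^2 \geq 0$, and equality forces $\z_0 = \mathbf{0}$, hence $\z_1 = \tilde{\W}_1\z_0 = \mathbf{0}$, hence $\x = \tilde{\W}_2\z_1 = \mathbf{0}$ by back-substitution, so the form vanishes only at the origin and the precision matrix is positive-definite.

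I expect the main obstacle to be purely organizational: the term-by-term expansion of $Q$ and its matching against the $3\times 3$ block precision matrix is appreciably more involved than the $2\times 2$ case of Lemma~\ref{lemma:4.1}, and some care is needed to confirm that the off-diagonal corner blocks indeed cancel. The positive-definiteness step, by contrast, is essentially immediate once the sum-of-squares representation is in hand, as each summand is a genuine squared norm and the back-substitution is a one-layer-deeper analogue of the two cases treated after \eqref{eq:29}.
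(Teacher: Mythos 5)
Your proposal is correct and follows essentially the same route as the paper: factor $p(\z_0,\z_1,\x\mid y)$ along the Markov chain, match the exponent of the resulting product against the quadratic form of the claimed tri-diagonal block precision matrix, and verify positive-definiteness. Your sum-of-squares argument for positive-definiteness (writing the centered quadratic form as $\|\Tilde{\mathbf{\Sigma}}_0^{1/2}\z_0\|_2^2 + \|\Tilde{\mathbf{\Sigma}}_1^{1/2}(\z_1-\tilde{\W}_1\z_0)\|_2^2 + \|\Tilde{\mathbf{\Sigma}}_2^{1/2}(\x-\tilde{\W}_2\z_1)\|_2^2$ and back-substituting) is in fact cleaner than the paper's Cauchy--Schwarz case analysis and reaches the same conclusion.
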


\begin{proof}
	Similarly, we want to check the exponent in $p(\z_0, \z_1, \x | y) = p(\z_0 | y)p(\z_1 | \z_0) p(\x | \z_1)$ matches that of the distribution stated in the claim. 
	
	The exponent of the density of the multivariate normal proposed in the lemma, ignoring the $-\frac{1}{2}$ factor, is
	\begin{align} \label{eq:40}
	\begin{split}
	&\begin{bmatrix}
	\z_0 - y \tilde{\w}_0 \\
	\z_1 - y \tilde{\W}_1 \tilde{\w}_0 \\
	\x - y \tilde{\W}_2 \tilde{\W}_1 \tilde{\w}_0
	\end{bmatrix}^{\top}     
	\begin{bmatrix}
	\Tilde{\mathbf{\Sigma}}_0 + \tilde{\W}_1^{\top} \Tilde{\mathbf{\Sigma}}_1 \tilde{\W}_1 & -(\Tilde{\mathbf{\Sigma}}_1 \tilde{\W}_1)^{\top} & \mathbf{0} \\
	-(\Tilde{\mathbf{\Sigma}}_1 \tilde{\W}_1) & \Tilde{\mathbf{\Sigma}}_1 + \tilde{\W}_2^{\top} \Tilde{\mathbf{\Sigma}}_2 \tilde{\W}_2 & -(\Tilde{\mathbf{\Sigma}}_2 \tilde{\W}_2)^{\top} \\
	\mathbf{0} & -(\Tilde{\mathbf{\Sigma}}_2 \tilde{\W}_2) & \Tilde{\mathbf{\Sigma}}_2
	\end{bmatrix}
	\begin{bmatrix}
	\z_0 - y \tilde{\w}_0 \\
	\z_1 - y \tilde{\W}_1 \tilde{\w}_0 \\
	\x - y \tilde{\W}_2 \tilde{\W}_1 \tilde{\w}_0å
	\end{bmatrix} \\
	\\
	=&  \{ (\z_0 - y \tilde{\w}_0)^{\top} \Tilde{\mathbf{\Sigma}}_0 (\z_0 - y \tilde{\w}_0) 
	+ (\z_0 - y \tilde{\w}_0)^{\top} \tilde{\W}_1^{\top} \Tilde{\mathbf{\Sigma}}_1 \tilde{\W}_1 (\z_0 - y \tilde{\w}_0) \\
	&- (\z_0 - y \tilde{\w}_0)^{\top} (\Tilde{\mathbf{\Sigma}}_1 \tilde{\W}_1)^{\top} (\z_1 - y \tilde{\W}_1 \tilde{\w}_0) 
	- (\mathbf{z}_1 - y \tilde{\W}_1 \tilde{\w}_0)^{\top} (\Tilde{\mathbf{\Sigma}}_1 \tilde{\W}_1) (\z_0 - y \tilde{\w}_0) \\
	&+ (\z_1 - y \tilde{\W}_1 \tilde{\w}_0)^{\top} \Tilde{\mathbf{\Sigma}}_1 (\z_1 - y \tilde{\W}_1 \tilde{\w}_0) \}\\
	& + \{ (\z_1 - y \tilde{\W}_1 \tilde{\w}_0)^{\top} \tilde{\W}_2^{\top} \Tilde{\mathbf{\Sigma}}_2 \tilde{\W}_2 (\z_1 - y \tilde{\W}_1 \tilde{\w}_0) - (\z_1 - y \tilde{\W}_1 \tilde{\w}_0)^{\top} (\Tilde{\mathbf{\Sigma}}_2 \tilde{\W}_2)^{\top} (\z_2 - y \tilde{\W}_2 \tilde{\W}_1 \tilde{\w}_0) \\
	&  -  (\z_2 - y \tilde{\W}_2 \tilde{\W}_1 \tilde{\w}_0)^{\top}  (\Tilde{\mathbf{\Sigma}}_2 \tilde{\W}_2) (\z_1 - y \tilde{\W}_1 \tilde{\w}_0) + (\x - y \tilde{\W}_2 \tilde{\W}_1 \tilde{\w}_0)^{\top} \Tilde{\mathbf{\Sigma}}_2 (\z_2 - y \tilde{\W}_2 \tilde{\W}_1 \tilde{\w}_0) \}  \\
	\end{split}
	\end{align}
	Note that the terms in the first pair of curly braces in \eqref{eq:40} are identical to the part in the exponent of $p(\z_1 , \z_0 | y)$ from Lemma ~\ref{lemma:4.1}.
	
	This means when we compare the exponent of $p(\z_2, \z_1, \z_0 | y) = p(\z_1 , \z_0 | y) p(\z_2 | \z_1)$ with that of the proposed density, the terms in the first pair of curly braces will cancel out. Thus we only need to show the terms in the second pair of curly braces evaluate to $p(\z_2 | \z_1)$ in \eqref{eq:18}:
	
	\begin{align}
	\begin{split}
	&(\z_1 - y \tilde{\W}_1 \tilde{\w}_0)^{\top} \tilde{\W}_2^{\top} \Tilde{\mathbf{\Sigma}}_2 \tilde{\W}_2 (\z_1 - y \tilde{\W}_1 \tilde{\w}_0) - (\z_1 - y \tilde{\W}_1 \tilde{\w}_0)^{\top} (\Tilde{\mathbf{\Sigma}}_2 \tilde{\W}_2)^{\top} (\z_2 - y \tilde{\W}_2 \tilde{\W}_1 \tilde{\w}_0) \\
	&  -  (\z_2 - y \tilde{\W}_2 \tilde{\W}_1 \tilde{\w}_0)^{\top}  (\Tilde{\mathbf{\Sigma}}_2 \tilde{\W}_2) (\z_1 - y \tilde{\W}_1 \tilde{\w}_0) + (\x - y \tilde{\W}_2 \tilde{\W}_1 \tilde{\w}_0)^{\top} \Tilde{\mathbf{\Sigma}}_2 (\z_2 - y \tilde{\W}_2 \tilde{\W}_1 \tilde{\w}_0) \\
	= & \left( \z_1^{\top} \tilde{\W}_2^{\top} \Tilde{\mathbf{\Sigma}}_2 \tilde{\W}_2 \z_1 
	- 2 \z_1^{\top} [ y \tilde{\W}_2^{\top} \Tilde{\mathbf{\Sigma}}_2 \tilde{\W}_2 \tilde{\W}_1 \tilde{\w}_0] 
	+ \tilde{\w}_0^{\top} \tilde{\W}_1^{\top} \tilde{\W}_2^{\top} \Tilde{\mathbf{\Sigma}}_2 \tilde{\W}_2 \tilde{\W}_1 \tilde{\w}_0 \right) \\
	& + \left( - \z_1^{\top} [\tilde{\W}_2^{\top} \Tilde{\mathbf{\Sigma}}_2] \z_2 
	+ \z_1^{\top} [y \tilde{\W}_2^{\top} \Tilde{\mathbf{\Sigma}}_2 \tilde{\W}_2 \tilde{\W}_1 \tilde{\w}_0] 
	+ y \tilde{\w}_0^{\top} \tilde{\W}_1^{\top} \tilde{\W}_2^{\top} \Tilde{\mathbf{\Sigma}}_2 \z_2 
	- \tilde{\w}_0^{\top} \tilde{\W}_1^{\top} \tilde{\W}_2^{\top} \Tilde{\mathbf{\Sigma}}_2 \tilde{\W}_2 \tilde{\W}_1 \tilde{\w}_0 \right) \\
	& + \left( - \z_2^{\top} [\Tilde{\mathbf{\Sigma}}_2 \tilde{\W}_2] \z_1 
	+ \z_2^{\top} [y \Tilde{\mathbf{\Sigma}}_2 \tilde{\W}_2 \tilde{\W}_1 \tilde{\w}_0] 
	å+ y \tilde{\w}_0^{\top} \tilde{\W}_1^{\top} \tilde{\W}_2^{\top} \Tilde{\mathbf{\Sigma}}_2 \tilde{\W_2} \z_1 
	- \tilde{\w}_0^{\top} \tilde{\W}_1^{\top} \tilde{\W}_2^{\top} \Tilde{\mathbf{\Sigma}}_2 \tilde{\W}_2 \tilde{\W}_1 \tilde{\w}_0 \right)\\
	& + \left( \z_2^{\top} \Tilde{\mathbf{\Sigma}}_2 \z_2 - 2 y  \z_2^{\top} \Tilde{\mathbf{\Sigma}}_2 \tilde{\W}_2 \tilde{\W}_1 \tilde{\w}_0 + \tilde{\w}_0^{\top} \tilde{\W}_1^{\top} \tilde{\W}_2^{\top} \Tilde{\mathbf{\Sigma}}_2 \tilde{\W}_2 \tilde{\W}_1 \tilde{\w}_0 \right)
	\end{split}
	\end{align}
	which cancels out to $(\z_2 - \tilde{\W}_2 \z_1)^{\top} \Tilde{\mathbf{\Sigma}}_2 (\z_2 - \tilde{\W}_2 \z_1)$ and is indeed the exponent of the kernel density of $\z_2 | \z_1$.
	
	Now it remains to show the proposed covariance matrix is positive definite, which is equivalent to show that the inverse is positive definite. Consider any vector $\x \neq \mathbf{0}$ and write $\x = (\x_0, \x_1, \x_2)$ where $\x_0 \in \R^{n_0}$, $\x_1 \in \R^{n_1}$ and $\x_2 \in \R^{n_2}$, then, 
	\begin{align} \label{eq:42}
	\begin{split}
	&\begin{bmatrix}
	\x_0 \\
	\x_1 \\
	\x_2
	\end{bmatrix} ^{\top} 
	\begin{bmatrix}
	\Tilde{\mathbf{\Sigma}}_0 + \tilde{\W_1}^{\top} \Tilde{\mathbf{\Sigma}}_1 \tilde{\W_1} & -(\Tilde{\mathbf{\Sigma}}_1 \tilde{\W_1})^{\top} & \mathbf{0} \\
	-(\Tilde{\mathbf{\Sigma}}_1 \tilde{\W_1}) & \Tilde{\mathbf{\Sigma}}_1 + \tilde{\W_2}^{\top} \Tilde{\mathbf{\Sigma}}_2 \tilde{\W_2} & -(\Tilde{\mathbf{\Sigma}}_2 \tilde{\W_2})^{\top} \\
	\mathbf{0} & -(\Tilde{\mathbf{\Sigma}}_2 \tilde{\W_2}) & \Tilde{\mathbf{\Sigma}}_2
	\end{bmatrix}
	\begin{bmatrix}
	\x_0 \\
	\x_1 \\
	\x_2
	\end{bmatrix} \\
	=& \x_0^{\top} (\Tilde{\mathbf{\Sigma}}_0 
	+ \tilde{\W_1}^{\top} \Tilde{\mathbf{\Sigma}}_1 \tilde{\W_1}) \x_0 - \x_0^{\top} (\Tilde{\mathbf{\Sigma}}_1 \tilde{\W}_1)^{\top} \x_1 - \x_1^{\top} (\Tilde{\mathbf{\Sigma}}_1 \tilde{\W}_1) \x_0 
	+ \x_1^{\top} (\Tilde{\mathbf{\Sigma}}_1 
	+ \tilde{\W_2}^{\top} \Tilde{\mathbf{\Sigma}}_2 \tilde{\W_2}) \x_1 \\
	& - \x_1^{\top} (\Tilde{\mathbf{\Sigma}}_2 \tilde{\W}_2)^{\top} \x_2 
	- \x_2^{\top} (\Tilde{\mathbf{\Sigma}}_2 \tilde{\W}_2) \x_1 
	+ \x_2^{\top} \Tilde{\mathbf{\Sigma}}_2 \x_2 \\
	=& \| \Tilde{\mathbf{\Sigma}}_0^{1/2} \x_0 \|_2^2 
	+ \| \Tilde{\mathbf{\Sigma}}_1^{1/2} \tilde{\W}_1 \x_0 \|_2^2 
	- 2 \x_0^{\top} \tilde{\W}_1^{\top} \Tilde{\mathbf{\Sigma}}_1 \x_1 
	+ \| \Tilde{\mathbf{\Sigma}}_1^{1/2} \x_1 \|_2^2 \\
	& + \| \Tilde{\mathbf{\Sigma}}_2^{1/2} \tilde{\W}_2 \x_1 \|_2^2  
	- 2 \x_1^{\top} \tilde{\W}_2^{\top} \Tilde{\mathbf{\Sigma}}_2 \x_2 + \| \Tilde{\mathbf{\Sigma}}_2^{1/2} \x_2 \|_2^2 \\
	\geq & \| \Tilde{\mathbf{\Sigma}}_0^{1/2} \x_0 \|_2^2 
	+ \left( 
	\| \Tilde{\mathbf{\Sigma}}_1^{1/2} \tilde{\W}_1 \x_0 \|_2^2 
	- 2 \| \Tilde{\mathbf{\Sigma}}_1^{1/2} \tilde{\W}_1 \x_0 \|_2 \| \Tilde{\mathbf{\Sigma}}_1^{1/2} \x_1 \|_2
	+ \| \Tilde{\mathbf{\Sigma}}_1^{1/2} \x_1 \|_2^2 
	\right) \\
	&+ \left( 
	\| \Tilde{\mathbf{\Sigma}}_2^{1/2} \tilde{\W}_2 \x_1 \|_2^2 
	- 2 \| \Tilde{\mathbf{\Sigma}}_2^{1/2} \tilde{\W}_2 \x_1 \|_2  \| \Tilde{\mathbf{\Sigma}}_2^{1/2} \x_2 \|_2
	+ \| \Tilde{\mathbf{\Sigma}}_2^{1/2} \x_2 \|_2^2 
	\right) \\
	= & \| \Tilde{\mathbf{\Sigma}}_0^{1/2} \x_0 \|_2^2
	\end{split}
	\end{align}
	which is positive if $\x_0 \neq \mathbf{0}$. Otherwise, if $\x_0 = \mathbf{0}$, then \eqref{eq:42} evaluates to 
	\begin{align}
	\begin{split}
	&\begin{bmatrix}
	\mathbf{0} \\
	\x_1 \\
	\x_2
	\end{bmatrix} ^{\top} 
	\begin{bmatrix}
	\Tilde{\mathbf{\Sigma}}_0 + \tilde{\W_1}^{\top} \Tilde{\mathbf{\Sigma}}_1 \tilde{\W_1} & -(\Tilde{\mathbf{\Sigma}}_1 \tilde{\W_1})^{\top} & \mathbf{0} \\
	-(\Tilde{\mathbf{\Sigma}}_1 \tilde{\W_1}) & \Tilde{\mathbf{\Sigma}}_1 + \tilde{\W_2}^{\top} \Tilde{\mathbf{\Sigma}}_2 \tilde{\W_2} & -(\Tilde{\mathbf{\Sigma}}_2 \tilde{\W_2})^{\top} \\
	\mathbf{0} & -(\Tilde{\mathbf{\Sigma}}_2 \tilde{\W_2}) & \Tilde{\mathbf{\Sigma}}_2
	\end{bmatrix}
	\begin{bmatrix}
	\mathbf{0} \\
	\x_1 \\
	\x_2
	\end{bmatrix} \\
	=& \| \Tilde{\mathbf{\Sigma}}_1^{1/2} \x_1 \|_2^2 + \| \Tilde{\mathbf{\Sigma}}_2^{1/2} \tilde{\W}_2 \x_1 \|_2^2  - 
	2 \x_1^{\top} \tilde{\W}_2^{\top} \Tilde{\mathbf{\Sigma}}_2 \x_2 
	+ \| \Tilde{\mathbf{\Sigma}}_2^{1/2} \x_2 \|_2^2 \\
	\geq & \| \Tilde{\mathbf{\Sigma}}_1^{1/2} \x_1 \|_2^2 + \| \Tilde{\mathbf{\Sigma}}_2^{1/2} \tilde{\W}_2 \x_1 \|_2^2 
	- 2 \| \Tilde{\mathbf{\Sigma}}_2^{1/2} \tilde{\W}_2 \x_1 \|_2 \| \Tilde{\mathbf{\Sigma}}_2^{1/2} \x_2 \|_2 
	+ \| \Tilde{\mathbf{\Sigma}}_2^{1/2} \x_2 \|_2^2 = \| \Tilde{\mathbf{\Sigma}}_1^{1/2} \x_1 \|_2^2
	\end{split}
	\end{align}
	which is positive if $\x_1 \neq \mathbf{0}$. Otherwise, meaning that both $\x_0$ and $\x_1$ are $\mathbf{0}$, \eqref{eq:42} would evaluate to 
	\begin{align}
	\begin{split}
	&\begin{bmatrix}
	\mathbf{0} \\
	\mathbf{0} \\
	\x_2
	\end{bmatrix} ^{\top} 
	\begin{bmatrix}
	\Tilde{\mathbf{\Sigma}}_0 + \tilde{\W_1}^{\top} \Tilde{\mathbf{\Sigma}}_1 \tilde{\W_1} & -(\Tilde{\mathbf{\Sigma}}_1 \tilde{\W_1})^{\top} & \mathbf{0} \\
	-(\Tilde{\mathbf{\Sigma}}_1 \tilde{\W_1}) & \Tilde{\mathbf{\Sigma}}_1 + \tilde{\W_2}^{\top} \Tilde{\mathbf{\Sigma}}_2 \tilde{\W_2} & -(\Tilde{\mathbf{\Sigma}}_2 \tilde{\W_2})^{\top} \\
	\mathbf{0} & -(\Tilde{\mathbf{\Sigma}}_2 \tilde{\W_2}) & \Tilde{\mathbf{\Sigma}}_2
	\end{bmatrix} 
	\begin{bmatrix}
	\mathbf{0} \\
	\mathbf{0} \\
	\x_2
	\end{bmatrix} 
	= \| \Tilde{\mathbf{\Sigma}}_2^{1/2} \x_2 \|_2^2
	\end{split}
	\end{align}
	which is positive if $\x_2 \neq \mathbf{0}$. Thus we showed that the quadratic form of the precision matrix is zero iff $\x = (\x_0, \x_1, \x_2) = \mathbf{0}$. Thus the proposed distribution of $(\z_0, \z_1, \z_2) | y$ is a valid multivariate normal distribution.
\end{proof}

Next we find the marginal distribution of $\z_2 | y ; \tilde{\W}$.

\begin{lemma} [Marginal Distribution of $\z_2 | y ; \tilde{\W}$] \label{lemma:4.4} 
	The random variable $\x | y; \tilde{\W}$ is normally distributed with mean $y \tilde{\W}_2 \tilde{\W}_1 \tilde{\w}_0$, and covariance matrix
	$$\sigma^2 \left[ \I_p - \tilde{\W}_2 \left[ (\I_{n_1} + \tilde{\W}_2^{\top} \tilde{\W}_2) - \tilde{\W}_1 
	(\I_{n_0} + \tilde{\W}_1^{\top} \tilde{\W}_1)^{-1} \tilde{\W}_1^{\top} \right]^{-1} \tilde{\W}_2^{\top} \right]^{-1}. $$
\end{lemma}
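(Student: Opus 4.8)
The plan is to read off the marginal of $\x = \z_2$ directly from the joint Gaussian established in Lemma~\ref{lemma:4.3}, working in the precision (information) form, where marginalization amounts to a Schur complement. Since $(\z_0, \z_1, \z_2)\mid y$ is jointly normal, its $\z_2$-marginal is automatically normal, and the mean is simply the third block of the joint mean vector, namely $y\tilde{\W}_2\tilde{\W}_1\tilde{\w}_0$. Thus the only real work is to identify the covariance, and I would do this without ever forming the full $3\times 3$ block inverse.

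First I would partition the joint precision matrix $\Lambda$ of Lemma~\ref{lemma:4.3} into blocks with $a := (\z_0,\z_1)$ and $b := \z_2$, so that
$$\Lambda_{bb} = \Tilde{\mathbf{\Sigma}}_2, \qquad \Lambda_{ab} = \Lambda_{ba}^{\top} = \begin{bmatrix} \mathbf{0} \\ -(\Tilde{\mathbf{\Sigma}}_2\tilde{\W}_2)^{\top}\end{bmatrix}.$$
The standard marginalization identity states that $\z_2 \mid y$ has precision equal to the Schur complement $\Lambda_{bb} - \Lambda_{ba}\Lambda_{aa}^{-1}\Lambda_{ab}$. The key simplification I would exploit is that the $(\z_0,\z_2)$ block of $\Lambda$ vanishes, so $\Lambda_{ab}$ is supported only on its $\z_1$ component; hence $\Lambda_{ba}\Lambda_{aa}^{-1}\Lambda_{ab}$ collapses to $(\Tilde{\mathbf{\Sigma}}_2\tilde{\W}_2)\,[\Lambda_{aa}^{-1}]_{22}\,(\Tilde{\mathbf{\Sigma}}_2\tilde{\W}_2)^{\top}$, where $[\Lambda_{aa}^{-1}]_{22}$ is the bottom-right $(\z_1,\z_1)$ block of $\Lambda_{aa}^{-1}$.

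Next I would evaluate $[\Lambda_{aa}^{-1}]_{22}$ by a second, nested Schur complement on the $2\times 2$ block matrix $\Lambda_{aa}$, giving
$$[\Lambda_{aa}^{-1}]_{22} = \left( \Tilde{\mathbf{\Sigma}}_1 + \tilde{\W}_2^{\top}\Tilde{\mathbf{\Sigma}}_2\tilde{\W}_2 - (\Tilde{\mathbf{\Sigma}}_1\tilde{\W}_1)(\Tilde{\mathbf{\Sigma}}_0 + \tilde{\W}_1^{\top}\Tilde{\mathbf{\Sigma}}_1\tilde{\W}_1)^{-1}(\Tilde{\mathbf{\Sigma}}_1\tilde{\W}_1)^{\top}\right)^{-1}.$$
Here I can recycle the one-layer computation: the subexpression $\Tilde{\mathbf{\Sigma}}_1 - (\Tilde{\mathbf{\Sigma}}_1\tilde{\W}_1)(\Tilde{\mathbf{\Sigma}}_0 + \tilde{\W}_1^{\top}\Tilde{\mathbf{\Sigma}}_1\tilde{\W}_1)^{-1}(\Tilde{\mathbf{\Sigma}}_1\tilde{\W}_1)^{\top}$ is exactly the marginal precision of $\z_1$ in the one-layer model, which in the proof of Lemma~\ref{lemma:4.2} (via Lemma~\ref{lemma:4.1}) was shown to equal $\tfrac{1}{\sigma^2}(\I_{n_1} - \M_1(\tilde{\W}))$, with $\M_1(\tilde{\W})$ as in Lemma~\ref{lemma:3.4}. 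Using $\Tilde{\mathbf{\Sigma}}_i = (\sigma^2\I_{n_i})^{-1}$ to write $\tilde{\W}_2^{\top}\Tilde{\mathbf{\Sigma}}_2\tilde{\W}_2 = \tfrac{1}{\sigma^2}\tilde{\W}_2^{\top}\tilde{\W}_2$, this yields $[\Lambda_{aa}^{-1}]_{22} = \sigma^2\big((\I_{n_1} + \tilde{\W}_2^{\top}\tilde{\W}_2) - \M_1(\tilde{\W})\big)^{-1}$.

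Finally I would substitute back, again using $\Tilde{\mathbf{\Sigma}}_2 = \tfrac{1}{\sigma^2}\I_p$, so that the two factors of $\sigma^{-2}$ flanking and the one factor of $\sigma^2$ from $[\Lambda_{aa}^{-1}]_{22}$ collapse to a single $\sigma^{-2}$, giving the marginal precision
$$\tfrac{1}{\sigma^2}\left(\I_p - \tilde{\W}_2\big((\I_{n_1}+\tilde{\W}_2^{\top}\tilde{\W}_2) - \M_1(\tilde{\W})\big)^{-1}\tilde{\W}_2^{\top}\right),$$
and inverting this yields exactly the covariance claimed in the statement, with $\M_1(\tilde{\W}) = \tilde{\W}_1(\I_{n_0} + \tilde{\W}_1^{\top}\tilde{\W}_1)^{-1}\tilde{\W}_1^{\top}$. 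I expect the only genuine obstacle to be the bookkeeping of the two nested Schur complements and of the $\sigma^2$ scalings; the two structural facts that keep this tractable are the vanishing $(\z_0,\z_2)$ block, which decouples the outer Schur complement so that only $[\Lambda_{aa}^{-1}]_{22}$ matters, and the reuse of the one-layer identity from Lemma~\ref{lemma:4.2}, which spares me from inverting the inner $2\times 2$ block matrix from scratch.
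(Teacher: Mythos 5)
Your proposal is correct and follows essentially the same route as the paper: both read the marginal precision of $\z_2$ off the joint precision matrix of Lemma~\ref{lemma:4.3} via a Schur complement, exploit the vanishing $(\z_0,\z_2)$ block so that only the $(\z_1,\z_1)$ block of the inner inverse survives, and evaluate that block by a second, nested Schur complement that reproduces the one-layer quantity $\tfrac{1}{\sigma^2}(\I_{n_1}-\M_1(\tilde{\W}))$. The $\sigma^2$ bookkeeping you describe matches the paper's computation exactly.
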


\begin{proof}
	This follows from repeated application of block matrix inversion formula, which gives the covariance matrix of $\x | y; \tilde{\W}$. First recall the precision matrix of $(\z_0, \z_1, \x) | y; \tilde{\W}$ is a tri-diagonal matrix, and take the inverse of the precision matrix of $(\z_0, \z_1, \x) | y; \tilde{\W}$, 
	\begin{align}
	\begin{split}
	\text{Cov} \left( 
	\begin{bmatrix}
	\z_0 \\ \z_1 \\ \x
	\end{bmatrix} | y \right)
	&=
	\begin{bmatrix}
	\Tilde{\mathbf{\Sigma}}_0 + \tilde{\W_1}^{\top} \Tilde{\mathbf{\Sigma}}_1 \tilde{\W_1} & -(\Tilde{\mathbf{\Sigma}}_1 \tilde{\W_1})^{\top} & \mathbf{0} \\
	-(\Tilde{\mathbf{\Sigma}}_1 \tilde{\W_1}) & \Tilde{\mathbf{\Sigma}}_1 + \tilde{\W_2}^{\top} \Tilde{\mathbf{\Sigma}}_2 \tilde{\W_2} & -(\Tilde{\mathbf{\Sigma}}_2 \tilde{\W_2})^{\top} \\
	\mathbf{0} & -(\Tilde{\mathbf{\Sigma}}_2 \tilde{\W_2}) & \Tilde{\mathbf{\Sigma}}_2
	\end{bmatrix} ^{-1} \\
	&= 
	\begin{bmatrix}
	\T & \mathbf{Z}^{\top} \\ \mathbf{Z} & \D
	\end{bmatrix}^{-1} 
	=: \begin{bmatrix}
	\mathbf{S}_1 & \mathbf{S}_2^{\top} \\
	\mathbf{S}_2 & \mathbf{S}_3
	\end{bmatrix}
	\end{split}
	\end{align}
	where $\T = \begin{bmatrix}
	\Tilde{\mathbf{\Sigma}}_0 + \tilde{\W_1}^{\top} \Tilde{\mathbf{\Sigma}}_1 \tilde{\W_1} & -(\Tilde{\mathbf{\Sigma}}_1 \tilde{\W_1})^{\top} \\
	-(\Tilde{\mathbf{\Sigma}}_1 \tilde{\W_1}) & \Tilde{\mathbf{\Sigma}}_1 + \tilde{\W_2}^{\top} \Tilde{\mathbf{\Sigma}}_2 \tilde{\W_2}
	\end{bmatrix}$, $\mathbf{Z} = \begin{bmatrix}
	\mathbf{0} & -(\Tilde{\mathbf{\Sigma}}_2 \tilde{\W}_2)
	\end{bmatrix}$, $\mathbf{Z}^{\top} = \begin{bmatrix}
	\mathbf{0} \\ -(\Tilde{\mathbf{\Sigma}}_2 \tilde{\W}_2)^{\top}
	\end{bmatrix}$, and $\D = \begin{bmatrix} \Tilde{\mathbf{\Sigma}}_2
	\end{bmatrix}$. We are interested in $\mathbf{S}_3$, which is  precisely $\text{Cov}(\x | y; \tilde{\W})$.
	Then by the block matrix inversion formula, we know 
	\begin{align}
	\begin{split}
	\mathbf{S}_3^{-1} 
	&= (\D - \Z \T^{-1} \Z^{\top}) \\
	&= \Tilde{\mathbf{\Sigma}}_2 - 
	\begin{bmatrix}
	\mathbf{0} \\ -(\Tilde{\mathbf{\Sigma}}_2 \tilde{\W}_2)^{\top}
	\end{bmatrix}^{\top} 
	\begin{bmatrix}
	\Tilde{\mathbf{\Sigma}}_0 + \tilde{\W_1}^{\top} \Tilde{\mathbf{\Sigma}}_1 \tilde{\W_1} & -(\Tilde{\mathbf{\Sigma}}_1 \tilde{\W_1})^{\top} \\
	-(\Tilde{\mathbf{\Sigma}}_1 \tilde{\W_1}) & \Tilde{\mathbf{\Sigma}}_1 + \tilde{\W_2}^{\top} \Tilde{\mathbf{\Sigma}}_2 \tilde{\W_2}
	\end{bmatrix}^{-1}
	\begin{bmatrix}
	\mathbf{0} \\ -(\Tilde{\mathbf{\Sigma}}_2 \tilde{\W}_2)^{\top}
	\end{bmatrix} \\
	&= \Tilde{\mathbf{\Sigma}}_2 
	- (\Tilde{\mathbf{\Sigma}}_2 \tilde{\W}_2) \left[ (\Tilde{\mathbf{\Sigma}}_1 
	+ \tilde{\W}_2^{\top} \Tilde{\mathbf{\Sigma}}_2 \tilde{\W}_2) 
	- (\Tilde{\mathbf{\Sigma}}_1 \tilde{\W}_1) (\Tilde{\mathbf{\Sigma}}_0 + \tilde{\W}_1^{\top} \Tilde{\mathbf{\Sigma}}_1 \tilde{\W}_1)^{-1} (\Tilde{\mathbf{\Sigma}}_1 \tilde{\W}_1)^{\top} \right]^{-1} (\Tilde{\mathbf{\Sigma}}_0 \tilde{\W}_1)^{\top} \\
	&= \frac{1}{\sigma^2} 
	\left[\I_p 
	- \tilde{\W}_2 \left[ (\I_{n_1} 
	+ \tilde{\W}_2^{\top} \tilde{\W}_2) 
	- \tilde{\W}_1 (\I_{n_0} 
	+ \tilde{\W}_1^{\top} \tilde{\W}_1)^{-1} \tilde{\W}_1^{\top} \right]^{-1} \tilde{\W}_2^{\top} \right]
	\end{split}
	\end{align}
\end{proof}

Now that we have the marginal distribution of $(\x | y ; \tilde{\W}) = (\z_2 | y ; \tilde{\W})$ , we can again find an upper bound on the KL divergence.

\begin{lemma} [Upper bound on KL divergence between $(\x,y) ; \W$ and a prior distribution $\mathbb{Q}$] \label{lemma:4.5}  
	We have that 
	$$\mathbb{KL}(\mathbb{P}_{(\x, y);\tilde{\W}} || \mathbb{Q}) \leq \frac{1}{2} \left[ \frac{\sigma^2}{\tau^2} \left( p + \sum_{i = 1}^r d_{2,i}^2 (1 + d_{1, 1}^2) \right) + \frac{1}{\tau^2} \| \tilde{\W_2} \tilde{\W_1} \tilde{\w}_0 \|_2^2 + p \ln \left(\frac{\tau^2}{\sigma^2} \right) - p \right]$$
	where $\mathbb{Q} \sim N(\mathbf{0}, \tau^2 \I_{p}) \times \text{Uniform} \{-1, +1\}$ with $\tau$ being a fixed constant, $\mathbb{P}_{(\x, y);\tilde{\W}}$ is the joint distribution of $(\x, y)$ when $\tilde{\W}$ is viewed as a parameter, $\text{diag}_i(d_{1,i})$ is the diagonal matrix of the singular values in the decomposition of $\tilde{\W}_1$ with diagonal entries in decreasing order, $d_{2,1}$ is the largest singular value of $\tilde{\W}_2$, and $r = \text{rank}(\tilde{\W}_1) = \text{rank}(\tilde{\W}_2)$.
\end{lemma}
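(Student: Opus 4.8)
The plan is to mirror the argument of Lemma~\ref{lemma:4.2}, now using the marginal distribution of $\x \mid y; \tilde{\W}$ supplied by Lemma~\ref{lemma:4.4}. First I would condition on the label: since $y \sim \text{Uniform}\{-1,+1\}$ under both $\mathbb{P}_{(\x,y);\tilde{\W}}$ and $\mathbb{Q}$ with identical $y$-marginals, the joint divergence factorizes as
\begin{align}
\begin{split}
\mathbb{KL}(\mathbb{P}_{(\x, y);\tilde{\W}} \| \mathbb{Q}) = &\tfrac{1}{2}\mathbb{KL}(\mathbb{P}_{(\x \mid y=-1);\tilde{\W}} \| N(\mathbf{0}, \tau^2 \I_p)) \\
&+ \tfrac{1}{2}\mathbb{KL}(\mathbb{P}_{(\x \mid y=+1);\tilde{\W}} \| N(\mathbf{0}, \tau^2 \I_p)).
\end{split}
\end{align}
By Lemma~\ref{lemma:4.4} each conditional is Gaussian with mean $\pm\,\tilde{\W}_2 \tilde{\W}_1 \tilde{\w}_0$ and covariance $\sigma^2(\I_p - \M_2)^{-1}$, where I abbreviate $\M_1 := \tilde{\W}_1(\I_{n_0} + \tilde{\W}_1^{\top} \tilde{\W}_1)^{-1}\tilde{\W}_1^{\top}$ and $\M_2 := \tilde{\W}_2[(\I_{n_1} + \tilde{\W}_2^{\top} \tilde{\W}_2) - \M_1]^{-1}\tilde{\W}_2^{\top}$. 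Plugging into the closed-form Gaussian-to-Gaussian KL formula yields three terms, $\textbf{I} + \textbf{II} + \textbf{III} - p$ (the trace term, the mean term, and the log-determinant term), exactly as in Lemma~\ref{lemma:4.2}; because the mean only flips sign between $y = \pm 1$, both conditionals give the identical bound, so it suffices to bound one of them.

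The crux is to re-express $(\I_p - \M_2)^{-1}$. Applying the Woodbury matrix-inversion identity with $\mathbf{B} := (\I_{n_1} + \tilde{\W}_2^{\top} \tilde{\W}_2) - \M_1$, which is positive definite since the proof of Lemma~\ref{lemma:4.2} shows $\M_1$ has eigenvalues in $[0,1)$, I would obtain
\begin{align}
(\I_p - \M_2)^{-1} = \I_p + \tilde{\W}_2 (\I_{n_1} - \M_1)^{-1} \tilde{\W}_2^{\top},
\end{align}
the key cancellation being $\mathbf{B} - \tilde{\W}_2^{\top} \tilde{\W}_2 = \I_{n_1} - \M_1$. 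This identity collapses the problem back onto quantities already understood: by the eigenvalue computation in the proof of Lemma~\ref{lemma:4.2}, $(\I_{n_1} - \M_1)^{-1}$ has eigenvalues $1 + d_{1,i}^2$, hence is bounded above by $(1 + d_{1,1}^2)\I_{n_1}$.

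For term $\textbf{I} = \frac{\sigma^2}{\tau^2}\Tr[(\I_p - \M_2)^{-1}]$, the identity gives $\Tr[(\I_p - \M_2)^{-1}] = p + \Tr[(\I_{n_1} - \M_1)^{-1}\tilde{\W}_2^{\top} \tilde{\W}_2]$, and applying $\Tr[\mathbf{A}\mathbf{B}] \le \lambda_{\max}(\mathbf{A})\Tr[\mathbf{B}]$ for positive semidefinite $\mathbf{B} = \tilde{\W}_2^{\top}\tilde{\W}_2$ yields $\Tr[(\I_p - \M_2)^{-1}] \le p + (1 + d_{1,1}^2)\sum_{i=1}^r d_{2,i}^2$, exactly the claimed trace bound. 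For term $\textbf{III}$, the identity shows $(\I_p - \M_2)^{-1} \succeq \I_p$, so its eigenvalues exceed $1$ and $\det(\sigma^2(\I_p - \M_2)^{-1}) \ge \sigma^{2p}$, whence $\textbf{III} = \ln\frac{\tau^{2p}}{\det(\sigma^2(\I_p-\M_2)^{-1})} \le p\ln(\tau^2/\sigma^2)$; this also confirms the spectrum of $\M_2$ lies in $[0,1)$, so the determinant is well defined. Term $\textbf{II} = \frac{1}{\tau^2}\|\tilde{\W}_2\tilde{\W}_1\tilde{\w}_0\|_2^2$ is immediate from the mean, and assembling $\frac{1}{2}(\textbf{I} + \textbf{II} + \textbf{III} - p)$ produces the stated bound. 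The main obstacle is establishing the Woodbury simplification cleanly and verifying that $\M_2$ is positive semidefinite with eigenvalues strictly below $1$, so that the trace and log-determinant manipulations are legitimate; once that identity is in hand, the remainder reduces to reusing the one-layer eigenvalue analysis of Lemma~\ref{lemma:4.2}.
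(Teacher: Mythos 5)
Your proof is correct, and for the central technical step it takes a genuinely different route from the paper. The paper handles the trace and log-determinant terms via two separate lemmas: one showing $\M_2$ is positive semidefinite with spectrum in $[0,1)$ by a direct quadratic-form and eigenvector computation, and another bounding $\Tr\left[(\I_p - \M_2)^{-1}\right]$ through a Schur-complement decomposition of simultaneously diagonalized blocks combined with the eigenvalue inequality $\lambda_{i+j-1}(\A\B) \leq \lambda_i(\A)\lambda_j(\B)$ from Bhatia, yielding per-eigenvalue bounds $\lambda_i^{\downarrow}\left((\I_p-\M_2)^{-1}\right) \leq 1 + d_{2,i}^2(1+d_{1,1}^2)$. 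Your single Woodbury identity $(\I_p - \M_2)^{-1} = \I_p + \tilde{\W}_2(\I_{n_1} - \M_1)^{-1}\tilde{\W}_2^{\top}$ (valid since $\I_{n_1}-\M_1 \succ 0$ and $\mathbf{B} = (\I_{n_1}-\M_1) + \tilde{\W}_2^{\top}\tilde{\W}_2 \succ 0$) delivers everything at once: positive semidefiniteness of $\M_2$ and $\M_2 \prec \I_p$ follow from $(\I_p-\M_2)^{-1} \succeq \I_p$, the trace bound follows from $\Tr[\A\B] \leq \lambda_{\max}(\A)\Tr[\B]$ with $\Tr[\tilde{\W}_2^{\top}\tilde{\W}_2] = \sum_{i=1}^r d_{2,i}^2$, and the determinant bound is immediate. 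This is cleaner and more elementary for the two-layer case. The one thing the paper's heavier machinery buys is the explicit per-index eigenvalue bound, which it reuses as the inductive hypothesis for the general $d$-layer extension (Lemma~\ref{lemma:4.10}); your identity can recover that too by applying the same Bhatia inequality to $\lambda_i\left((\I_{n_1}-\M_1)^{-1}\tilde{\W}_2^{\top}\tilde{\W}_2\right)$, but as written your argument only establishes the trace bound, which is all this lemma requires.
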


\begin{proof}
	First we let $\M_2(\tilde{\W})$ denote $\tilde{\W}_2 \left[ (\I_{n_1} + \tilde{\W}_2^{\top} \tilde{\W}_2) - \tilde{\W}_1 (\I_{n_0} + \tilde{\W}_1^{\top} \tilde{\W}_1)^{-1} \tilde{\W}_1^{\top} \right]^{-1} \tilde{\W}_2^{\top}$. By a similar reasoning as in Lemma ~\ref{lemma:4.2} we know the KL divergence has
	\begin{align}
	\begin{split}
	\mathbb{KL}(\mathbb{P}_{(\x, y);\tilde{\W}} || \mathbb{Q}) 
	&= \sum_{y \ \in \{-1, +1\}} \int p(\x, y) \log \frac{p(\x, y)}{q(\x, y)} d\x \\
	&= \frac{1}{2} \mathbb{KL}(\mathbb{P}_{(\x | y = -1);\tilde{\W}} || N(\mathbf{0}, \tau^2 \I_{p})) + \frac{1}{2} \mathbb{KL}(\mathbb{P}_{(\x | y = +1);\tilde{\W}} || N(\mathbf{0}, \tau^2 \I_{p}))
	\end{split}
	\end{align}
	where $q(\x, y)$ is the density of $\mathbb{Q}$ and $p(\x,y)$ is the density of $\mathbb{P}_{(\x, y);\tilde{\W}}$.
	
	We focus on one of the two KL-divergences. Recall the formula of KL divergence between two multivariate normal distributions, $\mathbb{P} \sim \text{N}(\mu_1, \Sigma_1)$ and $\mathbb{Q} \sim \text{N}(\mu_2, \Sigma_2)$, $\mu_1, \mu_2 \in \R^k$, $\Sigma_1, \Sigma_2 \in \R^{k \times k}$,
	\begin{align}
	\begin{split}
	\mathbb{KL}(\mathbb{P} || \mathbb{Q}) 
	= \frac{1}{2} \left( \text{Tr}(\Sigma_2^{-1} \Sigma_1) + \left(\mu_2 - \mu_1 \right)^{\top} \Sigma_2^{-1} \left(\mu_2 - \mu_1 \right) + \ln \frac{\text{det} \Sigma_2}{\text{det} \Sigma_1} - k \right)
	\end{split}
	\end{align}
	
	Plug in the formula, 
	\begin{align}
	\begin{split}
	&\mathbb{KL}(\mathbb{P}_{(\x | y = -1);\tilde{\W}} || N(\mathbf{0}, \tau^2 \I_{p})) \\
	=& 
	\frac{1}{2} \frac{\sigma^2}{\tau^2} \underbrace{ \Tr \left( ( \I_p - \M_2(\tilde{\W}) )^{-1} \right)}_\textbf{I}
	+ \frac{1}{2} \underbrace{\left( (- \tilde{\W}_2 \tilde{\W}_1 \tilde{\w}_0 )^{\top} (\tau^2 \I_{p})^{-1} (- \tilde{\W}_2 \tilde{\W_1} \tilde{\w}_0 ) \right)}_\textbf{II} \\
	+& \frac{1}{2} \underbrace{\ln \left( \frac{\text{det}(\tau^2 \I_{p})}{\text{det} (\text{Cov}(\x| y)) 
		} \right) 
	}_\textbf{III} - \frac{p}{2}
	\end{split}
	\end{align}
	
	In next lemma we show that $\M_2(\tilde{\W})$ is positive semi-definite and has largest eigenvalue strictly less than 1.
	
	Observe $\textbf{I} = \frac{\sigma^2}{\tau^2} \Tr \left( (\I_p - \M_2(\tilde{\W}))^{-1} \right)  = \frac{\sigma^2}{\tau^2} \sum_i \frac{1}{1 - \lambda_i(\M_2)}$, where $\lambda(\M_2(\tilde{\W})) = \{\lambda_i(\M_2(\tilde{\W})) \}$ is the set of eigenvalues of $\M_2(\tilde{\W})$ and may contain zero. We claim $\textbf{I} \leq \frac{\sigma^2}{\tau^2} \sum_{i=1}^p (d_{2,i}^2 + d_{2,i}^2 d_{1,1}^2 + 1)$, where $\D_1 = \text{diag}_i (d_{1,i})$ is the diagonal matrix of singular values in the decomposition of $\tilde{\W}_1$, and $\D_2 = \text{diag}_i (d_{2,i})$ is the diagonal matrix of singular values in the decomposition of $\tilde{\W}_2$, and the diagonal entries in both matrices are in decreasing order. We leave the proof to Lemma \ref{lemma:4.7}.
	
	It is easy to see that $\textbf{II} = \frac{\| \tilde{\W}_2 \tilde{\W}_1 \tilde{\w}_0 \|_2^2}{\tau^2}$.
	
	Now we provide an upper bound on term $\textbf{III}$, where the last inequality follows from Lemma \ref{lemma:4.6} below. 
	
	\begin{align}
	\begin{split}
	\textbf{III} 
	&= \ln \left( \frac{\text{det}(\tau^2 \I_{p})}{\text{det} \left( \text{Cov}(\x| y) \right)} \right) 
	= p \ln \left( \frac{\tau^2}{\sigma^2} \right) + \ln \left( \text{det} \left( \I_p - \M_2(\tilde{\W}) \right) \right) \\
	&= p \ln \left( \frac{\tau^2}{\sigma^2} \right) + \ln \left( \prod_{i=1}^p (1 - \lambda_i(\M_2(\tilde{\W}))) \right) 
	\leq p \ln \left( \frac{\tau^2}{\sigma^2} \right)
	\end{split}
	\end{align}
	
\end{proof}

Below we list two lemmas regarding properties of a matrix key to the analysis of the KL divergence in Lemma \ref{lemma:4.5}. Moreover, Lemma \ref{lemma:4.7} will be generalized later in the discussion of a general $d$-layer network. 

\begin{lemma}\label{lemma:4.6}
	Let
	
	\begin{align}
	\begin{split}
	\M_1(\tilde{\W}) := & \tilde{\W}_1 \left( \I_{n_0} + \tilde{\W}_1^{\top} \tilde{\W}_1 \right)^{-1} \tilde{\W}_1^{\top} \\
	\M_2(\tilde{\W}) := & \tilde{\W}_2 \left( \I_{n_1} + \tilde{\W}_2^{\top} \tilde{\W}_2 - \M_1 \right)^{-1} \tilde{\W}_2^{\top} \\
	\end{split}
	\end{align}
	
	then $\M_2(\tilde{\W})$ is positive semi-definite and has all eigenvalues strictly less than 1.
\end{lemma}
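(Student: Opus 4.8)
The plan is to reduce everything to showing that the matrix $\B := \I_{n_1} - \M_1(\tilde{\W})$ is \emph{strictly} positive definite, since the two claimed conclusions both follow once this is secured. From the proof of Lemma~\ref{lemma:4.2} we already know that $\M_1(\tilde{\W}) = \tilde{\W}_1(\I_{n_0} + \tilde{\W}_1^{\top}\tilde{\W}_1)^{-1}\tilde{\W}_1^{\top}$ is symmetric with eigenvalues $\{ d_{1,i}^2/(1+d_{1,i}^2) \} \subseteq [0,1)$, so $\B$ is symmetric with all eigenvalues in $(0,1]$, hence positive definite. Consequently the inner matrix
$$\C := \I_{n_1} + \tilde{\W}_2^{\top}\tilde{\W}_2 - \M_1(\tilde{\W}) = \B + \tilde{\W}_2^{\top}\tilde{\W}_2$$
is a sum of a positive-definite and a positive-semidefinite matrix, so $\C$ is positive definite, $\C^{-1}$ exists and is positive definite, and $\M_2(\tilde{\W}) = \tilde{\W}_2\,\C^{-1}\,\tilde{\W}_2^{\top}$ is well defined.

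Next I would settle positive semidefiniteness directly: for any vector $\mathbf{v}$, writing $\mathbf{u} = \tilde{\W}_2^{\top}\mathbf{v}$, we have $\mathbf{v}^{\top}\M_2(\tilde{\W})\mathbf{v} = \mathbf{u}^{\top}\C^{-1}\mathbf{u} \geq 0$ because $\C^{-1}$ is positive definite. Since $\M_2(\tilde{\W})$ is visibly symmetric, this gives $\M_2(\tilde{\W}) \succeq 0$ and guarantees its eigenvalues are real and nonnegative.

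For the strict upper bound I would exploit that $\M_2(\tilde{\W}) = \tilde{\W}_2\,\C^{-1}\tilde{\W}_2^{\top}$ and $\C^{-1}\tilde{\W}_2^{\top}\tilde{\W}_2$ share the same nonzero eigenvalues (the standard fact that $PQ$ and $QP$ have identical nonzero spectra, here with $P=\tilde{\W}_2$ and $Q=\C^{-1}\tilde{\W}_2^{\top}$). Any nonzero eigenvalue $\mu$ of $\C^{-1}\tilde{\W}_2^{\top}\tilde{\W}_2$ solves the generalized eigenproblem $\tilde{\W}_2^{\top}\tilde{\W}_2\,\mathbf{x} = \mu\,\C\,\mathbf{x}$ for some $\mathbf{x}\neq\mathbf{0}$; because $\C$ is positive definite and $\tilde{\W}_2^{\top}\tilde{\W}_2$ is symmetric, $\mu$ is real and $\mathbf{x}$ may be taken real. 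Left-multiplying by $\mathbf{x}^{\top}$ and substituting $\C = \B + \tilde{\W}_2^{\top}\tilde{\W}_2$ yields
$$\mu = \frac{\|\tilde{\W}_2\mathbf{x}\|_2^2}{\mathbf{x}^{\top}\B\,\mathbf{x} + \|\tilde{\W}_2\mathbf{x}\|_2^2} \in [0,1),$$
where the strict inequality $\mu < 1$ holds precisely because $\mathbf{x}^{\top}\B\,\mathbf{x} > 0$ for $\mathbf{x}\neq\mathbf{0}$. The remaining eigenvalues of $\M_2(\tilde{\W})$ are zero and hence trivially below $1$, so all of them lie in $[0,1)$.

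The main obstacle is the strictness of the upper bound: it rests entirely on $\B = \I_{n_1} - \M_1(\tilde{\W})$ being strictly positive definite rather than merely positive semidefinite. This is exactly the role played by the ``$+\,1$'' in the factors $1+d_{1,i}^2$ from Lemma~\ref{lemma:4.2}, which keeps every eigenvalue of $\M_1(\tilde{\W})$ bounded strictly below $1$ and thereby prevents the denominator above from degenerating to $\|\tilde{\W}_2\mathbf{x}\|_2^2$.
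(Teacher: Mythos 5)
Your proof is correct and follows essentially the same route as the paper's: your matrix $\B = \I_{n_1} - \M_1(\tilde{\W})$ is exactly the paper's $\U_1\,\mathrm{diag}_i\!\left(\tfrac{1}{1+d_{1,i}^2}\right)\U_1^{\top}$, and your formula $\mu = \|\tilde{\W}_2\x\|_2^2 / (\x^{\top}\B\x + \|\tilde{\W}_2\x\|_2^2)$ is just a rearrangement of the paper's identity $(1-\mu)\,\x^{\top}\tilde{\W}_2^{\top}\tilde{\W}_2\x = \mu\,\x^{\top}\U_1\,\mathrm{diag}_i\!\left(\tfrac{1}{1+d_{1,i}^2}\right)\U_1^{\top}\x$. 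Your presentation is in fact slightly cleaner, since you obtain positive semidefiniteness directly from the quadratic form $\x^{\top}\M_2\x = (\tilde{\W}_2^{\top}\x)^{\top}\C^{-1}(\tilde{\W}_2^{\top}\x)$ rather than via the paper's (somewhat loosely stated) reduction through the range of $\tilde{\W}_2^{\top}$, and you avoid writing out the SVD explicitly.
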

\begin{proof}
	For simplicity we use $\M_2$ for $\M_2(\tilde{\W})$ in this proof. First we show positive semi-definiteness. Consider any $\x \neq \mathbf{0} \in \R^p$. Suppose $\x \not\in \text{Ker}(\tilde{\W}_2)$, then $\exists \mathbf{y} \neq \mathbf{0} \in \R^{n_1}$ such that $\x = \tilde{\W}_2^{\top} \mathbf{y}$, and 
	\begin{align}
	\begin{split}
	\x^{\top} \M_2 \x &= \x^{\top} \tilde{\W}_2 \left[ (\I_{n_1} + \tilde{\W}_2^{\top} \tilde{\W}_2) - \tilde{\W}_1 (\I_{n_0} + \tilde{\W}_1^{\top} \tilde{\W}_1)^{-1} \tilde{\W}_1^{\top} \right]^{-1} \tilde{\W}_2^{\top} \x \\
	&= \y^{\top} \left[ (\I_{n_1} + \tilde{\W}_2^{\top} \tilde{\W}_2) - \tilde{\W}_1 (\I_{n_0} + \tilde{\W}_1^{\top} \tilde{\W}_1)^{-1} \tilde{\W}_1^{\top} \right]^{-1} \y
	\end{split}
	\end{align}
	
	Thus it suffices to show positive semi-definiteness of $\left[ (\I_{n_1} + \tilde{\W}_2^{\top} \tilde{\W}_2) - \tilde{\W}_1 (\I_{n_0} + \tilde{\W}_1^{\top} \tilde{\W}_1)^{-1} \tilde{\W}_1^{\top} \right]^{-1}$, which is equivalent to show positive semi-definiteness of $\left[ (\I_{n_1} + \tilde{\W}_2^{\top} \tilde{\W}_2) - \tilde{\W}_1 (\I_{n_0} + \tilde{\W}_1^{\top} \tilde{\W}_1)^{-1} \tilde{\W}_1^{\top} \right]$.
	
	Now consider any $\mathbf{v} \neq \mathbf{0} \in \R^{n_1}$, and consider the singular value decomposition of $\tilde{\W}_1 = \U_1 \D_1 \V_1^{\top}$, where $\U_1 \in \R^{n_1 \times n_1}$ is orthonormal, $\D_1 = \text{diag}_i(d_{1,i}) \in \R^{n_1 \times n_0}$ has singular values of $\tilde{\W}_1$ along the diagonal, and $\V_1 \in \R^{n_0 \times n_0}$ is orthonormal.
	
	\begin{align}
	\begin{split}
	& \mathbf{v}^{\top} \left[ (\I_{n_1} + \tilde{\W}_2^{\top} \tilde{\W}_2) - \tilde{\W}_1 (\I_{n_0} + \tilde{\W}_1^{\top} \tilde{\W}_1)^{-1} \tilde{\W}_1^{\top} \right] \mathbf{v} \\
	=& \| \mathbf{v} \|_2^2 + \| \tilde{\W}_2 \mathbf{v} \|_2^2 - \mathbf{v}^{\top} \tilde{\W}_1 (\I_{n_0} + \tilde{\W}_1^{\top} \tilde{\W}_1)^{-1} \tilde{\W}_1^{\top} \mathbf{v} \\
	=& \| \mathbf{v} \|_2^2 + \| \tilde{\W}_2 \mathbf{v} \|_2^2 - \mathbf{v}^{\top} \U_1 \D_1 (\I_{n_0} + \D_1^{\top}  \D_1 )^{-1} \D_1^{\top} \U_1^{\top} \mathbf{v} \\
	=& \| \mathbf{v} \|_2^2 + \| \tilde{\W}_2 \mathbf{v} \|_2^2 - \| \text{diag}_i \left( \sqrt{\frac{d_{1,i}^2}{1+d_{1,i}^2}} \right) \U_1^{\top} \mathbf{v} \|_2^2 \\
	\geq & \| \tilde{\W}_2 \mathbf{v} \|_2^2
	\end{split}
	\end{align}
	
	This shows that $\M_2$ is positive semi-definite. Now we show that the eigenvalues of $\M_2$ are less than 1.
	
	First, continue using the singular value decomposition $\tilde{\W}_1 = \U_1 \D_1 \V_1^{\top}$ and consider the singular value decomposition $\tilde{\W}_2 = \U_2 \D_2 \V_2^{\top}$ where $\U_2 \in \R^{p \times p}$ is orthonormal, $\D_2 = \text{diag}(d_{2,i}) \in \R^{p \times n_1}$ is the diagonal matrix of singular values of $\tilde{\W_2}$ where the diagonal entries are in decreasing order, and $\V_2 \in \R^{n_1 \times n_1}$ is orthonormal, then
	\begin{align}
	\begin{split}
	\M_2 &= \tilde{\W}_2 \left[ (\I_{n_1} + \tilde{\W}_2^{\top} \tilde{\W}_2) - \tilde{\W}_1 (\I_{n_0} + \tilde{\W}_1^{\top} \tilde{\W}_1)^{-1} \tilde{\W}_1^{\top} \right]^{-1} \tilde{\W}_2^{\top} \\
	&= \tilde{\W}_2 \left[ (\I_{n_1} + \tilde{\W}_2^{\top} \tilde{\W}_2) - \U_1 \D_1 \V_1^{\top} (\V_1 \V_1^{\top} + \V_1 \D_1^{\top} \U_1^{\top} \U_1 \D_1 \V_1 )^{-1} \V_2 \D_1^{\top} \U_1^{\top} \right]^{-1} \tilde{\W}_2^{\top} \\
	&= \tilde{\W}_2 \left[ (\I_{n_1} + \tilde{\W}_2^{\top} \tilde{\W}_2) - \U_1 \D_1 \V_1^{\top} \left( \V_1 \text{diag}_i \left(\frac{1}{1 + d_{1,i}^2} \right) \V_1^{\top} \right) \V_1 \D_1^{\top} \U_1^{\top} \right]^{-1} \tilde{\W}_2^{\top} \\
	&= \tilde{\W}_2 \left[ (\U_1 \U_1^{\top} + \tilde{\W}_2^{\top} \tilde{\W}_2) - \U_1 \left( \text{diag}_i \left(\frac{d_{1,i}^2}{1 + d_{1,i}^2} \right) \right)  \U_1^{\top} \right]^{-1} \tilde{\W}_2^{\top} \\
	&= \tilde{\W}_2 \left[ \U_1 \left( \text{diag}_i \left(\frac{1}{1 + d_{1,i}^2} \right) \right) \U_1^{\top} + \tilde{\W}_2^{\top} \tilde{\W}_2 \right]^{-1} \tilde{\W}_2^{\top} \\
	\end{split}
	\end{align}
	
	Note the eigenvalues of $\M_2$, $\lambda(\M_2)$, are the same as the eigenvalues of $\M_2'$, $\lambda(\M_2')$, where 
	\begin{align} \label{eq:35}
	\begin{split}
	\M_2' := \left[ \U_1 \left( \text{diag}_i \left(\frac{1}{1 + d_{1,i}^2} \right) \right) \U_1^{\top} + \tilde{\W}_2^{\top} \tilde{\W}_2 \right]^{-1} \tilde{\W}_2^{\top} \tilde{\W}_2
	\end{split}
	\end{align}
	
	Now consider any eigenvalue $\mu$ of $\M_2$ with corresponding eigenvector $\x$, and we want to show that $\mu < 1$.
	\begin{align} \label{eq:36}
	\begin{split}
	\left[ \U_1 \left( \text{diag}_i \left(\frac{1}{1 + d_{1,i}^2} \right) \right) \U_1^{\top} + \tilde{\W}_2^{\top} \tilde{\W}_2 \right]^{-1} \tilde{\W}_2^{\top} \tilde{\W}_2 \x &= \mu \x  \\ 
	\tilde{\W}_2^{\top} \tilde{\W}_2 \x &= \mu \left[ \U_1 \left( \text{diag}_i \left(\frac{1}{1 + d_{1,i}^2} \right) \right) \U_1^{\top} + \tilde{\W}_2^{\top} \tilde{\W}_2 \right] \x \\
	(1 - \mu) \tilde{\W}_2^{\top} \tilde{\W}_2 \x &= \mu \U_1 \left( \text{diag}_i \left(\frac{1}{1 + d_{1,i}^2} \right) \right) \U_1^{\top} \x  \\
	(1 - \mu) \x^{\top} \tilde{\W}_2^{\top} \tilde{\W}_2 \x &= \mu \x^{\top} \U_1 \left( \text{diag}_i \left(\frac{1}{1 + d_{1,i}^2} \right) \right) \U_1^{\top} \x 
	\end{split}
	\end{align}
	
	\noindent Case 1: $\mu \neq 0$, which is equivalent to $\tilde{\W}_2 \x \neq \mathbf{0}$. Then note $\x^{\top} \U_1 \left( \text{diag}_i \left(\frac{1}{1 + d_{1,i}^2} \right) \right) \U_1^{\top} \x > 0$ as $\U_1 \left( \text{diag}_i \left(\frac{1}{1 + d_{1,i}^2} \right) \right) \U_1^{\top}$ is positive definite. Thus $(1 - \mu) \x^{\top} \tilde{\W}_2^{\top} \tilde{\W}_2 \x > 0$, and therefore $\mu < 1$.
	
	\noindent Case 2: $\mu = 0$. Then $\tilde{\W}_2 \x = \mathbf{0}$, $\x \in \text{Ker}(\tilde{\W}_2)$. 
\end{proof}

\begin{lemma} \label{lemma:4.7}
	For $\M_2(\tilde{\W})$ defined in Lemma \ref{lemma:4.6}, let $\lambda(\M_(\tilde{\W})) = \{\lambda_i^{\downarrow}(\M_2(\tilde{\W})): i = 1, 2, \cdots, p \}$ be the set of eigenvalues of $\M_2(\tilde{\W})$ in decreasing order, and $\lambda(\M_2(\tilde{\W}))$ may contain zero. Then 
	$$\Tr \left( (\I_p - \M_2(\tilde{\W}))^{-1} \right) = \sum_{i=1}^p \frac{1}{1 - \lambda_i^{\downarrow}(\M_2(\tilde{\W}))} \leq p + \sum_{i = 1}^r d_{2,i}^2 (1 + d_{1, 1}^2) $$ 
	where $r = \text{rank}(\M_2(\tilde{\W}))$, $\D_1 = \text{diag}_i (d_{1,i})$ is the diagonal matrix of singular values of the decomposition of $\tilde{\W}_1$, $\D_2 = \text{diag}_i (d_{2,i})$ is the diagonal matrix of singular values of the decomposition of $\tilde{\W}_2$, and the diagonal entries in both matrices are in decreasing order.
\end{lemma}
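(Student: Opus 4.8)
The plan is to first collapse the trace to a quantity involving only $\tilde{\W}_2$ and the singular values of $\tilde{\W}_1$ by means of the matrix inversion lemma, and then to finish with a positive-semidefinite trace inequality. The claimed equality $\Tr\left((\I_p - \M_2(\tilde{\W}))^{-1}\right) = \sum_{i=1}^p \frac{1}{1 - \lambda_i^{\downarrow}(\M_2(\tilde{\W}))}$ is immediate: by Lemma \ref{lemma:4.6} every eigenvalue of $\M_2(\tilde{\W})$ lies in $[0,1)$, so $\I_p - \M_2(\tilde{\W})$ is invertible and its inverse has eigenvalues $\tfrac{1}{1-\lambda_i^{\downarrow}}$, whose sum is the trace.

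For the inequality, set $\A := \I_{n_1} - \M_1(\tilde{\W})$. Recalling from the computation inside Lemma \ref{lemma:4.6} that $\M_1(\tilde{\W}) = \U_1 \,\text{diag}_i\!\left(\tfrac{d_{1,i}^2}{1+d_{1,i}^2}\right) \U_1^{\top}$, we have $\A = \U_1 \,\text{diag}_i\!\left(\tfrac{1}{1+d_{1,i}^2}\right) \U_1^{\top}$, which is positive definite, and the middle factor appearing in $\M_2(\tilde{\W})$ is exactly $\A + \tilde{\W}_2^{\top}\tilde{\W}_2$. First I would apply the Woodbury matrix identity to $(\I_p - \M_2(\tilde{\W}))^{-1} = \left(\I_p - \tilde{\W}_2(\A + \tilde{\W}_2^{\top}\tilde{\W}_2)^{-1}\tilde{\W}_2^{\top}\right)^{-1}$; a direct verification (multiplying out and using $\A = (\A+\tilde{\W}_2^{\top}\tilde{\W}_2) - \tilde{\W}_2^{\top}\tilde{\W}_2$) shows that this collapses to the clean form
$$(\I_p - \M_2(\tilde{\W}))^{-1} = \I_p + \tilde{\W}_2 \A^{-1} \tilde{\W}_2^{\top}.$$
Taking the trace and invoking its cyclic property then gives $\Tr\left((\I_p - \M_2(\tilde{\W}))^{-1}\right) = p + \Tr\left(\A^{-1} \tilde{\W}_2^{\top}\tilde{\W}_2\right)$.

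It then remains to bound $\Tr\left(\A^{-1}\tilde{\W}_2^{\top}\tilde{\W}_2\right)$. Since $d_{1,1}$ is the largest singular value of $\tilde{\W}_1$, every diagonal entry of $\text{diag}_i(1+d_{1,i}^2)$ is at most $1+d_{1,1}^2$, so $\A^{-1} = \U_1 \,\text{diag}_i(1+d_{1,i}^2)\,\U_1^{\top} \preceq (1+d_{1,1}^2)\I_{n_1}$. Because $\tilde{\W}_2^{\top}\tilde{\W}_2 \succeq 0$, the relation $(1+d_{1,1}^2)\I_{n_1} - \A^{-1} \succeq 0$ together with the fact that the trace of a product of two positive-semidefinite matrices is nonnegative yields $\Tr\left(\A^{-1}\tilde{\W}_2^{\top}\tilde{\W}_2\right) \leq (1+d_{1,1}^2)\Tr\left(\tilde{\W}_2^{\top}\tilde{\W}_2\right)$. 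Finally $\Tr\left(\tilde{\W}_2^{\top}\tilde{\W}_2\right) = \sum_i d_{2,i}^2 = \sum_{i=1}^r d_{2,i}^2$, since $\tilde{\W}_2$ has rank $r$ — indeed $\text{rank}(\M_2(\tilde{\W})) = \text{rank}(\tilde{\W}_2)$ because the middle factor $\A + \tilde{\W}_2^{\top}\tilde{\W}_2$ is invertible — and hence only $r$ nonzero singular values. Combining the pieces gives the bound $p + \sum_{i=1}^r d_{2,i}^2(1+d_{1,1}^2)$.

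The main obstacle is recognizing and verifying the Woodbury collapse in the second step: one must identify $\A = \I_{n_1} - \M_1(\tilde{\W})$ as the positive-definite factor so that the correction term in the Woodbury identity simplifies to $-\A$, producing the exact cancellation that yields $\I_p + \tilde{\W}_2\A^{-1}\tilde{\W}_2^{\top}$. Without this observation one is forced into a messier eigenvalue-pairing argument, showing that the quantities $\tfrac{\lambda}{1-\lambda}$ range over the eigenvalues of $\A^{-1}\tilde{\W}_2^{\top}\tilde{\W}_2$ as $\lambda$ ranges over those of $\M_2(\tilde{\W})$; once the reduction to $\Tr(\A^{-1}\tilde{\W}_2^{\top}\tilde{\W}_2)$ is in hand, the remaining positive-semidefinite trace bound is routine.
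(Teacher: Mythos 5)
Your proof is correct, and it takes a genuinely different route from the paper's. The paper proves the bound by passing to the similar matrix $\M_2'(\tilde{\W}) = (\C+\D)^{-1}\D$ with $\C = \U_1\,\mathrm{diag}_i\!\left(\tfrac{1}{1+d_{1,i}^2}\right)\U_1^{\top}$ and $\D = \tilde{\W}_2^{\top}\tilde{\W}_2$, performing a Schur-complement block decomposition in the eigenbasis of $\D$, and then invoking the operator inequality $\lambda_{i+j-1}(\A\B)\leq\lambda_i(\A)\lambda_j(\B)$ (Bhatia, 1997) to bound each eigenvalue individually, obtaining $\lambda_i^{\downarrow}\bigl((\I_p-\M_2(\tilde{\W}))^{-1}\bigr)\leq d_{2,i}^2(1+d_{1,1}^2)+1$ before summing. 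Your Woodbury collapse $(\I_p - \M_2(\tilde{\W}))^{-1} = \I_p + \tilde{\W}_2\A^{-1}\tilde{\W}_2^{\top}$ with $\A = \I_{n_1}-\M_1(\tilde{\W})$ checks out (expanding the product and using $\A = (\A+\tilde{\W}_2^{\top}\tilde{\W}_2)-\tilde{\W}_2^{\top}\tilde{\W}_2$ gives exact cancellation), and the remaining steps --- cyclicity of the trace, $\A^{-1}\preceq(1+d_{1,1}^2)\I_{n_1}$, nonnegativity of the trace of a product of positive semi-definite matrices, and $\Tr(\tilde{\W}_2^{\top}\tilde{\W}_2)=\sum_{i=1}^r d_{2,i}^2$ --- are all valid. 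Your argument is more elementary (no appeal to the Bhatia inequality) and yields an exact closed form for the resolvent rather than just a bound. What the paper's heavier machinery buys is the per-eigenvalue estimate, which is reused as the inductive hypothesis in the general $d$-layer extension (Lemma \ref{lemma:4.10}); your identity would also support that induction via $\lambda_{\max}\bigl((\I_p-\M_d)^{-1}\bigr)\leq 1+d_{d,1}^2\,\lambda_{\max}\bigl((\I_p-\M_{d-1})^{-1}\bigr)$, but as written you only establish the trace bound, which is all Lemma \ref{lemma:4.7} asserts.
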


\begin{proof}
	For simplicity we use $\M_2$ for $\M_2(\tilde{\W})$ in this proof. First note that matrix $\M_2'$ defined below has the same spectrum as $\M_2$, i.e. $\lambda(\M_2) = \lambda(\M_2')$
	\begin{align} \label{eq:37}
	\begin{split}
	\M_2'
	&:=  \left[ (\I_{n_1} + \tilde{\W}_2^{\top} \tilde{\W}_2) - \tilde{\W}_1 
	(\I_{n_0} + \tilde{\W}_1^{\top} \tilde{\W}_1)^{-1} \tilde{\W}_1^{\top} \right]^{-1} \tilde{\W_2}^{\top} \tilde{\W}_2 \\ 
	&= \left[ \U_1 \left( \text{diag}_i \frac{1}{1 + d_{1,i}^2}\right) \U_1^{\top} + \tilde{\W}_2^{\top} \tilde{\W}_2 \right]^{-1} \tilde{\W}_2^{\top} \tilde{\W}_2 \\
	& =: \left[ \C + \D \right]^{-1} \D
	\end{split}
	\end{align}
	with $\C := \U_1 \left( \text{diag}_i \frac{1}{1 + d_{1,i}^2}\right) \U_1^{\top}$ is positive definite and $\D := \tilde{\W}_2^{\top} \tilde{\W}_2$ is positive semi-definite.
	
	Consider diagonalization of $\D = \Q \D' \Q^{\top} $, where $\Q$ is an orthonormal matrix and $\D'$ is diagonal. Then there exists a matrix $\C'$ such that $\C = \Q \C' \Q^{\top}$, and thus
	
	\begin{align} \label{eq:38}
	\begin{split}
	(\C + \D)^{-1} \D &= (\Q \C' \Q^{\top} + \Q \D' \Q^{\top})^{-1} \Q \D' \Q^{\top} \\
	&= \Q (\C' + \D')^{-1} \Q^{\top} \Q \D' \Q^{\top} \\
	\end{split}
	\end{align}
	$$
	\implies \lambda((\C + \D)^{-1} \D) = \lambda((\C' + \D')^{-1} \D')
	$$
	
	Without loss of generality we assume $\D'$ has all its diagonal entries in descending order, and $\D'$ may have zero diagonal entries. Consider below, where $\D_1$ is a diagonal matrix as well and contains all the positive diagonal entries of $\D'$.
	
	$$
	\D' = 
	\begin{bmatrix}
	\D_1 & \mathbf{0} \\
	\mathbf{0} & \mathbf{0}
	\end{bmatrix}, 
	\C' = 
	\begin{bmatrix}
	\X & \Y^{\top} \\
	\Y & \Z
	\end{bmatrix} \\
	$$	
	
	\begin{align} \label{eq:39}
	\begin{split}
	\C' + \D' &= 
	\begin{bmatrix}
	\X  + \D_1 & \Y^{\top} \\
	\Y & \Z
	\end{bmatrix}, \\
	(\C' + \D')^{-1} &= 
	\begin{bmatrix}
	(\D_1 + \X - \Y^{\top} \Z^{-1} \Y)^{-1} & * \\
	* & *
	\end{bmatrix}
	=: 
	\begin{bmatrix}
	(\D_1 + \mathbf{S})^{-1} & * \\
	* & *
	\end{bmatrix}, \\
	\end{split}
	\end{align}
	
	\begin{align}
	\begin{split}
	(\C' + \D')^{-1} \D' 
	&= 
	\begin{bmatrix}
	(\D_1 + \mathbf{S})^{-1} & * \\
	* & *
	\end{bmatrix} 
	\begin{bmatrix}
	\D_1 & \mathbf{0} \\
	\mathbf{0} & \mathbf{0} \\
	\end{bmatrix} 
	= 
	\begin{bmatrix}
	(\I + \D_1^{-1} \mathbf{S} )^{-1} & * \\
	* & *
	\end{bmatrix} 
	\end{split}
	\end{align}
	
	We will use this fact below: for product of two matrices, for any two operators $\A, \B$ on Hilbert space $\mathcal{H}$ with dimension $n$, for all $i, j$ such that $i+j \leq n+1,  \lambda_{i+j-1}(\A \B) \leq \lambda_i(\A) \lambda_j(\B)$, where $\lambda_i(\A)$ is the $i$-th largest eigenvalue of $\A$ (Bhatia, 1997).
	
	Therefore, for any $i \in \{1, 2, \cdots, r \}$ where $r = \text{rank}(\M_2')$, and let $\lambda_i^{\downarrow}(\cdot)$ denote the $i$-th largest eigenvalue of a matrix, $\lambda_i^{\uparrow}(\cdot)$ denote the $i$-th smallest eigenvalue of a matrix
	\begin{align}
	\begin{split}
	\lambda_i^{\downarrow}(\M_2') &= \lambda_i^{\downarrow}((\I + \D_1^{-1} \mathbf{S} )^{-1}) \\
	&= \frac{1}{1 + \lambda_{i}^{\uparrow}(\D_1^{-1} \mathbf{S})} = \frac{1}{1 + \frac{1}{\lambda_i^{\downarrow}(\mathbf{S}^{-1} \D_1)}} \\
	&\leq \frac{1}{1 + \frac{1}{\lambda_k^{\downarrow}(\mathbf{S}^{-1}) \lambda_j^{\downarrow} (\D_1) }} ,\ \forall j, k \in \{1, 2, \cdots, r \} \text{ s.t.} j+k = i+1 \\
	&\leq  \frac{1}{1 + \frac{1}{\lambda_k^{\downarrow}(\mathbf{\C'}^{-1}) \lambda_j^{\downarrow} (\D_1)}} \\
	&= \frac{1}{1 + \frac{1}{\lambda_k^{\downarrow}(\mathbf{\C}^{-1}) \lambda_j^{\downarrow} (\D_1)}} = \frac{1}{1 + \frac{\lambda_k^{\uparrow}(\C)}{\lambda_j^{\downarrow}(\D_1)}}
	\end{split}
	\end{align}
	where the first inequality follows from the fact stated above, the second inequality holds because $\mathbf{S}^{-1} = (\X - \Y^{\top} \Z^{-1} \Y)^{-1}$ is a principal submatrix of $\C'^{-1}$, and the second last equality holds because $\lambda(\C') = \lambda(\C)$.
	
	For each fixed $i$, pick $j = i$, $k = 1$, then we have
	\begin{align}
	\begin{split}
	\lambda_i^{\downarrow}(\M_2') &\leq \frac{1}{1 + \frac{\lambda_1^{\uparrow}(\C)}{\lambda_i^{\downarrow}(\D_1)}} 
	= \frac{1}{1 +  \frac{\lambda_1^{\uparrow}(\C)}{\lambda_i^{\downarrow}(\D)}}
	= \frac{1}{1 + \frac{\frac{1}{1 + d_{1, 1}^2}}{d_{2, i}^2}} = \frac{d_{2,i}^2 (1 + d_{1, 1}^2)}{d_{2,i}^2 (1 + d_{1, 1}^2) + 1}
	\end{split}
	\end{align}
	
	Thus
	\begin{align}
	\begin{split}
	\lambda_i^{\downarrow}((\I_p - \M_2')^{-1}) &= \frac{1}{1 - \lambda_i^{\downarrow}(\M_2')} \leq \frac{1}{1 - \frac{d_{2,i}^2 (1 + d_{1, 1}^2)}{d_{2,i}^2 (1 + d_{1, 1}^2) + 1}} = d_{2,i}^2 ( d_{1, 1}^2 + 1) + 1 ,\, i = 1, 2, \cdots, r \\ 
	\Tr((\I_p - \M_2)^{-1})
	&= \sum_{i = 1}^p \lambda_i^{\downarrow}((\I_p - \M_2)^{-1})
	= \sum_{i = 1}^r \frac{1}{1 - \lambda_i^{\downarrow}(\M_2)} + \sum_{i = r+1}^p \frac{1}{1 - 0} \\
	&\leq \sum_{i = 1}^r [d_{2,i}^2 (1 + d_{1, 1}^2) + 1] + (p-r) 
	= p + \sum_{i = 1}^r d_{2,i}^2 (1 + d_{1, 1}^2)
	\end{split}
	\end{align}
\end{proof}

\subsection{General $d$ hidden layers}

Now we consider the general $d$-layer setup, formally described below,
\begin{align} \label{eq:44}
\begin{split}
y &\sim \text{Uniform}\{-1, +1\} \\
\z_0 | y &\sim N(y \tilde{\w}_0, \text{covar} = \sigma^2 \I_{n_0}), \tilde{\w}_0 \in \R^{n_0} \\
\z_1 | \z_0 &\sim N(\tilde{\W}_1 \z_0, \text{covar} = \sigma^2 \I_{n_1}), \tilde{\W}_1 \in \R^{n_1 \times n_0} \\
\z_2 | \z_1 &\sim N(\tilde{\W}_2 \z_1, \text{covar} = \sigma^2 \I_{n_2}), \tilde{\W}_2 \in \R^{n_2 \times n_1} \\
&\cdots \\
\x := \z_d | \z_{d-1} &\sim N(\tilde{\W}_{d}\z_{d-1}, \text{covar} = \sigma^2 \I_{n_d} = \sigma^2 \I_{p}), \tilde{\W}_{d} \in \R^{n_{d} \times n_{d-1}} = \R^{p \times n_{d-1}}
\end{split}
\end{align}

As shown above, the dimension of the "input" $\x$ is $n_d$, and later we will use $p$ and $n_d$ interchangeably, as most readers are familiar with $p$ being the dimension of the feature. We will also sometimes use $\Tilde{\W}$ as a shorthand for $(\tilde{\w}_0, \tilde{\W}_1, \tilde{\W}_2, \cdots, \tilde{\W}_{d-1}, \tilde{\W}_{d})$. We will also use $\x$ and $\z_d$ interchangeably.

For the sake of simplicity, we consider the case where all $\tilde{\W}_i (i = 1, \cdots, d)$ have the same rank $r$, i.e., $\text{rank}(\tilde{\W}_1) = \text{rank}(\tilde{\W}_2) = \cdots = \text{rank}(\tilde{\W}_d) = r$.

We now prove Lemma 3.5 in the main text.

\subsubsection{Proof of Lemma 3.5 in the main text}

\begin{proof}
	We prove this lemma by induction. Let $\boldsymbol{\mu}_{\ell} = y \tilde{\W}_{\ell} \cdots \tilde{\W}_1 \tilde{\w}_0$, i.e. $\boldsymbol{\mu}_{\ell}$ is the mean of $\z_i$ under the marginal distribution, $\ell \in \{0, 1, \cdots, d\}$, and $\boldsymbol{\mu}_0 = y \tilde{\w}_0$. Note $\boldsymbol{\mu}_{\ell+1} = \tilde{\W}_{\ell+1} \boldsymbol{\mu}_{\ell}$. The base cases for $d=1$ and $d=2$ are proved, as for the 1-layer network we have
	\begin{align}
	\begin{split}
	p(\z_0 |y ; \tilde{\W}) p(\z_1 | \z_0 ; \tilde{\W}) \propto \exp\left( -\frac{1}{2} (\z_0 - \boldsymbol{\mu}_0, \z_1 - \boldsymbol{\mu}_1)^{\top} \kappa^{(1)} (\z_0 - \boldsymbol{\mu}_0, \z_1 - \boldsymbol{\mu}_1) \right) 			
	\end{split}
	\end{align}
	
	and for the 2-layer network we have
	\begin{align}
	\begin{split}
	&p(\z_0, \z_1, \z_2 | y ; \tilde{\w}_0, \tilde{\W}_1, \tilde{\W}_2) 
	= p(\z_0 |y ; \tilde{\W}) p(\z_1 | \z_0 ; \tilde{\W}) p(\z_2 | \z_1 ; \tilde{\W})\\
	\propto& \exp\left( -\frac{1}{2} (\z_0 - \boldsymbol{\mu}_0, \z_1 - \boldsymbol{\mu}_1, \z_2 - \boldsymbol{\mu}_2)^{\top} \kappa^{(2)} (\z_0 - \boldsymbol{\mu}_0, \z_1 - \boldsymbol{\mu}_1, \z_2 - \boldsymbol{\mu}_2) \right) \\	
	\end{split}
	\end{align}

	Let the statement in the lemma be the inductive hypothesis (\textbf{IH}) and now we want to show the (\textbf{IH}) holds for $d+1$ layers.
	
	\begin{align} \label{eq:65}
	\begin{split}
	&p(\z_0, \z_1, \z_2, \cdots, \z_d, \z_{d+1} | y ; \tilde{\w}_0, \tilde{\W}_1, \tilde{\W}_2, \cdots, \tilde{\W}_{d-1}, \tilde{\W}_d, \tilde{\W}_{d+1}) \\
	= & p(\z_0, \z_1, \z_2, \cdots, \z_d| y ; \tilde{\w}_0, \tilde{\W}_1, \tilde{\W}_2, \cdots, \tilde{\W}_{d-1}, \tilde{\W}_d) p(\z_{d+1} | \z_d ; \tilde{\W}_{d+1}) \\
	\stackrel{(\textbf{IH})}{\propto} 
	&  \exp \left( -\frac{1}{2} \left[ (\z_0 - \boldsymbol{\mu}_0, \cdots, \z_{d} - \boldsymbol{\mu}_d)^{\top} \kappa^{(d)} (\z_0 - \boldsymbol{\mu}_0, \cdots, \z_{d} - \boldsymbol{\mu}_d) 
	+ (\z_{d+1} - \tilde{\W}_{d+1} \z_d)^{\top} \tilde{\mathbf{\Sigma}}_{d+1} (\z_{d+1} - \tilde{\W}_{d+1} \z_d) \right] \right) 
	\end{split}
	\end{align}
	
	Now want to show the RHS of \eqref{eq:65} is proportional to the proposed density in Lemma 3.5 in the main text. That is, we want to show that they have the same exponent. Note that
	
	\begin{align}
	\begin{split}
	\kappa^{(d+1)} 
	&=
	\left(
	\renewcommand{\arraystretch}{1.5}
	\begin{array}{ccc|c}
	&  &  &  \mathbf{0} \\
	\multicolumn{3}{c|}{\kappa^{(d)}} & \vdots \\
	&  &  &  \mathbf{0} \\
	\cline{1-3}
	\mathbf{0} & \dots & \multicolumn{1}{c}{\mathbf{0}} & \mathbf{0}
	\end{array}
	\right) 
	+ \left(
	\renewcommand{\arraystretch}{1.5}
	\begin{array}{cccc|c}
	\mathbf{0} & \cdots &  & \mathbf{0} & \mathbf{0} \\
	\vdots & \ddots &  & \vdots & \vdots \\
	&  & \mathbf{0} & \mathbf{0} & \mathbf{0} \\
	\mathbf{0} & \cdots & \mathbf{0} & \tilde{\W}_{d+1}^{\top} \mathbf{\Sigma}_{d+1} \tilde{\W}_{d+1} &  - (\Tilde{\mathbf{\Sigma}}_{d+1} \Tilde{\W}_{d+1}) \\
	\cline{1-4}
	\mathbf{0} & \dots & \mathbf{0} & \multicolumn{1}{c}{-(\Tilde{\mathbf{\Sigma}}_{d+1} \Tilde{\W}_{d+1})^{\top}} & \Tilde{\mathbf{\Sigma}}_{d+1}
	\end{array}
	\right) 
	=: \tilde{\kappa}^{(d+1)}_1 + \tilde{\kappa}^{(d+1)}_2
	\end{split}
	\end{align}
	
	Thus the exponent of the proposed density in Lemma 3.5 in the main text, ignoring the factor of $-\frac{1}{2}$, is 
	\begin{align}
	\begin{split}
	& (\z_0 - \boldsymbol{\mu}_0, \cdots, \z_{d+1} - \boldsymbol{\mu}_{d+1}) ^{\top} \kappa^{(d+1)} (\z_0 - y \tilde{\w}_0, \cdots, \z_{d+1} - y \tilde{\W}_{d+1} \cdots \tilde{\W}_1 \tilde{\w}_0) \\
	=& \left(\z_0 - \boldsymbol{\mu}_0, \cdots, \z_{d+1} - \boldsymbol{\mu}_{d+1} \right)^{\top} \left( \tilde{\kappa}^{(d+1)}_1 + \tilde{\kappa}^{(d+1)}_2 \right) \left(\z_0 - \boldsymbol{\mu}_0, \cdots, \z_{d+1} - \boldsymbol{\mu}_{d+1} \right) \\
	=& \left(\z_0 - \boldsymbol{\mu}_0, \cdots, \z_{d} - \boldsymbol{\mu}_{d} \right)^{\top} \kappa^{(d)} \left(\z_0 - \boldsymbol{\mu}_0, \cdots, \z_{d} - \boldsymbol{\mu}_{d} \right) 
	+ \left(\z_{d+1} - \boldsymbol{\mu}_{d+1} \right)^{\top} \Tilde{\mathbf{\Sigma}}_{d+1} \left(\z_{d+1} - \boldsymbol{\mu}_{d+1} \right) \\
	& + \left( \z_{d} - \boldsymbol{\mu}_{d} \right)^{\top} \left( -(\Tilde{\mathbf{\Sigma}}_{d+1} \Tilde{\W}_{d+1})^{\top} \right) \left( \z_{d+1} - \boldsymbol{\mu}_{d+1} \right) 
	+ \left(\z_{d+1} - \boldsymbol{\mu}_{d+1} \right)^{\top} \left( -(\Tilde{\mathbf{\Sigma}}_{d+1} \Tilde{\W}_{d+1})^{\top} \right) \left(\z_{d} - \boldsymbol{\mu}_{d} \right) \\
	& + \left(\z_{d} - \boldsymbol{\mu}_{d} \right)^{\top} \left( \tilde{\W}_{d+1}^{\top} \mathbf{\Sigma}_{d+1} \tilde{\W}_{d+1} \right)
	\left(\z_{d} - \boldsymbol{\mu}_{d} \right) \\
	=& \left(\z_0 - \boldsymbol{\mu}_0, \cdots, \z_{d} - \boldsymbol{\mu}_{d} \right)^{\top} \kappa^{(d)} \left(\z_0 - \boldsymbol{\mu}_0, \cdots, \z_{d} - \boldsymbol{\mu}_{d} \right) 
	+ \left(\z_{d+1} - \boldsymbol{\mu}_{d+1} \right)^{\top} \Tilde{\mathbf{\Sigma}}_{d+1} \left(\z_{d+1} - \boldsymbol{\mu}_{d+1} \right) \\
	& - 2 \left( \tilde{\W}_{d+1} \z_d - \boldsymbol{\mu}_{d+1} \right)^{\top} \Tilde{\mathbf{\Sigma}}_{d+1} \left( \z_{d+1} - \boldsymbol{\mu}_{d+1} \right) + \left( \tilde{\W}_{d+1} \z_d - \boldsymbol{\mu}_{d+1} \right)^{\top} \Tilde{\mathbf{\Sigma}}_{d+1} \left( \tilde{\W}_{d+1} \z_d - \boldsymbol{\mu}_{d+1} \right) \\
	=& \left(\z_0 - \boldsymbol{\mu}_0, \cdots, \z_{d} - \boldsymbol{\mu}_{d} \right)^{\top} \kappa^{(d)} \left(\z_0 - \boldsymbol{\mu}_0, \cdots, \z_{d} - \boldsymbol{\mu}_{d} \right) \\
	&+ \left( \z_{d+1}^{\top} \Tilde{\mathbf{\Sigma}}_{d+1} \z_{d+1} 
	- 2 \boldsymbol{\mu}_{d+1}^{\top} \Tilde{\mathbf{\Sigma}}_{d+1} \z_{d+1} + \boldsymbol{\mu}_{d+1}^{\top} \Tilde{\mathbf{\Sigma}}_{d+1} \boldsymbol{\mu}_{d+1} \right) \\
	& - 2 \left( \tilde{\W}_{d+1} \z_d \right)^{\top} \Tilde{\mathbf{\Sigma}}_{d+1} \z_{d+1} 
	+ 2 \boldsymbol{\mu}_{d+1}^{\top} \Tilde{\mathbf{\Sigma}}_{d+1} \z_{d+1} 
	+ 2 \left( \tilde{\W}_{d+1} \z_d \right)^{\top} \Tilde{\mathbf{\Sigma}}_{d+1} \boldsymbol{\mu}_{d+1} 
	- 2 \boldsymbol{\mu}_{d+1}^{\top} \Tilde{\mathbf{\Sigma}}_{d+1} \boldsymbol{\mu}_{d+1} \\
	&+ \left( \tilde{\W}_{d+1} \z_d \right)^{\top} \Tilde{\mathbf{\Sigma}}_{d+1} \left( \tilde{\W}_{d+1} \z_d \right) 
	- 2 \left( \tilde{\W}_{d+1} \z_d \right)^{\top} \Tilde{\mathbf{\Sigma}}_{d+1} \boldsymbol{\mu}_{d+1} 
	+ \boldsymbol{\mu}_{d+1}^{\top} \Tilde{\mathbf{\Sigma}}_{d+1} \boldsymbol{\mu}_{d+1} \\
	=& \left( \z_0 - \boldsymbol{\mu}_0, \cdots, \z_{d} - \boldsymbol{\mu}_{d} \right)^{\top} \kappa^{(d)} \left( \z_0 - \boldsymbol{\mu}_0, \cdots, \z_{d} - \boldsymbol{\mu}_{d} \right) 
	+ \z_{d+1}^{\top} \Tilde{\mathbf{\Sigma}}_{d+1} \z_{d+1} \\
	& - 2 \left( \tilde{\W}_{d+1} \z_d \right)^{\top} \Tilde{\mathbf{\Sigma}}_{d+1} \z_{d+1} 
	+ \left( \tilde{\W}_{d+1} \z_d \right)^{\top} \Tilde{\mathbf{\Sigma}}_{d+1} \left( \tilde{\W}_{d+1} \z_d \right)
	\end{split}
	\end{align}
	which is exactly the exponent of $p(\z_0, \cdots , \z_{d} | y; \tilde{\W}) p(\z_{d+1} | \z_d; \tilde{\W})$, ignoring the factor of $-\frac{1}{2}$, thus \textbf{(IH)} holds. 
\end{proof}

Now we proved the joint distribution of $(\z_0, \z_1, \z_2, \cdots, \z_d) | y ; \tilde{\W}$, we then prove Lemma 3.6 in the main text.

\subsubsection{Proof of Lemma 3.6 in the main text}

\begin{proof}
	We only need to show that the covariance of $\x | y ; \tilde{\W}$ is $\sigma^2 \left( \I_p - \M_d(\tilde{\W}) \right)^{-1}$. Note that we are interested in $\left( \kappa^{(d)} \right)^{-1}_{d,d}$, the $(d,d)$-th block of the inverse of $\kappa^{(d)}$, where $\kappa^{(d)}$ is the precision matrix of $(\z_0, \z_1, \z_2, \cdots, \z_d) | y ; \tilde{\W}$. We will make use of the tri-diagonal block structure of $\kappa^{(d)}$.
	
	We will apply the block matrix inverse formula repeatedly. We start with the submatrix $\kappa^{(d)}_{[0:1], [0:1]}$, which denotes 
	$\begin{bmatrix}
	\kappa^{(d)}_{0,0} & \kappa^{(d)}_{0,1} \\
	\kappa^{(d)}_{1,0} & \kappa^{(d)}_{1,1} 
	\end{bmatrix}$
	as defined in eq (23) in the main text. We first look for the bottom right block in the inverse of $\kappa^{(d)}_{[0:1], [0:1]}$, which we denote as $\left( \kappa^{(d)}_{[0:1], [0:1]} \right)^{-1}_{1,1}$. By the block matrix inverse formula, we know 
	\begin{align}
	\begin{split}
	\left( \kappa^{(d)}_{[0:1], [0:1]} \right)^{-1}_{1,1} 
	&= 
	\left( \kappa^{(d)}_{1,1} - \kappa^{(d)}_{1,0} \left( \kappa^{(d)}_{0,0} \right)^{-1} \kappa^{(d)}_{0,1} \right)^{-1} \\
	&= \left( \left( \Tilde{\mathbf{\Sigma}}_1 + \Tilde{\W}_{2}^{\top} \Tilde{\mathbf{\Sigma}}_{2} \Tilde{\W}_{2} \right) 
	- \left( -(\Tilde{\mathbf{\Sigma}}_{i+1} \Tilde{\W}_{i+1}) \right) 
	\left( \Tilde{\mathbf{\Sigma}}_0 + \Tilde{\W}_{1}^{\top} \Tilde{\mathbf{\Sigma}}_{1} \Tilde{\W}_{1} \right) 
	\left( -(\Tilde{\mathbf{\Sigma}}_{i+1} \Tilde{\W}_{i+1}) \right)^{\top} \right)^{-1} \\
	&= \sigma^2 \left( \I_{n_1} + \Tilde{\W}_{2}^{\top} \Tilde{\W}_{2} - \M_1(\tilde{\W}) \right)^{-1}
	\end{split}
	\end{align}
	
	Now we start to make use of the tri-diagonal block structure of $\kappa^{(d)}$. We calculate $\left( \kappa^{(d)}_{[0:2], [0:2]} \right)^{-1}_{2,2}$ using $\left( \kappa^{(d)}_{[0:1], [0:1]} \right)^{-1}_{1,1}$:
	
	\begin{align}
	\begin{split}
	\left( \kappa^{(d)}_{[0:2], [0:2]} \right)^{-1}_{2,2}
	&= \left( \kappa^{(d)}_{2, 2} - \kappa^{(d)}_{2, [0:1]} \left( \kappa^{(d)}_{[0:1], [0:1]} \right)^{-1} \kappa^{(d)}_{[0:1], 2} \right)^{-1} \\
	&= \left( \kappa^{(d)}_{2, 2} - \kappa^{(d)}_{2, 1} \left( \kappa^{(d)}_{[0:1], [0:1]} \right)^{-1}_{1,1} \kappa^{(d)}_{1, 2} \right)^{-1} \\
	&= \left( \left( \Tilde{\mathbf{\Sigma}}_2 + \Tilde{\W}_{3}^{\top} \Tilde{\mathbf{\Sigma}}_{3} \Tilde{\W}_{3} \right) 
	- \left( -\Tilde{\mathbf{\Sigma}}_{2} \Tilde{\W}_{2} \right) 
	\sigma^2 \left( \I_{n_1} + \Tilde{\W}_{2}^{\top} \Tilde{\W}_{2} - \M_1(\tilde{\W}) \right)^{-1} 
	\left( -\Tilde{\mathbf{\Sigma}}_{2} \Tilde{\W}_{2} \right)^{\top} \right)^{-1} \\
	&= \sigma^2 \left( \I_{n_2} + \Tilde{\W}_{3}^{\top} \Tilde{\W}_{3} - \M_2(\tilde{\W}) \right)^{-1}
	\end{split}
	\end{align}
	where $\kappa^{(d)}_{2, [0:1]} = \begin{bmatrix}
	\kappa^{(d)}_{2, 0} & \kappa^{(d)}_{2, 1}
	\end{bmatrix}$ and $\kappa^{(d)}_{[0:1], 2} = \begin{bmatrix}
	\kappa^{(d)}_{0, 2} \\
	\kappa^{(d)}_{1, 2}
	\end{bmatrix}$, and the second equality follows from the fact that $\kappa^{(d)}_{2, 0} = [\mathbf{0}]$ and $\kappa^{(d)}_{0, 2} = [\mathbf{0}]$.
	
	It is easy to see this pattern would hold for any general $\ell \in \{ 1, 2, \cdots, d-1 \}$, i.e., 
	\begin{align}
	\begin{split}
	\left( \kappa^{(d)}_{[0: \ell], [0: \ell]} \right)^{-1}_{\ell, \ell} = \sigma^2 \left( \I_{n_{\ell}} + \Tilde{\W}_{\ell+1}^{\top} \Tilde{\W}_{\ell + 1} - \M_{\ell}(\tilde{\W}) \right)^{-1}
	\end{split}
	\end{align}
	
	Therefore, we have
	
	\begin{align}
	\begin{split}
	\left( \kappa^{(d)} \right)^{-1}_{d,d} 
	&= \left( \kappa^{(d)}_{d, d} - \kappa^{(d)}_{d, [0:d-1]} \left( \kappa^{(d)}_{[0:d-1], [0:d-1]} \right)^{-1} \kappa^{(d)}_{[0:d-1], d} \right)^{-1} \\
	&= 
	\left( \kappa^{(d)}_{d, d} - \kappa^{(d)}_{d, d-1} \left( \kappa^{(d)}_{[0:d-1], [0:d-1]} \right)^{-1}_{d-1,d-1} \kappa^{(d)}_{d-1, d} \right)^{-1} \\ 
	&= \left( 
	\Tilde{\mathbf{\Sigma}}_{d} - 
	\left( -(\Tilde{\mathbf{\Sigma}}_{d} \Tilde{\W}_{d}) \right) 
	\sigma^2 \left( \I_{n_{d-1}} + \Tilde{\W}_{d}^{\top} \Tilde{\W}_{d} - \M_{d-1}(\tilde{\W}) \right)^{-1} 
	\left( -(\Tilde{\mathbf{\Sigma}}_{d} \Tilde{\W}_{d}) \right)^{\top}
	\right)^{-1} \\
	&= \sigma^2 \left( \I_{n_d} - \M_d(\tilde{\W}) \right)^{-1} = \sigma^2 \left( \I_{p} - \M_d(\tilde{\W}) \right)^{-1} 
	\end{split}
	\end{align}
	which is the desired result.
\end{proof}

\subsubsection{A upper bound for Kl divergence between $(\x,y) ; \tilde{\W}$ and a prior distribution $\mathbb{Q}$}

With the marginal distribution of  $\x | y ; \tilde{\W}$, and using the fact that the label $y \sim \text{Unit}\{ -1, +1 \}$ is first generated, we can find the distribution of $(\x, y); \tilde{\W}$ and then obtain an upper bound of the KL divergence between itself and a prior distribution $\mathbb{Q}$. This upper bound will be used in our sample complexity lower bound.

\begin{lemma} [Upper bound on KL divergence between $(\x,y) ; \tilde{\W}$ and a prior distribution $\mathbb{Q}$] \label{lemma:4.10}
	We have that
	$$\mathbb{KL}(\mathbb{P}_{(\x, y);\tilde{\W}} || \mathbb{Q}) \leq \frac{1}{2} \left[ \frac{\sigma^2}{\tau^2} \left( p + \sum_{i = 1}^r m_{d,i} \right)  + \frac{1}{\tau^2} \|  \Tilde{\W}_d  \Tilde{\W}_{d-1} \cdots  \Tilde{\W}_1 \Tilde{\w}_0 \|_2^2 + p \ln \left(\frac{\tau^2}{\sigma^2} \right) -p \right]$$ 
	where $\mathbb{Q} \sim N(\mathbf{0}, \tau^2 \I_{p}) \times \text{Uniform}\{-1, +1\}$ with $\tau$ being a fixed constant is a prior distribution, $\mathbb{P}_{(\x, y);\tilde{\W}}$ is the joint distribution of $(\x, y)$ when $\tilde{\W} = (\tilde{\w}_0, \tilde{\W}_1, \tilde{\W}_2, \cdots, \tilde{\W}_{d-1}, \tilde{\W}_{d})$ is viewed as a parameter, $r = \text{rank}(\tilde{\W}_1) = \cdots = \text{rank}(\tilde{\W}_d)$, and $m_{d,i}$ is defined recursively as:
	\begin{align} \label{eq:59}
	\begin{split}
	m_{1, i} &:= d_{1, i}^2 \\
	m_{2,i} &:= d_{2, i}^2 (m_{1, 1} + 1) = d_{2, i}^2 (d_{1,1}^2 + 1) \\
	m_{3,i} &:= d_{3, i}^2 (m_{2, 1} + 1)= d_{3, i}^2 (d_{2,1}^2 (d_{1,1}^2 + 1) +1)  \\
	& \cdots \\
	m_{d,i} &:= d_{d, i}^2 (m_{d-1, 1} + 1)
	\end{split}
	\end{align}
	where $d_{\ell, i}$ is the $i$-th largest singular value of $\tilde{\W}_{\ell}$, $\ell \in \{1, 2, \cdots, d\}$.
\end{lemma}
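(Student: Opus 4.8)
The plan is to mirror the structure of the proof of Lemma~\ref{lemma:4.5} (the two-layer case), replacing the explicit two-layer quantities by their $d$-layer analogues supplied by Lemma~\ref{lemma:3.4}. First I would invoke Lemma~\ref{lemma:3.4} to record that $\x \mid y ; \tilde{\W}$ is Gaussian with mean $y\,\tilde{\W}_d \cdots \tilde{\W}_1 \tilde{\w}_0$ and covariance $\sigma^2(\I_p - \M_d(\tilde{\W}))^{-1}$. Exactly as in Lemma~\ref{lemma:4.5}, since $y \sim \text{Uniform}\{-1,+1\}$ and the $y$-marginals of $\mathbb{P}_{(\x,y);\tilde{\W}}$ and $\mathbb{Q}$ both equal $\tfrac12$, the joint KL divergence splits as a $\tfrac12$-average of the two conditional divergences from $N(\pm\tilde{\W}_d\cdots\tilde{\w}_0,\ \sigma^2(\I_p-\M_d)^{-1})$ to $N(\mathbf{0},\tau^2\I_p)$. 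Plugging these into the Gaussian KL formula produces three terms: a trace term $\textbf{I}=\frac{\sigma^2}{\tau^2}\Tr((\I_p-\M_d)^{-1})$, a mean term $\textbf{II}=\frac{1}{\tau^2}\|\tilde{\W}_d\cdots\tilde{\W}_1\tilde{\w}_0\|_2^2$, and a log-determinant term $\textbf{III}=p\ln(\tau^2/\sigma^2)+\ln\det(\I_p-\M_d)$, together with the additive $-p$. Term $\textbf{II}$ is immediate (the sign of $y$ drops out of the squared norm), and term $\textbf{III}$ is controlled once we know every eigenvalue of $\M_d$ lies in $[0,1)$, since then $\ln\det(\I_p-\M_d)=\sum_i\ln(1-\lambda_i)\le 0$ and hence $\textbf{III}\le p\ln(\tau^2/\sigma^2)$.

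So the two facts to establish are (i) $\M_d(\tilde{\W})$ is positive semidefinite with all eigenvalues strictly below $1$, and (ii) the trace bound $\Tr((\I_p-\M_d)^{-1})\le p+\sum_{i=1}^r m_{d,i}$. Both I would prove by induction on $d$ using the recursion $\M_d=\tilde{\W}_d(\I_{n_{d-1}}+\tilde{\W}_d^{\top}\tilde{\W}_d-\M_{d-1})^{-1}\tilde{\W}_d^{\top}$ and treating $\M_{d-1}$ exactly as the two-layer proofs of Lemma~\ref{lemma:4.6} and Lemma~\ref{lemma:4.7} treated $\M_1$. For (i), the base case is Lemma~\ref{lemma:4.6}; in the inductive step the quadratic form $\mathbf{v}^{\top}(\I_{n_{d-1}}+\tilde{\W}_d^{\top}\tilde{\W}_d-\M_{d-1})\mathbf{v}\ge\|\tilde{\W}_d\mathbf{v}\|_2^2$ holds because $\I_{n_{d-1}}-\M_{d-1}$ is already positive definite by the induction hypothesis, giving positive semidefiniteness of $\M_d$, and the eigenvalue-below-one claim repeats the Case~1 / Case~2 argument of Lemma~\ref{lemma:4.6} verbatim with $\C:=\I_{n_{d-1}}-\M_{d-1}$ in place of the two-layer $\C=\I_{n_1}-\M_1=\U_1(\text{diag}_i\frac{1}{1+d_{1,i}^2})\U_1^{\top}$.

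The heart of the argument, and the step I expect to be the main obstacle, is the inductive trace bound (ii) that generalizes Lemma~\ref{lemma:4.7}. Following that lemma I set $\C:=\I_{n_{d-1}}-\M_{d-1}$ and $\D:=\tilde{\W}_d^{\top}\tilde{\W}_d$, observe $\lambda(\M_d)=\lambda((\C+\D)^{-1}\D)$, diagonalize $\D=\Q\D'\Q^{\top}$, write $\C=\Q\C'\Q^{\top}$, and reduce to the $2\times2$ block form with the leading block $(\I+\D_1^{-1}\mathbf{S})^{-1}$ where $\mathbf{S}$ is a Schur complement of $\C'$. The eigenvalue-product inequality $\lambda_{i+j-1}(\A\B)\le\lambda_i(\A)\lambda_j(\B)$ of (Bhatia, 1997), together with the principal-submatrix relation between $\mathbf{S}^{-1}$ and $\C'^{-1}$, then yields $\lambda_i^{\downarrow}(\M_d)\le\big(1+\lambda_1^{\uparrow}(\C)/d_{d,i}^2\big)^{-1}$. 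The crucial link to the recursion is that the (per-eigenvalue strengthening of the) induction hypothesis for (ii) reads $\lambda_i^{\downarrow}((\I_{n_{d-1}}-\M_{d-1})^{-1})\le m_{d-1,i}+1$, i.e. $1-\lambda_i^{\downarrow}(\M_{d-1})\ge\frac{1}{m_{d-1,i}+1}$, so that $\lambda_1^{\uparrow}(\C)=1-\lambda_1^{\downarrow}(\M_{d-1})\ge\frac{1}{m_{d-1,1}+1}$; substituting gives $\lambda_i^{\downarrow}(\M_d)\le\frac{m_{d,i}}{m_{d,i}+1}$ with $m_{d,i}=d_{d,i}^2(m_{d-1,1}+1)$, which is precisely the defining recursion~\eqref{eq:59}. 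Hence $\lambda_i^{\downarrow}((\I_p-\M_d)^{-1})=\frac{1}{1-\lambda_i^{\downarrow}(\M_d)}\le m_{d,i}+1$ for the $r$ nonzero eigenvalues, while the remaining $p-r$ eigenvalues of $\M_d$ vanish (as $\text{rank}(\M_d)=r$) and contribute $1$ each, so $\Tr((\I_p-\M_d)^{-1})\le p+\sum_{i=1}^r m_{d,i}$. Assembling $\textbf{I}+\textbf{II}+\textbf{III}-p$ and halving produces the claimed bound; the delicate bookkeeping is keeping the ascending/descending eigenvalue indices consistent through the Schur-complement and Bhatia inequalities, exactly as in Lemma~\ref{lemma:4.7}.
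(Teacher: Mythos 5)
Your proposal is correct and follows essentially the same route as the paper: the same split of the joint KL into the two conditional Gaussian KLs, the same three-term decomposition, and the same induction on $d$ that pushes the Schur-complement/Bhatia argument of Lemma~\ref{lemma:4.7} through the recursion $\M_d = \tilde{\W}_d(\I + \tilde{\W}_d^{\top}\tilde{\W}_d - \M_{d-1})^{-1}\tilde{\W}_d^{\top}$ with the inductive hypothesis $\lambda_i^{\downarrow}((\I-\M_{d-1})^{-1}) \le m_{d-1,i}+1$ and the choice $j=i$, $k=1$ to recover the defining recursion for $m_{d,i}$. The paper organizes the eigenvalue-in-$[0,1)$ claim and the trace bound into a single inductive pass rather than your items (i) and (ii), but the content is identical.
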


\begin{proof}
	
	Similar to the proof of Lemma \ref{lemma:4.5}, we use the fact that 
	\begin{align}
	\begin{split}
	\mathbb{KL}(\mathbb{P}_{(\x, y);\tilde{\W}} || \mathbb{Q}) 
	=& \sum_{y \ \in \{-1, +1\}} \int p(\x, y) \log \frac{p(\x, y)}{q(\x, y)} d\x \\
	=& \frac{1}{2} \mathbb{KL}(\mathbb{P}_{(\x | y = -1);\tilde{\W}} || N(\mathbf{0}, \tau^2 \I_{p})) + \frac{1}{2} \mathbb{KL}(\mathbb{P}_{(\x | y = +1);\tilde{\W}} || N(\mathbf{0}, \tau^2 \I_{p})) \\
	=& \mathbb{KL}(\mathbb{P}_{(\x | y = -1);\tilde{\W}} || N(\mathbf{0}, \tau^2 \I_{p})) \\
	=& 
	\frac{1}{2} \frac{\sigma^2}{\tau^2} \underbrace{ \Tr \left( ( \I_p - \M_d(\tilde{\W}) )^{-1} \right)}_\textbf{I} \\
	&+ \frac{1}{2} \underbrace{\left( \left(- \Tilde{\W}_d  \Tilde{\W}_{d-1} \cdots  \Tilde{\W}_1 \Tilde{\w}_0 \right)^{\top} \left( \tau^2 \I_{p} \right)^{-1} \left(- \Tilde{\W}_d  \Tilde{\W}_{d-1} \cdots  \Tilde{\W}_1 \Tilde{\w}_0 \right) \right)}_\textbf{II} \\
	&+ \frac{1}{2} \underbrace{\ln \left( \frac{\text{det}(\tau^2 \I_{p})}{\text{det} (\text{Cov}(\x| y)) 
		} \right) 
	}_\textbf{III} - \frac{p}{2}
	\end{split}
	\end{align}
	
	For term \textbf{I}, the base case is proved in Lemma \ref{lemma:4.2} and Lemma \ref{lemma:4.5}. Now we prove the inductive case for the general $d$-layer setting. From Lemma \ref{lemma:4.7}, we know when $d = 2$, 
	\begin{align}
	\begin{split}
	i = 1, 2, \cdots, r :& \lambda_i^{\downarrow}((\I_p - \M_2(\tilde{\W}))^{-1}) = \frac{1}{1 - \lambda_i^{\downarrow}(\M_2'(\tilde{\W}))} \leq d_{2,i}^2 (d_{1, 1}^2 + 1) + 1 = m_{2,i}+1 ,\, \\
	i = r+1, r+2, \cdots, p:& \lambda_i^{\downarrow}((\I_p - \M_2(\tilde{\W}))^{-1}) = 1  
	\end{split}
	\end{align}
	
	We defined the constants $m_{d,i}$ recursively in \eqref{eq:59}, and we let the inductive hypothesis (\textbf{IH}) be:
	
	\begin{align}
	\begin{split}
	\forall i \in \{1, 2, \cdots,r\}: \, \lambda_i^{\downarrow}((\I_p - \M_d(\tilde{\W}))^{-1}) \leq m_{d,i} + 1
	\end{split}
	\end{align}
	and we see (\textbf{IH}) holds for $d=2$. Now suppose (\textbf{IH}) holds for any $d \geq 2$, and we want to show it holds for $d+1$ as well.
	
	Recall 
	$$\M_{d+1}(\tilde{\W}) := \tilde{\W}_{d+1} \left( \I_{n_{d}} + \tilde{\W}_{d+1}^{\top} \tilde{\W}_{d+1} - \M_{d}(\tilde{\W}) \right)^{-1} \tilde{\W}_{d+1}^{\top}$$
	$$\M_{d+1}'(\tilde{\W}) := \left( \I_{n_{d}} + \tilde{\W}_{d+1}^{\top} \tilde{\W}_{d+1} - \M_{d}(\tilde{\W}) \right)^{-1} \tilde{\W}_{d+1}^{\top} \tilde{\W}_{d+1}$$
	Follow the similar reasoning as in Lemma \ref{lemma:4.7} equations \eqref{eq:35} - \eqref{eq:39},
	\begin{align}
	\begin{split}
	\forall i \in \{1, 2, \cdots,r\}: & \, \forall j,k \in \{1, 2, \cdots,r\} \text{ s.t. } j+k=i+1, \\ &\lambda^{\downarrow}_i(\M_{d+1}'(\tilde{\W})) \leq \frac{1}{1 + \frac{1}{\lambda^{\downarrow}_j((\I_p - \M_d(\tilde{\W}))^{-1})\lambda^{\downarrow}_k(\tilde{\W}_{d+1}^{\top} \tilde{\W}_{d+1}) } } \\
	\forall i \in \{ r+1, r+2, \cdots, p \}: & \, \lambda^{\downarrow}_i(\M_{d+1}'(\tilde{\W})) = 0 \\ 
	\end{split}
	\end{align}
	
	For each $i \in \{1, 2, \cdots,r\}$, pick $j = i$ and $k = 1$, get
	\begin{align} \label{eq:64}
	\begin{split}
	\lambda^{\downarrow}_i(\M_{d+1}'(\tilde{\W})) 
	&\leq \frac{1}{1 + \frac{1}{\lambda^{\downarrow}_1((\I_p - \M_d(\tilde{\W}))^{-1})\lambda^{\downarrow}_i(\tilde{\W}_{d+1}^{\top} \tilde{\W}_{d+1}) } } \\
	& \stackrel{\textbf{(IH)}}{\leq}  \frac{1}{1 + \frac{1}{(m_{d,1} + 1) \lambda^{\downarrow}_i(\tilde{\W}_{d+1}^{\top} \tilde{\W}_{d+1}) } } \\
	&= \frac{1}{1 + \frac{1}{(m_{d,1} + 1) d_{d+1, i}^2 } } = \frac{(m_{d,1} + 1) d_{d+1, i}^2}{1 + (m_{d,1} + 1) d_{d+1, i}^2}
	\end{split}
	\end{align}

	Therefore, for $i \in \{1, 2, \cdots,r\}$
	\begin{align}
	\begin{split}
	\lambda_i^{\downarrow}((\I_p - \M_{d+1}'(\tilde{\W}))^{-1}) 
	&= \frac{1}{1 - \lambda_i^{\downarrow}(\M_{d+1}'(\tilde{\W}))} \\
	&\leq \frac{1}{1 - \frac{(m_{d,1} + 1) d_{d+1, i}^2}{1 + (m_{d,1} + 1) d_{d+1, i}^2}} = d_{d+1,i}^2 (m_{d,1} + 1) + 1  = m_{d+1,i} + 1
	\end{split}
	\end{align}
	which says (\textbf{IH}) is true for all $d \geq 2$, and 
	\begin{align}
	\begin{split}
	\forall i \in \{r+1, r+2, \cdots, p\}: \, \lambda_i^{\downarrow}((\I_p - \M_{d+1}'(\tilde{\W}))^{-1}) = 1
	\end{split}
	\end{align}
	
	Therefore
	\begin{align}
	\begin{split}
	\textbf{I} = \Tr \left((\I_p - \M_d(\tilde{\W}))^{-1} \right)
	&= \sum_{i = 1}^p \lambda_i^{\downarrow}((\I_p - \M_d(\tilde{\W}))^{-1})
	= \sum_{i = 1}^r \frac{1}{1 - \lambda_i^{\downarrow}(\M_d(\tilde{\W}))} + \sum_{i = r+1}^p \frac{1}{1 - 0} \\ 
	&\leq \sum_{i = 1}^r \left( m_{d, i} + 1 \right) + (p-r) 
	= p + \sum_{i = 1}^r m_{d, i}
	\end{split}
	\end{align}
	
	It is easy to see that \textbf{II} = $\frac{1}{\tau^2} \| \Tilde{\W}_d  \Tilde{\W}_{d-1} \cdots  \Tilde{\W}_1 \Tilde{\w}_0 \|_2^2$.
	
	Now we provide an upper bound on term \textbf{III}, where the last inequality follows from \eqref{eq:64}
	\begin{align}
	\begin{split}
	\textbf{III} 
	&= \ln \left( \frac{\text{det}(\tau^2 \I_{p})}{\text{det} \left( \text{Cov}(\x| y) \right)} \right) 
	= p \ln \left( \frac{\tau^2}{\sigma^2} \right) + \ln \left( \text{det} \left( \I_p - \M_d(\tilde{\W}) \right) \right) \\
	&= p \ln \left( \frac{\tau^2}{\sigma^2} \right) + \ln \left( \prod_{i=1}^p (1 - \lambda_i(\M_d(\tilde{\W}))) \right)
	\leq p \ln \left( \frac{\tau^2}{\sigma^2} \right) 
	\end{split}
	\end{align}
	
\end{proof}

\subsubsection{Proof of Lemma 3.7 in the main text}

In the main text, we proposed a subset $\G_{p,d}$ of $\F_{p,d}$ by two restrictions \textbf{R1} and \textbf{R2}. We find that under \textbf{R1} alone we have a tighter upper bound on a KL divergence relevant to the upper bound on mutual information $\mathbb{I}(\tilde{\W} ; S)$, which is stated in Lemma 3.7 in the main text. We now prove Lemma 3.7.

\begin{proof}
	Let $p(\x, y ; \tilde{\W})$ denote the density of $(\x, y);\tilde{\W}$ and let $p(\x, y ; \tilde{\W}^{\prime})$ denote the density of $(\x, y);\tilde{\W}^{\prime}$. Similar to the proof of Lemma \ref{lemma:4.5}, we use the fact that 
	\begin{align}
	\begin{split}
	\mathbb{KL}(\mathbb{P}_{(\x, y);\tilde{\W}} || \mathbb{P}_{(\x, y);\tilde{\W}^{\prime}}) 
	=& \sum_{y \ \in \{-1, +1\}} \int p(\x, y ; \tilde{\W}) \log \frac{p(\x, y ; \tilde{\W})}{p(\x, y ; \tilde{\W}^{\prime})} d\x \\
	=& \frac{1}{2} \mathbb{KL}(\mathbb{P}_{(\x | y = -1);\tilde{\W}} || \mathbb{P}_{(\x | y = -1);\tilde{\W}^{\prime}}) + \frac{1}{2} \mathbb{KL}(\mathbb{P}_{(\x | y = +1);\tilde{\W}} || \mathbb{P}_{(\x | y = +1);\tilde{\W}^{\prime}}) \\
	\end{split}
	\end{align}
	
	Note by Lemma 3.6 in the main text, $\x | y; \tilde{\W}$ is multivariate normal with mean $y \tilde{\W}_{d} \tilde{\W}_{d-1} \cdots \tilde{\W}_2 \tilde{\w}_1$ and covariance matrix $\sigma^2 \left( \I_p - \M_d(\tilde{\W}) \right)^{-1}$, and $\M_d(\tilde{\W})$ is recursively defined in Lemma 3.6 in the main text. By easy calculation, the matrices $\M_i(\tilde{\W})$ defined in Lemma 3.6 in the main text becomes
	\begin{align} \label{eq:87}
	\begin{split}
	\M_1(\tilde{\W}) = & \tilde{\W}_1 \left( \I_{p} + \tilde{\W}_1^{\top} \tilde{\W}_1 \right)^{-1} \tilde{\W}_1^{\top} 
	= \begin{bmatrix}
	\frac{1}{2}\I_{r} & \mathbf{0} \\
	\mathbf{0} & \frac{c^2}{1+c^2} \I_{p-r} 
	\end{bmatrix} \\
	\M_2(\tilde{\W}) = & \tilde{\W}_2 \left( \I_{p} + \tilde{\W}_2^{\top} \tilde{\W}_2 - \M_1(\tilde{\W}) \right)^{-1} \tilde{\W}_2^{\top} 
	= \begin{bmatrix}
	\frac{2}{3}\I_{r} & \mathbf{0} \\
	\mathbf{0} & \frac{c^2+c^4}{1+c^2+c^4} \I_{p-r} 
	\end{bmatrix} \\
	& \cdots \\
	\M_d(\tilde{\W}) = & \tilde{\W}_d \left( \I_{n_{p}} + \tilde{\W}_{d}^{\top} \tilde{\W}_{d} - \M_{d-1}(\tilde{\W}) \right)^{-1} \tilde{\W}_d^{\top} 
	= \begin{bmatrix}
	\frac{d}{d+1}\I_{r} & \mathbf{0} \\
	\mathbf{0} & \frac{\sum_{j=1}^d c^{2j}}{1 + \sum_{j=1}^d c^{2j}} \I_{p-r} 
	\end{bmatrix}
	\end{split}
	\end{align}
	which in turn gives the exact values of the eigenvalues of $\left( \I_p - \M_d(\tilde{\W}) \right)^{-1}$.
	
	It is clear that, although $\M_d(\tilde{\W})$ depends on $\tilde{\W}$ in the recursive definition, this dependency is essentially gone under \textbf{R1}. That is, for any $\tilde{\W} , \tilde{\W}^{\prime} \in \F$, $\M_d(\tilde{\W}) = \M_d(\tilde{\W}^{\prime})$, i.e. $\x | y; \tilde{\W}$ has the same covariance regardless of the choice of $\tilde{\W}$. Therefore the KL divergence between $(\x | y = -1);\tilde{\W}$ and $(\x | y = -1);\tilde{\W}^{\prime}$ is the KL divergence between two multivariate normal distributions with different means and same covariance. Let $\tilde{\w} = \tilde{\W}_{d} \tilde{\W}_{d-1} \cdots \tilde{\W}_2 \tilde{\w}_1$ and $\tilde{\w}^{\prime} = \tilde{\W}_{d}^{\prime} \tilde{\W}_{d-1}^{\prime} \cdots \tilde{\W}_2^{\prime} \tilde{\w}_1^{\prime}$ for $\tilde{\W}, \tilde{\W}^{\prime} \in \F$, then
	
	\begin{align}
	\begin{split}
	\mathbb{KL}(\mathbb{P}_{(\x | y = -1);\tilde{\W}} || \mathbb{P}_{(\x | y = -1);\tilde{\W}^{\prime}})
	& = \frac{1}{2} \Tr \left[ \left( \sigma^2 \left( \I_p - \M_d(\tilde{\W}^{\prime}) \right)^{-1} \right)^{-1} \sigma^2 \left( \I_p - \M_d(\tilde{\W}) \right)^{-1} \right] \\
	& + \frac{1}{2} \left( y \tilde{\w} - y \tilde{\w}^{\prime} \right)^{\top} \left( \sigma^2 \left( \I_p - \M_d(\tilde{\W}^{\prime}) \right)^{-1} \right)^{-1} \left( y \tilde{\w} - y \tilde{\w}^{\prime} \right) \\
	& - \frac{p}{2} + \frac{1}{2} \ln \left( \frac{\det \left( \sigma^2 \left( \I_p - \M_d(\tilde{\W}^{\prime}) \right)^{-1} \right) }{\det \left( \sigma^2 \left( \I_p - \M_d(\tilde{\W}) \right)^{-1} \right)} \right) \\
	& = \frac{1}{2} \Tr \left[ \I_p \right] + \frac{1}{2 \sigma^2} \left( \tilde{\w} - \tilde{\w}^{\prime} \right)^{\top} \left( \I_p - \M_d(\tilde{\W}^{\prime}) \right) \left( \tilde{\w} - \tilde{\w}^{\prime} \right) - \frac{p}{2} \\
	&= \frac{1}{2 \sigma^2} \left( \tilde{\w} - \tilde{\w}^{\prime} \right)^{\top} \left( \I_p - \M_d(\tilde{\W}^{\prime}) \right) \left( \tilde{\w} - \tilde{\w}^{\prime} \right)
	\end{split}
	\end{align}
	using the fact that $\M_d(\tilde{\W}) = \M_d(\tilde{\W}^{\prime})$.
	
	Moreover, the eigenvalues of $\M_d(\tilde{\W})$ and $\left( \I_p - \M_d(\tilde{\W}) \right)^{-1}$ are
	\begin{align} \label{eq:88}
	\begin{split}
	&\lambda_i^{\downarrow} \left( \M_d(\tilde{\W}) \right) 
	= \begin{cases}
	\frac{d}{d+1},& i \in \{1, 2, \cdots, r \} \\
	\frac{\sum_{j=1}^d c^{2j}}{1 + \sum_{j=1}^d c^{2j}}, & i \in \{r+1, r+2, \cdots, p \}
	\end{cases} \\
	\implies& 
	\lambda_i^{\downarrow} \left( \left( \I_p - \M_d(\tilde{\W}) \right)^{-1} \right) 
	= \begin{cases}
	d+1,& i \in \{1, 2, \cdots, r \} \\
	1 + \sum_{j=1}^d c^{2j}, & i \in \{r+1, r+2, \cdots, p \}
	\end{cases}
	\end{split}
	\end{align}		
	The eigenvalues of $\M_d(\tilde{\W})$ are in $(0,1)$, and from eq \eqref{eq:87} we know $\M_d(\tilde{\W})$ is diagonal. Therefore $\left( \I_p - \M_d(\tilde{\W}^{\prime}) \right)$ is a diagonal matrix with eigenvalues in $(0,1)$. Therefore, 
	\begin{align}
	\begin{split}
	\mathbb{KL}(\mathbb{P}_{(\x | y = -1);\tilde{\W}} || \mathbb{P}_{(\x | y = -1);\tilde{\W}^{\prime}})
	& = \frac{1}{2 \sigma^2} \left( \tilde{\w} - \tilde{\w}^{\prime} \right)^{\top} \left( \I_p - \M_d(\tilde{\W}^{\prime}) \right) \left( \tilde{\w} - \tilde{\w}^{\prime} \right) \\
	& \leq \frac{1}{2 \sigma^2} \| \tilde{\w} - \tilde{\w}^{\prime} \|_2^2 \\
	&\leq \frac{ \left( \| \tilde{\w} \|_2 + \| \tilde{\w}^{\prime} \|_2 \right)^2 }{2 \sigma^2} \\
	&\leq \frac{2^2}{2 \sigma^2} = \frac{2}{\sigma^2}
	\end{split}
	\end{align}
	The last inequality comes form the fact that $\tilde{\w}$ and $\tilde{\w}^{\prime}$ are both within the closed $\ell_2$-unit ball, as $\tilde{\w} = \tilde{\W}_{d} \tilde{\W}_{d-1} \cdots \tilde{\W}_2 \tilde{\w}_1$, and $\| \tilde{\w}_1  \|_2 = 1$ by restriction \textbf{R1}, and each matrix $\tilde{\W}_{i}$ has all of its eigenvalues no greater than 1.
	
	Similar reasoning gives the same upper bound for $\mathbb{KL}(\mathbb{P}_{(\x | y = -1);\tilde{\W}} || \mathbb{P}_{(\x | y = -1);\tilde{\W}^{\prime}})$, therefore, 
	\begin{align}
	\begin{split}
	\mathbb{KL}(\mathbb{P}_{(\x, y);\tilde{\W}} || \mathbb{P}_{(\x, y);\tilde{\W}^{\prime}}) \leq \frac{1}{2} \frac{2}{\sigma^2} + \frac{1}{2} \frac{2}{\sigma^2} = \frac{2}{\sigma^2}
	\end{split}
	\end{align}
	
\end{proof}

\subsection{Proof of main theorems}

We have proved most supporting lemmas for the main theorems. Now we state some well-known information-theoretic facts necessary for the main theorems and prove one of them, then we prove Theorem 3.1, a lemma for Theorem 3.3, and Theorem 3.3 in the main text. Theorem 3.2 and 3.4 are proved in the main text. 

\subsubsection{Some information-theoretic facts}

Below we provide Fano's inequality and several well known facts about mutual information and KL divergence. We only prove the last claim, Lemma \ref{lemma:4.12}.

\begin{theorem} [Fano's inequality, (Yu, 1997)] \label{thm:fano} 
	For any hypothesis $\hat{f} \in \mathcal{F}$, consider the data generating process $\bar{f} \to S \to \hat{f}$, where the dataset $S = \{ (\x_i, y_i) \}_{i=1}^n$, $(\x_i, y_i)$ i.i.d., and the true hypothesis $\bar{f}$ is chosen by nature uniformly at random from $\mathcal{F}$, then we have:
	$$
	P(\hat{f} \neq \bar{f}) \geq 1 - \frac{\mathbb{I}(\bar{f}; S) + \log2}{\log | \F |}
	$$
\end{theorem}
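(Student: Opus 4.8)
The plan is to follow the classical information-theoretic argument that decomposes a conditional entropy via the error indicator and then invokes the data-processing inequality. First I would introduce the Bernoulli error variable $E := \mathbf{1}\{\hat{f} \neq \bar{f}\}$ and set $P_e := P(\hat{f} \neq \bar{f})$, so that the goal reduces to establishing a lower bound on $P_e$.

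The core step is the entropy form of Fano's inequality, namely
$$H(\bar{f} \mid \hat{f}) \leq H(E) + P_e \log(|\F| - 1).$$
I would obtain this by expanding $H(E, \bar{f} \mid \hat{f})$ with the chain rule in two ways. On one hand it equals $H(\bar{f} \mid \hat{f}) + H(E \mid \bar{f}, \hat{f})$, and since $E$ is a deterministic function of $(\bar{f}, \hat{f})$ the second term vanishes, leaving exactly $H(\bar{f} \mid \hat{f})$. On the other hand it equals $H(E \mid \hat{f}) + H(\bar{f} \mid E, \hat{f})$, where $H(E \mid \hat{f}) \leq H(E) = H(P_e) \leq \log 2$, and $H(\bar{f} \mid E, \hat{f})$ is controlled by conditioning on the two values of $E$: when $E = 0$ the value of $\bar{f}$ is pinned down by $\hat{f}$ so the entropy is $0$, and when $E = 1$ the entropy is at most $\log(|\F| - 1)$ since $\bar{f}$ then ranges over the at most $|\F| - 1$ values different from $\hat{f}$. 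Weighting by $P(E=1) = P_e$ gives the claimed bound, which I relax using $\log(|\F|-1) \le \log|\F|$ to $H(\bar{f} \mid \hat{f}) \le \log 2 + P_e \log|\F|$.

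Next I would bring in the structure of the problem. Since $\bar{f}$ is chosen uniformly at random from the finite set $\F$, its entropy is exactly $H(\bar{f}) = \log|\F|$. Using the Markov chain $\bar{f} \to S \to \hat{f}$, the data-processing inequality gives $\mathbb{I}(\bar{f}; \hat{f}) \le \mathbb{I}(\bar{f}; S)$, whence
$$H(\bar{f} \mid \hat{f}) = H(\bar{f}) - \mathbb{I}(\bar{f}; \hat{f}) \ge \log|\F| - \mathbb{I}(\bar{f}; S).$$
Chaining this with the entropy bound of the previous paragraph yields $\log|\F| - \mathbb{I}(\bar{f}; S) \le \log 2 + P_e \log|\F|$, and solving for $P_e$ produces the desired inequality $P_e \ge 1 - (\mathbb{I}(\bar{f}; S) + \log 2)/\log|\F|$.

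The main obstacle is the entropy-form bound in the second paragraph: getting the chain-rule bookkeeping right, and in particular justifying $H(\bar{f} \mid E=1, \hat{f}) \le \log(|\F|-1)$ uniformly so that after averaging the factor $P_e$ appears with the correct coefficient. The remaining ingredients—the uniform prior, the data-processing inequality, and the final algebraic rearrangement—are routine once that bound is secured.
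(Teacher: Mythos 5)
Your proof is correct and is the standard argument for Fano's inequality: the chain-rule expansion of $H(E, \bar{f} \mid \hat{f})$ in two ways to get $H(\bar{f} \mid \hat{f}) \leq \log 2 + P_e \log|\F|$, combined with the uniform prior giving $H(\bar{f}) = \log|\F|$ and the data-processing inequality for the Markov chain $\bar{f} \to S \to \hat{f}$. The paper itself states this result as a citation to Yu (1997) and provides no proof, so there is nothing to compare against; your argument is exactly the classical derivation and fills that gap correctly.
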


\begin{lemma}[(Cover et al, 2006)] \label{lemma:info3}
	Given a hypothesis class $\F$, fix any two hypotheses $f$, $f^{\prime} \in \F$. If $S$ is a collection of $n$ independent and identically distributed samples $x^{(1)}, \cdots, x^{(n)} \in \mathcal{X}$, where $\mathcal{X}$ is the sample space, then we have
	$$
	\mathbb{KL}(\mathbb{P}_{S|f} || \mathbb{P}_{S|f^{\prime}}) = \sum_{i=1}^n \mathbb{KL}(\mathbb{P}_{x^{(i)}|f} || \mathbb{P}_{x^{(i)}|f^{\prime}}) = n \mathbb{KL}(\mathbb{P}_{x|f} || \mathbb{P}_{x|f^{\prime}})
	$$
\end{lemma}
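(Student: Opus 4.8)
The plan is to prove this as the standard tensorization (additivity) property of the Kullback--Leibler divergence over product measures. Since $S = \{x^{(1)}, \dots, x^{(n)}\}$ consists of i.i.d. draws, conditioning on a fixed hypothesis $f$ makes the joint density factor as $\mathbb{P}_{S|f}(s) = \prod_{i=1}^n \mathbb{P}_{x^{(i)}|f}(x^{(i)})$, and likewise for $f^{\prime}$. The entire argument is driven by substituting these product forms into the definition of the KL divergence and exploiting $\log \prod = \sum \log$.

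First I would write $\mathbb{KL}(\mathbb{P}_{S|f} || \mathbb{P}_{S|f^{\prime}}) = \int \mathbb{P}_{S|f}(s) \log \frac{\mathbb{P}_{S|f}(s)}{\mathbb{P}_{S|f^{\prime}}(s)}\, ds$ and substitute both factorizations, so that the log-ratio becomes $\sum_{i=1}^n \log \frac{\mathbb{P}_{x^{(i)}|f}(x^{(i)})}{\mathbb{P}_{x^{(i)}|f^{\prime}}(x^{(i)})}$. Next I would interchange the finite sum with the integral and, for the $i$-th summand, integrate out every coordinate $x^{(j)}$ with $j \neq i$: each such integral is $\int \mathbb{P}_{x^{(j)}|f}(x^{(j)})\, dx^{(j)} = 1$ because marginal densities integrate to one. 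This collapses the $i$-th term to $\int \mathbb{P}_{x^{(i)}|f}(x^{(i)}) \log \frac{\mathbb{P}_{x^{(i)}|f}(x^{(i)})}{\mathbb{P}_{x^{(i)}|f^{\prime}}(x^{(i)})}\, dx^{(i)} = \mathbb{KL}(\mathbb{P}_{x^{(i)}|f} || \mathbb{P}_{x^{(i)}|f^{\prime}})$, which gives the first equality. Finally, since the samples are identically distributed, every per-sample divergence equals the common value $\mathbb{KL}(\mathbb{P}_{x|f} || \mathbb{P}_{x|f^{\prime}})$, and summing $n$ identical terms yields the second equality.

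There is essentially no substantive obstacle here: the result is a routine consequence of independence together with the additivity of logarithms. The only points requiring minor care are the bookkeeping of which coordinates are integrated out in each summand and the implicit assumption that the conditional laws admit densities, so that the integral form of the KL divergence is valid --- which holds in our setting because every $\mathbb{P}_{(\x,y);\tilde{\W}}$ is a product of Gaussian and Bernoulli laws. In a fully measure-theoretic treatment one would replace densities by the Radon--Nikodym derivative of $\mathbb{P}_{S|f}$ with respect to $\mathbb{P}_{S|f^{\prime}}$, but the factorization and the ``integrates to one'' step carry over verbatim.
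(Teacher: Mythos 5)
Your proof is correct and is the standard tensorization argument for KL divergence over i.i.d. product measures; the paper itself does not prove this lemma but simply cites Cover and Thomas (2006), whose proof is exactly the factor-the-density, split-the-logarithm, integrate-out-the-other-coordinates argument you give. No gaps to report.
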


\begin{lemma}[Pairwise KL bound, (Yu, 1997)] \label{lemma:info4}
	Given a hypothesis class $\F$, consider $\bar{f} \in \F$, where $\bar{f}$ a hypothesis chosen by nature uniformly at random, and an i.i.d. sample $S$ of size $n$ is generated from $\bar{f}$, then 
	$$
	\mathbb{I}(\bar{f}; S) \leq \frac{1}{| \F |^2} \sum_{f \in \F} \sum_{f^{\prime} \in \F} \mathbb{KL}(\mathbb{P}_{S|f} || \mathbb{P}_{S|f^{\prime}}).
	$$
\end{lemma}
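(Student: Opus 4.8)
The plan is to express the mutual information $\mathbb{I}(\bar f; S)$ in its variational form as an average of KL divergences against the marginal law of $S$, and then to control that marginal through the convexity of the KL divergence in its second argument.

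First I would recall the standard identity for the mutual information between a discrete random variable $\bar f$ and $S$. Writing $\mathbb{P}_S$ for the marginal law of $S$, one has $\mathbb{I}(\bar f; S) = \mathbb{E}_{\bar f}[\mathbb{KL}(\mathbb{P}_{S|\bar f} \| \mathbb{P}_S)]$, and since $\bar f \sim \mathrm{Uniform}(\F)$ the expectation is a uniform average:
\begin{align}
\mathbb{I}(\bar f; S) = \frac{1}{|\F|} \sum_{f \in \F} \mathbb{KL}(\mathbb{P}_{S|f} \| \mathbb{P}_S).
\end{align}
Because $S$ is generated by first drawing $\bar f$ uniformly and then sampling from $\mathbb{P}_{S|\bar f}$, the marginal is itself the uniform mixture $\mathbb{P}_S = \frac{1}{|\F|} \sum_{f' \in \F} \mathbb{P}_{S|f'}$.

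Next I would invoke the convexity of the map $Q \mapsto \mathbb{KL}(P \| Q)$ for fixed $P$. Applying Jensen's inequality to this convex map against the uniform average over $f'$ gives, for each fixed $f$,
\begin{align}
\mathbb{KL}\Bigl(\mathbb{P}_{S|f} \,\Big\|\, \frac{1}{|\F|} \sum_{f' \in \F} \mathbb{P}_{S|f'}\Bigr) \leq \frac{1}{|\F|} \sum_{f' \in \F} \mathbb{KL}(\mathbb{P}_{S|f} \| \mathbb{P}_{S|f'}).
\end{align}
Substituting this bound into the identity above and collecting the two uniform averages yields the claimed $\frac{1}{|\F|^2} \sum_{f} \sum_{f'} \mathbb{KL}(\mathbb{P}_{S|f} \| \mathbb{P}_{S|f'})$.

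The only genuinely delicate point is the convexity step: one must use convexity of KL in the \emph{second} argument (equivalently, joint convexity of $(P,Q)\mapsto \mathbb{KL}(P\|Q)$ specialized to fixed $P$), and not in the first, since only the second argument carries the mixture. Everything else is a matter of unwinding the definition of mutual information and the mixture structure of the marginal, so I do not anticipate any real obstacle; the argument is short once the variational identity is in hand.
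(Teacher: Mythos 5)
Your proof is correct. The paper does not actually prove this lemma — it is stated as a known fact and attributed to Yu (1997) — but your argument is the standard one: write $\mathbb{I}(\bar f;S)=\frac{1}{|\F|}\sum_f \mathbb{KL}(\mathbb{P}_{S|f}\|\mathbb{P}_S)$, observe that the marginal $\mathbb{P}_S$ is the uniform mixture $\frac{1}{|\F|}\sum_{f'}\mathbb{P}_{S|f'}$, and use convexity of $Q\mapsto\mathbb{KL}(P\|Q)$ to replace the mixture by the average of pairwise divergences. You correctly identify the one delicate point (convexity in the second argument, which is where the mixture sits); an equivalent route is to note that the marginal minimizes $Q\mapsto\frac{1}{|\F|}\sum_f\mathbb{KL}(\mathbb{P}_{S|f}\|Q)$, so one may substitute $Q=\mathbb{P}_{S|f'}$ for each $f'$ and average, but this buys nothing beyond what you already have.
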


\begin{lemma} [Conditional mutual information is sum of weighted "conditional" mutual information] \label{lemma:4.12}
	We have that 
	$$\mathbb{I}(\tilde{\W} ; (\x,y)) 
	= \frac{1}{2} (\mathbb{I}(\tilde{\W} ; \x | y = -1) + \mathbb{I}(\tilde{\W} ; \x | y = +1))$$
	where $\mathbb{I}(\tilde{\W} ; \x | y = -1)$ denotes the 'conditional' mutual information when $y$ is held fixed.
\end{lemma}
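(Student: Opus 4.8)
The plan is to reduce the statement to the chain rule for mutual information together with the independence of the parameter $\tilde{\W}$ and the label $y$. First I would invoke the chain rule on the pair $(\x, y)$, decomposing
\begin{align}
\mathbb{I}(\tilde{\W}; (\x, y)) = \mathbb{I}(\tilde{\W}; y) + \mathbb{I}(\tilde{\W}; \x \,|\, y),
\end{align}
where $\mathbb{I}(\tilde{\W}; \x \,|\, y)$ here denotes the standard conditional mutual information, i.e. averaged over the marginal law of $y$.

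Next I would argue that $\mathbb{I}(\tilde{\W}; y) = 0$. This follows because, in the backward generative process of eq \eqref{eq:44}, the label $y \sim \text{Uniform}\{-1, +1\}$ is drawn first and its distribution does not depend on $\tilde{\W}$; that is, $\tilde{\W}$ and $y$ are independent, so their mutual information vanishes. Hence the joint mutual information reduces to $\mathbb{I}(\tilde{\W}; (\x, y)) = \mathbb{I}(\tilde{\W}; \x \,|\, y)$.

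Finally I would expand the conditional mutual information using its definition as the expectation, over the marginal of $y$, of the fixed-label mutual informations,
\begin{align}
\mathbb{I}(\tilde{\W}; \x \,|\, y) = \sum_{y \in \{-1, +1\}} P(y)\, \mathbb{I}(\tilde{\W}; \x \,|\, y),
\end{align}
where on the right each summand $\mathbb{I}(\tilde{\W}; \x \,|\, y)$ is computed from the conditional joint law of $(\tilde{\W}, \x)$ at that fixed value of $y$, exactly as in the lemma statement. Because $y$ is uniform on two atoms, $P(y = -1) = P(y = +1) = \tfrac{1}{2}$, and the sum collapses to $\tfrac{1}{2}\big(\mathbb{I}(\tilde{\W}; \x \,|\, y = -1) + \mathbb{I}(\tilde{\W}; \x \,|\, y = +1)\big)$, which is the claim.

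The only genuine subtlety is notational bookkeeping: one must carefully distinguish the averaged conditional mutual information from the fixed-label quantities $\mathbb{I}(\tilde{\W}; \x \,|\, y = \pm 1)$ appearing in the statement, and confirm that the averaging weights are exactly $\tfrac{1}{2}$ by virtue of the uniform prior on $y$. Everything else is a direct application of standard identities (the chain rule and the vanishing of mutual information under independence), so I expect no computational obstacle.
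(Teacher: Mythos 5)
Your proof is correct. It reaches the identity by a slightly different route than the paper: you invoke the chain rule $\mathbb{I}(\tilde{\W}; (\x, y)) = \mathbb{I}(\tilde{\W}; y) + \mathbb{I}(\tilde{\W}; \x \mid y)$, kill the first term via the independence of $y$ and $\tilde{\W}$, and then unpack the conditional mutual information as a $P(y)$-weighted average of the fixed-label quantities. The paper instead expands $\mathbb{I}(\tilde{\W}; (\x,y))$ directly at the density level, factorizing $p(\tilde{\W}, \x, y) = p(\tilde{\W})\, p(y)\, p(\x \mid y, \tilde{\W})$, cancelling $p(y)$ inside the logarithm, and splitting the sum over $y \in \{-1,+1\}$ into the two halves. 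Both arguments rest on exactly the same two facts --- $y \perp \tilde{\W}$ (so that $p(y \mid \tilde{\W}) = p(y)$ and $p(\tilde{\W} \mid y) = p(\tilde{\W})$) and $P(y = \pm 1) = \tfrac12$ --- so the difference is organizational rather than substantive. Your version is arguably cleaner in that it isolates where each assumption enters (independence kills $\mathbb{I}(\tilde{\W}; y)$; uniformity fixes the weights), whereas the paper's direct computation makes the definition of the fixed-label quantity $\mathbb{I}(\tilde{\W}; \x \mid y = \pm 1)$ explicit, which is useful since that object is nonstandard notation. Your closing remark about distinguishing the averaged conditional mutual information from the fixed-label quantities is exactly the right point of care, and you handle it correctly.
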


\begin{proof} 
	Recall $y \sim \text{Uniform}\{-1, +1\}$, 
	\begin{align}
	\begin{split}
	\mathbb{I}(\tilde{\W} ; (\x,y)) 
	=& \sum_{y \in \{-1, +1\}} \int_{\x}  \int_{\tilde{\W}} p(\tilde{\W}, \x, y) \log \left( \frac{p(\tilde{\W}, \x, y)}{p(\tilde{\W}) p(\x, y)} \right) d\tilde{\W} d\x \\
	=& \sum_{y \in \{-1, +1\}} \int_{\x}  \int_{\tilde{\W}} p(\tilde{\W}) p(y) p(\x | y, \tilde{\W}) \log \left( \frac{p(y) p(\x | y, \tilde{\W} )}{p(p(y) p(\x | y)} \right) d\tilde{\W} d\x \\
	=& \frac{1}{2} \int_{\x}  \int_{\tilde{\W}} p(\tilde{\W}) p(\x | y = -1, \tilde{\W}) \log \left( \frac{p(\x | y = -1, \tilde{\W} )}{p(\x | y = -1)} \right) d\tilde{\W} d\x + \\
	& \frac{1}{2} \int_{\x}  \int_{\tilde{\W}} p(\tilde{\W}) p(\x | y = +1, \tilde{\W}) \log \left( \frac{ p(\x | y = +1, \tilde{\W} )}{p(\x | y = +1)} \right) d\tilde{\W} d\x \\
	\end{split}
	\end{align}
	
	and note that 
	\begin{align}
	\begin{split}
	\mathbb{I}(\tilde{\W} ; \x | y = -1) 
	=& \int_{\x} \int_{\tilde{\W}} p(\tilde{\W}, \x | y = -1) \log \left( \frac{p(\tilde{\W}, \x | y = -1)}{p(\tilde{\W}| y = -1) p(\x | y = -1)} \right) d\tilde{\W} d\x \\
	=& \int_{\x} \int_{\tilde{\W}} p(\tilde{\W}, \x | y = -1) \log \left( \frac{p(\tilde{\W}) p(\x | \tilde{\W}, y = -1)}{p(\tilde{\W}) p(\x | y = -1)} \right) d\tilde{\W} d\x \\
	=& \int_{\x}  \int_{\tilde{\W}} p(\tilde{\W}) p(\x | y = -1, \tilde{\W}) \log \left( \frac{p(\x | y = -1, \tilde{\W} )}{p(\x | y = -1)} \right) d\tilde{\W} d\x
	\end{split}
	\end{align}
	and similar result holds for $\mathbb{I}(\tilde{\W} ; \x | y = +1)$, which proves the desired result.
\end{proof}

\subsubsection{Proof of Theorem 3.1 in the main text}

\begin{proof}
	We utilize the improved KL divergence upper bound in Lemma 3.7 in the main text and Lemma \ref{lemma:info3} to show the claimed sample complexity lower bound. 
	For any $\tilde{\W}, \tilde{\W}^{\prime} \in \G_{p,d}$, for an i.i.d sample $S = \{ (\x_1, y_1), \cdots (\x_n, y_n)\}$ generated from our model in eq (19) in the main text, combine Lemma 3.7 in the main text and Lemma \ref{lemma:info3}, we have
	\begin{align}
	\begin{split}
	\mathbb{KL}(\mathbb{P}_{S;\tilde{\W}} || \mathbb{P}_{S;\tilde{\W}^{\prime}}) 
	= n \mathbb{KL}(\mathbb{P}_{(\x, y);\tilde{\W}} || \mathbb{P}_{(\x, y);\tilde{\W}^{\prime}})
	\leq \frac{2n}{\sigma^2}
	\end{split}
	\end{align}
	where $\sigma^2$ is a constant in the model described in eq (19) in the main text.
	
	Then by Lemma \ref{lemma:info4}, for a $d$-layer network as described in eq (19) in the main text with true parameter $\tilde{\W}^* = (\tilde{\W}_{d}^*, \cdots, \tilde{\W}_{1}^*, \tilde{\w}_0^*)$ from hypothesis class $\G_{p,d}$, we have
	\begin{align}
	\begin{split}
	\mathbb{I}(\tilde{\W}^*; S)
	\leq \frac{1}{| \F |^2} \sum_{\tilde{\W} \in \G_{p,d}} \sum_{\tilde{\W}^{\prime} \in \G_{p,d}} \mathbb{KL}(\mathbb{P}_{S;\tilde{\W}} || \mathbb{P}_{S;\tilde{\W}^{\prime}}) 
	= \frac{1}{| \G_{p,d} |^2} \sum_{\tilde{\W} \in \G_{p,d}} \sum_{\tilde{\W}^{\prime} \in \G_{p,d}} n \mathbb{KL}(\mathbb{P}_{(\x, y);\tilde{\W}} || \mathbb{P}_{(\x, y);\tilde{\W}^{\prime}}) 
	\leq \frac{2n}{\sigma^2}
	\end{split}
	\end{align}
	
	By Fano's inequality (Theorem \ref{thm:fano}), for any hypothesis $\tilde{\W} \in \G_{p,d}$, where $\tilde{\W}$ is the output of any decoder $\hat{f} \in \Psi(\F_{p,d})$ given dataset $S$, that is, $\tilde{\W} = \hat{f}(S)$, 
	\begin{align}
	\begin{split}
	&\xi_1(\hat{f}, \mathbb{P}) := P_{(\tilde{\W}^*, S) \sim \mathbb{P}}(\hat{f}(S) \neq \tilde{\W}^* ) 
	= P_{(\tilde{\W}^*, S) \sim \mathbb{P}}(\tilde{\W} \neq \tilde{\W}^* ) \\
	\geq& 1 - \frac{\mathbb{I}(\tilde{\W}^*; S) + \log(2)}{\log | \G_{p,d} |}
	\geq 1 - \frac{ (2n / \sigma^2) + \log (2)}{\log | \G_{p,d} |}
	\end{split}
	\end{align}
	where $\mathbb{P} \in \mathcal{P}' := \mathcal{P}_{\tilde{\W}, S}( \{\text{Uniform}(\mathcal{G}_{p, d})\}, \mathcal{P}_{(\x, y) | \tilde{\W}}^{\text{Id}, \mathcal{Q}_{\x}} )$ as described in Theorem 3.1 in the main text.
	
	Thus, the sample complexity lower bound is 
	\begin{align} \label{eq:92}
	\begin{split}
	\xi_1(\hat{f}, \mathbb{P}) \geq \frac{1}{2} 
	\impliedby 
	n &\leq \frac{\sigma^2}{2} \left( \frac{\log | \G_{p,d} |}{2} - \log(2) \right) 
	= \frac{\sigma^2 \left[ d \left( \sum_{i=1}^{r} \log(i) \right) + p \log(2) - \log(4) \right] }{4} 
	\end{split}
	\end{align}
	meaning that if the number of samples is of order $\Omega \left( d r \log(r) + p \right)$, then the probability of identifying the truth $\tilde{\W}^*$ is less than half, regardless of the specific decoder $\hat{f}$.
\end{proof}

\subsubsection{A supporting lemma for Theorem 3.3 in the main text}
In this section we state and prove a lemma regarding the excess risk $\tilde{R}(\tilde{\W}) := R(\tilde{\W} , \tilde{\W}^*) - R(\tilde{\W}^* , \tilde{\W}^*) = R(\tilde{\W}) - R(\tilde{\W}^*)$, where $R(\tilde{\W})$ is a shorthand for $R(\tilde{\W} , \tilde{\W}^*)$, and the prediction risk is defined as 
\begin{align}
R(\tilde{\W} , \tilde{\W}^*) := P_{(\x, y) ; \tilde{\W}^*} \left[ \x^{\top} \boldsymbol{\mu}_{\x | y; \tilde{\W}} \leq 0 \right].
\end{align}
where $\boldsymbol{\mu}_{\x | y; \tilde{\W}} = \mathbb{E}_{\x | y; \tilde{\W}} \left[ \x | y \right]$ is the mean of $\x$ conditioned on $y$ and given parameter $\tilde{\W}$ and used as the predictor. 

\begin{lemma}[Excess prediction risk for hypotheses in $\G_{p,d}$] \label{lemma:4.13}
	For a $d$-layer network as described in eq (19) in the main text, paramatrized by truth $\tilde{\W}^* \in \G_{p,d}$, under restrictions \textbf{R1} and \textbf{R2}, for any $\tilde{\W} \in \G_{p,d}$ output by any decoder $\hat{f} \in \Psi(\G_{p,d})$, if $\tilde{\W} \neq \tilde{\W}^*$, then 
	\begin{align} \label{eq:23}
	\begin{split}
	R(\tilde{\W}) - R(\tilde{\W}^*) \geq 
	\mathds{1} \{ \tilde{\w} \neq \tilde{\w}^* \} 
	\cdot 
	\frac{\text{erf} \left( c_1 \right) - \text{erf} \left( c_0 \right) }{2} 
	\end{split}
	\end{align}
	where $\tilde{\W} = \left( \tilde{\W}_d, \cdots, \tilde{\W}_1, \tilde{\w}_0 \right)$, $\tilde{\w} := \tilde{\W}_d \cdots \tilde{\W}_1 \tilde{\w}_0 $, and constants $c_0, c_1$ are
	\begin{align} \label{eq:109}
	c_0 &:= \frac{ 1 - \frac{1}{2^r} + c^{2d} \left( \frac{1}{2^r} - \frac{1}{2^{p-2}} \right) }{ \sigma \sqrt{2 \left[ (d+1) \left( 1 - \frac{1}{2^r} \right) + \left( \frac{1-c^{2(d+1)}}{1 - c^2} \right) \left( \frac{ c^{2d}}{2^r} \right) \right]} }, 
	\end{align}
	\begin{align} \label{eq:110} 
	c_1 &:= \frac{ 1 - \frac{1}{2^r} + \frac{c^{2d}}{2^r} }{ \sigma \sqrt{2 \left[ (d+1) \left( 1 - \frac{1}{2^r} \right) + \left( \frac{1-c^{2(d+1)}}{1 - c^2} \right) \left( \frac{ c^{2d}}{2^r} \right) \right]} },
	\end{align}
	with $c = \frac{1}{p-r+1}$ as described in Restriction \textbf{R2}.
\end{lemma}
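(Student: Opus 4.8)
The plan is to reduce the prediction risk to a one–dimensional Gaussian tail probability, evaluate the two relevant $\text{erf}$ arguments in closed form under Restrictions \textbf{R1} and \textbf{R2}, and then control the off-diagonal case by a combinatorial optimization over the structured class $\G_{p,d,r}$.

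First I would rewrite the risk in terms of $\tilde{\w}$ and $\tilde{\w}^*$. By Lemma \ref{lemma:3.4}, under the truth the conditional law $\x \mid y;\tilde{\W}^*$ is $N(y\tilde{\w}^*,\boldsymbol{\Sigma}^*)$ with $\boldsymbol{\Sigma}^*:=\sigma^2(\I_p-\M_d(\tilde{\W}^*))^{-1}$, while the predictor mean is $\boldsymbol{\mu}_{\x\mid y;\tilde{\W}}=y\tilde{\w}$. Hence $\x^\top\boldsymbol{\mu}_{\x\mid y;\tilde{\W}}=y\,\x^\top\tilde{\w}$ is univariate normal under the truth; conditioning on $y$ and using that the two sign branches coincide by symmetry of $y\sim\text{Uniform}\{-1,+1\}$, I get $R(\tilde{\W})=\Phi\!\left(-\,(\tilde{\w}^*)^\top\tilde{\w}/\sqrt{\tilde{\w}^\top\boldsymbol{\Sigma}^*\tilde{\w}}\right)$, i.e.
\begin{align}
R(\tilde{\W})=\frac{1}{2}\left(1-\text{erf}\left(a(\tilde{\w})\right)\right),\qquad a(\tilde{\w}):=\frac{(\tilde{\w}^*)^\top\tilde{\w}}{\sqrt{2\,\tilde{\w}^\top\boldsymbol{\Sigma}^*\tilde{\w}}}.
\end{align}

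Second, I would evaluate these quantities explicitly. From \eqref{eq:87}--\eqref{eq:88}, under \textbf{R1} the matrix $\boldsymbol{\Sigma}^*$ is diagonal with block $(d+1)\sigma^2\I_r$ on the first $r$ coordinates and $(1+\sum_{j=1}^d c^{2j})\sigma^2\I_{p-r}$ on the remaining $p-r$, and this is independent of which $\tilde{\W}^*\in\G_{p,d,r}$ is chosen. Writing $\tilde{\w}=(\mathbf{R}\,\tilde{\w}_0^{(r)},\,c^{d}\tilde{\w}_0^{(p-r)})$ for a permutation $\mathbf{R}$, and reading off $\|\tilde{\w}_0^{(r)}\|_2^2=1-2^{-r}$ and $\|\tilde{\w}_0^{(p-r)}\|_2^2=2^{-r}$ from $\G^{(0)}$, the quadratic form $\tilde{\w}^\top\boldsymbol{\Sigma}^*\tilde{\w}=\sigma^2[(d+1)(1-2^{-r})+\tfrac{1-c^{2(d+1)}}{1-c^2}\,c^{2d}2^{-r}]$ is the \emph{same} for every admissible $\tilde{\w}$, so the denominators of $a(\tilde{\w})$ and $a(\tilde{\w}^*)$ agree. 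Substituting $\|\tilde{\w}^*\|_2^2=1-2^{-r}+c^{2d}2^{-r}$ then shows $a(\tilde{\w}^*)=c_1$ exactly.

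The crux is to bound $a(\tilde{\w})$ for $\tilde{\w}\neq\tilde{\w}^*$ by controlling $(\tilde{\w}^*)^\top\tilde{\w}=T_1+c^{2d}T_2$, where $T_1$ pairs the permuted $r$-blocks and $T_2$ the $(p-r)$-blocks. The rearrangement inequality gives $T_1\le 1-2^{-r}$ and $T_2\le 2^{-r}$, each with equality iff the corresponding blocks coincide, so for $\tilde{\w}\neq\tilde{\w}^*$ at least one inequality is strict. If the $r$-block differs, the cheapest modification swaps the two smallest magnitudes, costing $(3-2\sqrt{2})2^{-r}$; if only the $(p-r)$-block differs, the largest surviving $T_2$ flips the single smallest sign, giving $T_2\le 2^{-r}-2^{-(p-2)}$. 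Restriction \textbf{R2} ($c=\tfrac{1}{p-r+1}$) makes $c^{2d}$ small enough that the second configuration dominates, so $\max_{\tilde{\w}\neq\tilde{\w}^*}(\tilde{\w}^*)^\top\tilde{\w}=1-2^{-r}+c^{2d}(2^{-r}-2^{-(p-2)})$, which is precisely the numerator of $c_0$; hence $a(\tilde{\w})\le c_0$. Since $\text{erf}$ is increasing, $R(\tilde{\W})-R(\tilde{\W}^*)=\tfrac12(\text{erf}(c_1)-\text{erf}(a(\tilde{\w})))\ge\tfrac12(\text{erf}(c_1)-\text{erf}(c_0))$ whenever $\tilde{\w}\neq\tilde{\w}^*$, while the indicator absorbs the trivial case $\tilde{\w}=\tilde{\w}^*$ (identical predictors, zero excess risk), establishing \eqref{eq:23}. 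The main obstacle is exactly this combinatorial optimization: identifying the worst-case perturbation of $\tilde{\w}^*$ and verifying that \textbf{R2} is the precise condition under which a single sign flip in the scaled tail block beats any rearrangement in the permutation block.
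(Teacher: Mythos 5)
Your proposal is correct and follows essentially the same route as the paper's proof: reduce $R(\tilde{\W})$ to a one-dimensional Gaussian probability $\tfrac12(1-\mathrm{erf}(a(\tilde{\w})))$ with a denominator that is constant over $\G_{p,d,r}$, identify the worst-case value of $(\tilde{\w}^*)^{\top}\tilde{\w}$ over $\tilde{\w}\neq\tilde{\w}^*$ via the rearrangement inequality (sign flip in the scaled tail block versus a swap of the two smallest entries in the permuted block), and conclude by monotonicity of $\mathrm{erf}$. Your two-configuration comparison is a cleaner packaging of the paper's five-case partition, and you correctly isolate the same crux (that under Restriction \textbf{A2} the tail sign flip dominates) that the paper also asserts at the corresponding step.
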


\begin{proof}
	In this proof we analyze five different cases and show the claimed excess risk lower bound holds. Consider $\tilde{\W}, \tilde{\W}^* \in \G_{p,d}$ with $\tilde{\W} \neq \tilde{\W}^*$. Recall $\tilde{\W} = \left( \tilde{\W}_d, \cdots, \tilde{\W}_1, \tilde{\w}_0 \right)$ with $\tilde{\W}_i$ being $p \times p$ matrices and $\tilde{\w}_0$ being a vector in $\R^p$.
	
	First we analyze how $\tilde{\w} := \tilde{\W}_d \cdots \tilde{\W}_1 \tilde{\w}_0$ affect the risk $R(\tilde{\W})$. Let $u := y \cdot \tilde{\w}^{\top} \x$, then $u$ is normally distributed with mean $\tilde{\w}^{\top} \tilde{\w}^*$ and variance $\sigma^2 \tilde{\w}^{\top} \left( \I_p - \M_d(\tilde{\W}^*) \right)^{-1} \tilde{\w}$, because $(\x, y) ; \tilde{\W}^*$ is parametrized by $\tilde{\W}^*$, Thus the risk is 
	\begin{align}
	\begin{split}
	R(\tilde{\W})  
	&= P_{u } \left[ u \leq 0 \right] \\
	&= P_{z \sim N(0, 1) } \left[ 
	z \leq \frac{- \tilde{\w}^{\top} \tilde{\w}^* } {\sqrt{\sigma^2 \tilde{\w}^{\top} \left( \I_p - \M_d(\tilde{\W}^*) \right)^{-1} \tilde{\w}}} \right] \\
	&= \frac{1}{2} \left[ 1 + \text{erf} \left( \frac{- \tilde{\w}^{\top} \tilde{\w}^* }{\sqrt{2 \sigma^2 \tilde{\w}^{\top} \left( \I_p - \M_d(\tilde{\W}^*) \right)^{-1} \tilde{\w} }} \right) \right] \\
	&= \frac{1}{2} \left[ 1 - \text{erf} \left( \frac{ \tilde{\w}^{\top} \tilde{\w}^* }{\sqrt{2 \sigma^2 \tilde{\w}^{\top} \left( \I_p - \M_d(\tilde{\W}^*) \right)^{-1} \tilde{\w} }} \right) \right]
	\end{split}
	\end{align}
	
	As shown in eq \eqref{eq:87}, we know that $\left( \I_p - \M_d(\tilde{\W}^*) \right)^{-1} = \begin{bmatrix}
	(d+1) \I_r & \mathbf{0} \\
	\mathbf{0} & \left( 1 + \sum_{j=1}^d c^{2j} \right) \I_{p-r}
	\end{bmatrix}$
	where $c = \frac{1}{p-r+1}$. Therefore we can calculate the denominator inside the erf function of $R(\tilde{\W})$ exactly:
	\begin{align}
	\begin{split}
	&\tilde{\w}^{\top} \left( \I_p - \M_d(\tilde{\W}^*) \right)^{-1} \tilde{\w} \\
	=& (d+1) \sum_{k=1}^{r} (\tilde{\w})_k^2 + \left( 1 + \sum_{j=1}^d c^{2j} \right) \sum_{k=r+1}^{p} (\tilde{\w})_k^2 \\
	=& (d+1) \sum_{k=1}^{r} (\tilde{\w}_0)_k^2 + \left( 1 + \sum_{j=1}^d c^{2j} \right) \sum_{k=r+1}^{p} \left( c^d (\tilde{\w}_0)_k \right)^2 \\
	=& (d+1) \left( \frac{1}{2} + \frac{1}{4} + \cdots \frac{1}{2^r} \right) +  \left( 1 + \sum_{j=1}^d c^{2j} \right) \left[ c^{2d} \left( \frac{1}{2^{r+1}} + \cdots + \frac{1}{2^{p-1}} + \frac{1}{2^{p-1}} \right) \right] \\
	=& (d+1) \left( 1 - \frac{1}{2^r} \right) + \left( \frac{1-c^{2(d+1)}}{1 - c^2} \right) \left( \frac{ c^{2d}}{2^r} \right)
	\end{split}
	\end{align}
	due to our choice of $\tilde{\W}_i$ and $\tilde{\w}_0$ - the first $r$ entries of $\tilde{\w}_0$ only get permuted and do not get scaled, while the last $(p-r)$ entries of $\tilde{\w}_0$ do not get permuted but get scaled by $d$ times.
	
	Thus we have the risk of the truth $\tilde{\W}^*$:
	\begin{align}
	\begin{split}
	R(\tilde{\W}^*) 
	&= \frac{1}{2} \left[ 1 - \text{erf} \left( \frac{ \| \tilde{\w}^* \|_2^2 }{ \sigma \sqrt{2 \left[ (d+1) \left( 1 - \frac{1}{2^r} \right) + \left( \frac{1-c^{2(d+1)}}{1 - c^2} \right) \left( \frac{ c^{2d}}{2^r} \right) \right]} } \right) \right] \\
	&= \frac{1}{2} \left[ 1 - \text{erf} \left( \frac{ 1 - \frac{1}{2^r} + \frac{c^{2d}}{2^r} }{ \sigma \sqrt{2 \left[ (d+1) \left( 1 - \frac{1}{2^r} \right) + \left( \frac{1-c^{2(d+1)}}{1 - c^2} \right) \left( \frac{ c^{2d}}{2^r} \right) \right]} } \right) \right] \\
	&= \frac{1}{2} \left[ 1 - \text{erf}(c_1) \right]
	\end{split}
	\end{align}
	
	Now we partition the hypothesis class $\G_{p,d}$ into five cases, where the truth $\tilde{\W}^*$ is viewed as fixed, and we analyze the lower bound of the excess risk $R(\tilde{\W}) - R(\tilde{\W}^*)$ for each case, and $\tilde{\W}, \tilde{\W}^* \in \G_{p,d}$.
	
	\vline
	
	\noindent \textbf{Case 1:} $\tilde{\w}_0 \neq \tilde{\w}_0^*, \tilde{\W}_1 = \tilde{\W}_1^*, \cdots, \tilde{\W}_d = \tilde{\W}_d^*$. 
	Note this implies $\tilde{\W}_d \cdots \tilde{\W}_1 = \tilde{\W}_d^* \cdots \tilde{\W}_1^*$. 
	
	With loss of generatlity, as the matrix to be multiplied with the vector $\tilde{\w}_0$ is the same as the matrix to be multiplied with the vector $\tilde{\w}_0^*$, and because both matrices consist of a diagonal block of permutation matrix and another diagonal block of scaled identity matrix, we can assume
	\begin{align}
	\begin{split}
	\tilde{\W}_d = \cdots = \tilde{\W}_1 = \begin{bmatrix}
	\mathbf{I}_r & \mathbf{0} \\
	\mathbf{0} & c \mathbf{I}_{p-r}
	\end{bmatrix},
	\end{split}
	\end{align}
	that is, we assume the top $r \times r$ block of each $\tilde{\W}_i$ is $\mathbf{I}_r$. Thus 
	\begin{align}
	\tilde{\W} = \tilde{\W}_d, \cdots, \tilde{\W}_1, \tilde{\w}_0 = \begin{bmatrix}
	\mathbf{I}_r & \mathbf{0} \\
	\mathbf{0} & c^d \mathbf{I}_{p-r}
	\end{bmatrix}
	\tilde{\w}_0, 
	\tilde{\W}^* = \tilde{\W}_d^*, \cdots, \tilde{\W}_1^*, \tilde{\w}_0^* = \begin{bmatrix}
	\mathbf{I}_r & \mathbf{0} \\
	\mathbf{0} & c^d \mathbf{I}_{p-r}
	\end{bmatrix}
	\tilde{\w}_0^*.
	\end{align}
	where $c = \frac{1}{p-r+1}$.
	
	It is easy to see that, for $\tilde{\w}_0, \tilde{\w}_0^* \in \left\{ \pm \frac{1}{\sqrt{2}} \right\} 
	\times \left\{ \pm \frac{1}{\sqrt{4}} \right\} 
	\times \cdots
	\times \left\{ \pm \frac{1}{\sqrt{2^{p-2}}} \right\} 
	\times \left\{ \pm \frac{1}{\sqrt{2^{p-1}}} \right\} 
	\times \left\{ \pm \frac{1}{\sqrt{2^{p-1}}} \right\}$ and $\tilde{\w}_0 \neq \tilde{\w}_0^*$, the largest $\tilde{\w}^{\top} \tilde{\w}^*$ is attained when $\tilde{\w}_0$ and $\tilde{\w}_0^*$ differ exactly at either the last entry or the second last entry, that is, the entry with smallest magnitude, $\pm \frac{1}{\sqrt{2^{p-1}}}$. In this case, 
	\begin{align} 
	\begin{split}
	\tilde{\w}^{\top} \tilde{\w}^* &\leq \frac{1}{2} + \cdots + \frac{1}{2^r} + c^{2d} \left( \frac{1}{2^{r+1}} + \cdots + \frac{1}{2^{p-1}} \right) - \frac{c^{2d}}{2^{p-1}} 
	= \left( 1 - \frac{1}{2^r} \right) + c^{2d} \left( \frac{1}{2^r} - \frac{1}{2^{p-2}} \right)
	\end{split}
	\end{align}
	
	Therefore we obtain a lower bound of $R(\tilde{\W}) - R(\tilde{\W}^*)$:
	\begin{align} \label{eq:114}
	\begin{split}
	R(\tilde{\W})  - R(\tilde{\W}^*) &= \frac{1}{2} \left[ 1 - \text{erf} \left( \frac{ \tilde{\w}^{\top} \tilde{\w}^* }{\sqrt{2 \sigma^2 \tilde{\w}^{\top} \left( \I_p - \M_d(\tilde{\W}^*) \right)^{-1} \tilde{\w} }} \right) \right] 
	- \frac{1}{2} \left[ 1 - \text{erf}(c_1) \right] \\
	&\geq \frac{1}{2} \left[ 1 - \text{erf} \left( \frac{ 1 - \frac{1}{2^r} + c^{2d} \left( \frac{1}{2^r} - \frac{1}{2^{p-2}} \right) }{ \sigma \sqrt{2 \left[ (d+1) \left( 1 - \frac{1}{2^r} \right) + \left( \frac{1-c^{2(d+1)}}{1 - c^2} \right) \left( \frac{ c^{2d}}{2^r} \right) \right]} } \right) \right] 
	- \frac{1}{2} \left[ 1 - \text{erf}(c_1) \right] \\ 
	&= \frac{1}{2} \left[ 1 - \text{erf}(c_0) \right] - \frac{1}{2} \left[ 1 - \text{erf}(c_1) \right] 
	= \frac{\text{erf}(c_1) - \text{erf}(c_0)}{2}
	\end{split}
	\end{align}
	It is easy to see that this lower bound is positive, as the erf function is increasing and the argument inside the first erf function, $c_1$, is obviously greater than the second, $c_0$.
	
	\vline
	
	\noindent \textbf{Case 2:} $\tilde{\w}_0 \neq \tilde{\w}_0^*, ( \tilde{\W}_d, \cdots, \tilde{\W}_1 ) \neq ( \tilde{\W}_d^*, \cdots, \tilde{\W}_1^* )$ but $\tilde{\W}_d \cdots \tilde{\W}_1 = \tilde{\W}_d^* \cdots \tilde{\W}_1^*$. 
	Note that the latter means the network parameterized by $\tilde{\W}$ and the network parametrized by $\tilde{\W}^*$ are different in at least one layer. The analysis of this case is identical to \textbf{Case 1}. 
	
	\vline
	
	\noindent \textbf{Case 3:} $\tilde{\w}_0 \neq \tilde{\w}_0^*, ( \tilde{\W}_d, \cdots, \tilde{\W}_1 ) \neq ( \tilde{\W}_d^*, \cdots, \tilde{\W}_1^* ), \tilde{\W}_d \cdots \tilde{\W}_1 \neq \tilde{\W}_d^* \cdots \tilde{\W}_1^*$. 
	
	For this case, there are three sub-cases:
	
	\textbf{Case 3(i):} $\tilde{\w}_0$ and $\tilde{\w}_0^*$ differ only in the first $r$ entries.
	
	\textbf{Case 3(ii):} $\tilde{\w}_0$ and $\tilde{\w}_0^*$ differ only in the last $p-r$ entries. 
	
	\textbf{Case 3(iii):} $\tilde{\w}_0$ and $\tilde{\w}_0^*$ differ in both the first $r$ and last $p-r$ entries. 
	
	Observe that, by our choice of hypothesis class described in Restriction \textbf{R1}, the quantity that controls the risk $R(\tilde{\W})$ has
	\begin{align} \label{eq:116}
	\begin{split}
	\tilde{\w}^{\top} \tilde{\w}^* 
	&= \tilde{\w}_{[1:r]}^{\top} \tilde{\w}_{[1:r]}^* + \tilde{\w}_{[r+1:p]}^{\top} \tilde{\w}_{[r+1:p]}^* \\
	&= \boldsymbol{\pi}_r((\tilde{\w}_0)_{[1:r]})^{\top} \boldsymbol{\pi}_r^*((\tilde{\w}_0^*)_{[1:r]}) + (c^d (\tilde{\w}_0)_{[r+1:p]})^{\top} (c^d (\tilde{\w}_0^*)_{[r+1:p]}) \\
	&= \boldsymbol{\pi}_r \left( (\boldsymbol{\pi}_r^*)^{-1} ((\tilde{\w}_0)_{[1:r]}) \right)^{\top} (\tilde{\w}_0^*)_{[1:r]} + c^{2d} (\tilde{\w}_0^*)_{[r+1:p]}^{\top}  (\tilde{\w}_0)_{[r+1:p]} \\
	&= \boldsymbol{\pi} \left( (\tilde{\w}_0)_{[1:r]} \right)^{\top} (\tilde{\w}_0^*)_{[1:r]} + c^{2d} (\tilde{\w}_0^*)_{[r+1:p]}^{\top}  (\tilde{\w}_0)_{[r+1:p]}
	\end{split} 
	\end{align}
	where $\boldsymbol{\pi}_r$ and $\boldsymbol{\pi}_r^*$ in the second line respectively correspond to the permutation matrix in the top $r \times r$ block in $\tilde{\W}_d \cdots \tilde{\W}_1$ and $\tilde{\W}_d^* \cdots \tilde{\W}_1^*$, and on the third line we are essentially showing that we can assume the permutation on $(\tilde{\w}_0^*)_{[1:r]}$ is identity, and thus we only need to consider $(\tilde{\W}_d \cdots \tilde{\W}_1)_{[1:r]} \neq (\tilde{\W}_d^* \cdots \tilde{\W}_1^*)_{[1:r]} = \mathbf{I_r}$. On the fourth line, we simplify our notation by $\boldsymbol{\pi}$, where $\boldsymbol{\pi} \neq \mathbf{I_r}$. We are interested in an upper bound of $\tilde{\w}^{\top} \tilde{\w}^*$. Now we analyze the three sub-cases.
	
	\textbf{Case 3(ii)} is straightforward. As the last $(p-r)$ entries do not get permuted as shown in \eqref{eq:116}, thus the smallest negative contribution they can have is due to opposite signs at exactly one of the last two entries, where the values are in $\left\{ \pm \frac{1}{\sqrt{2^{p-1}}} \right\}$. This gives 
	\begin{align}
	\begin{split}
	\max\{ \tilde{\w}^{\top} \tilde{\w}^* : \tilde{\w} \in {\textbf{Case 3(ii)}} \}
	&= \boldsymbol{\sigma} \left( (\tilde{\w}_0)_{[1:r]} \right)^{\top} (\tilde{\w}_0^*)_{[1:r]} + c^{2d} \left( \frac{1}{2^{r+1}} + \cdots + \frac{1}{2^{p-2}} + \frac{1}{2^{p-1}} - \frac{1}{2^{p-1}} \right) \\
	&= \boldsymbol{\sigma} \left( (\tilde{\w}_0^*)_{[1:r]} \right)^{\top} (\tilde{\w}_0^*)_{[1:r]} + c^{2d} \left( \frac{1}{2^{r}} - \frac{1}{2^{p-2}} \right) \\
	&< (\tilde{\w}_0^*)_{[1:r]}^{\top} (\tilde{\w}_0^*)_{[1:r]} + c^{2d} \left( \frac{1}{2^{r}} - \frac{1}{2^{p-2}} \right) \\
	&= 1 - \frac{1}{2^r} + c^{2d} \left( \frac{1}{2^{r}} - \frac{1}{2^{p-2}} \right)
	\end{split}
	\end{align}
	where the second equality follows from \textbf{Case 3(ii)} assumes the first $r$ entries of $\tilde{\w}_0$ and $\tilde{\w}_0^*$ are identical. The third line follows from $\boldsymbol{\pi} \neq \mathbf{I}_r$ and the rearrangement inequality, because $\max \{ \boldsymbol{\pi} \left( (\tilde{\w}_0^*)_{[1:r]} \right)^{\top} (\tilde{\w}_0^*)_{[1:r]}: \text{any } r\text{-permutation } \boldsymbol{\pi} \}$ is attained by $\mathbf{I}_r$. 
	
	As a result, \textbf{Case 3(iii)} has the same upper bound on $\tilde{\w}^{\top} \tilde{\w}^*$:
	\begin{align}
	\begin{split}
	&\max \{ \boldsymbol{\pi} \left( (\tilde{\w}_0)_{[1:r]} \right)^{\top} (\tilde{\w}_0^*)_{[1:r]}: \text{any } r \text{-permutation } \boldsymbol{\pi} \neq \mathbf{I}_r, (\tilde{\w}_0)_{[1:r]} \neq (\tilde{\w}_0^*)_{[1:r]} \} \\
	<& \max \{ \boldsymbol{\pi} \left( (\tilde{\w}_0^*)_{[1:r]} \right)^{\top} (\tilde{\w}_0^*)_{[1:r]}: \text{any } r\text{-permutation } \boldsymbol{\pi} \} = 1 - \frac{1}{2^r} \\
	\implies& \max\{ \tilde{\w}^{\top} \tilde{\w}^* : \tilde{\w} \in {\textbf{Case 3(iii)}} \} \\
	&= \max\{ \boldsymbol{\pi} (\tilde{\w}_0^{\top}) \tilde{\w}_0^* : (\tilde{\w}_0)_{[1:r]} \neq (\tilde{\w}_0^*)_{[1:r]}, (\tilde{\w}_0)_{[r+1:p]} \neq (\tilde{\w}_0^*)_{[r+1:p]},  \text{any } r \text{-permutation } \boldsymbol{\pi} \neq \mathbf{I}_r \} \\
	&< 1 - \frac{1}{2^r} + c^{2d} \left( \frac{1}{2^{r}} - \frac{1}{2^{p-2}} \right)
	\end{split}
	\end{align}
	
	\textbf{Case 3(i)} has the same upper bound as well,
	\begin{align} \label{eq:119}
	\begin{split}
	\max\{ \tilde{\w}^{\top} \tilde{\w}^* : \tilde{\w} \in {\textbf{Case 3(i)}} \} \leq 1 - \frac{1}{2^{r-2}} + \frac{c^{2d}}{2^{r}} < 1 - \frac{1}{2^r} + c^{2d} \left( \frac{1}{2^{r}} - \frac{1}{2^{p-2}} \right)
	\end{split}
	\end{align}
	
	The first inequality in \eqref{eq:119} is obtained under either of the two scenarios by a particular $(\tilde{\w}_0)_{[1:r]}$ and $\boldsymbol{\pi}$.
	
	\textbf{Scenario 1:} For $i = 1, \cdots, r-1, (\tilde{\w}_0)_i = (\tilde{\w}_0^*)_i$ and $(\tilde{\w}_0)_r = -(\tilde{\w}_0^*)_r$, and the $r$-permutation $\boldsymbol{\pi} \neq \mathbf{I}_r$ only switches the last two entries.
	
	\textbf{Scenario 2:} For $i \neq r-1,  (\tilde{\w}_0)_i = (\tilde{\w}_0^*)_i$ and $(\tilde{\w}_0)_{r-1} = -(\tilde{\w}_0^*)_{r-1}$, and the $r$-permutation $\boldsymbol{\pi} \neq \mathbf{I}_r$ only switches the last two entries.
	
	It is easy to see that both scenarios lead to the same $\tilde{\w}^{\top} \tilde{\w}^*$:
	\begin{align}
	\begin{split}
	\tilde{\w}^{\top} \tilde{\w}^* 
	&= \boldsymbol{\pi}((\tilde{\w}_0)_{[1:r]})^{\top} (\tilde{\w}_0^*)_{[1:r]} + (\tilde{\w}_0)_{[r+1:p]}^{\top} (\tilde{\w}_0^*)_{[r+1:p]} \\ 
	&= (\tilde{\w}_0^*)_{[1:r-2]}^{\top} (\tilde{\w}_0^*)_{[1:r-2]} + \frac{1}{\sqrt{2^{r-1}}} \frac{1}{\sqrt{2^r}} - \frac{1}{\sqrt{2^{r-1}}} \frac{1}{\sqrt{2^r}} + (\tilde{\w}_0^*)_{[r+1:p]}^{\top} (\tilde{\w}_0^*)_{[r+1:p]} \\
	&= \left( \frac{1}{2} + \cdots + \frac{1}{2^{r-2}} \right) + c^{2d} \left( \frac{1}{2^{r+1}} + \cdots + \frac{1}{2^{p-2}} + \frac{1}{2^{p-1}} + \frac{1}{2^{p-1}} \right) 
	= 1 - \frac{1}{2^{r-2}} + \frac{c^{2d}}{2^{r}}
	\end{split}
	\end{align}
	where, since $c = \frac{1}{p-r+1}$, the second inequality in \eqref{eq:119} holds.
	
	\vline
	
	\noindent \textbf{Case 4:} $\tilde{\w}_0 = \tilde{\w}_0^*, ( \tilde{\W}_d, \cdots, \tilde{\W}_1 ) \neq ( \tilde{\W}_d^*, \cdots, \tilde{\W}_1^* )$ but $\tilde{\W}_d \cdots \tilde{\W}_1 = \tilde{\W}_d^* \cdots \tilde{\W}_1^*$.
	
	In this case we will have $R(\tilde{\W}) = R(\tilde{\W}^*)$ as $\tilde{\w} = \tilde{\w}^*$. Thus we analyze the size of this case, that is, calculate how many $\tilde{\W} = ( \tilde{\W}_d, \cdots, \tilde{\W}_1, \tilde{\w}_0)$ fall into this case.
	
	Consider the truth $\tilde{\W}^* = ( \tilde{\W}_d^*, \cdots, \tilde{\W}_1^*, \tilde{\w}_0^*)$ fixed, then $\tilde{\W}_d^* \cdots \tilde{\W}_1^*$ is a fixed diagonal matrix, with its top $r \times r$ block determined by the top $r \times r$ block of each $\tilde{\W}_i^*$ ($i = 1, \cdots, d$), and its top $r \times r$ block, $(\tilde{\W}_d^* \cdots \tilde{\W}_1^*)_{[1:r,1:r]}$, represents an $r$-permutation. Let this $r$-permutation be $\boldsymbol{\tau}$. 
	
	Therefore, in order to have $\tilde{\W}_d \cdots \tilde{\W}_1 = \tilde{\W}_d^* \cdots \tilde{\W}_1^*$, we effectively need $(\tilde{\W}_d \cdots \tilde{\W}_1)_{[1:r,1:r]} = (\tilde{\W}_d)_{[1:r,1:r]} \cdots (\tilde{\W}_1)_{[1:r,1:r]} =  \boldsymbol{\tau}$. Consider the first $d-1$ matrices in the product, $(\tilde{\W}_d)_{[1:r,1:r]} \cdots (\tilde{\W}_2)_{[1:r,1:r]}$, a product of $d-1$ $r$-permutations. Consider any $\left( (\tilde{\W}_d)_{[1:r,1:r]}, \cdots, (\tilde{\W}_2)_{[1:r,1:r]} \right)$, there are $(r!)^{d-1}$ of those $(d-1)$-tuples of $r$-permutations. In order to have the product of $\left( (\tilde{\W}_d)_{[1:r,1:r]}, \cdots, (\tilde{\W}_2)_{[1:r,1:r]} \right)$ and $(\tilde{\W}_1)_{[1:r,1:r]}$ equal to $\boldsymbol{\tau}$, there is only one choice for $(\tilde{\W}_1)_{[1:r,1:r]}$. Therefore the set $\{ (\tilde{\W}_d, \cdots, \tilde{\W}_1): \tilde{\W}_d \cdots \tilde{\W}_1 = \tilde{\W}_d^* \cdots \tilde{\W}_1^* \}$ has cardinality $(r!)^{d-1}$. 
	
	Then, because we have $\tilde{\w}_0 = \tilde{\w}_0^*$, we have $\{ (\tilde{\W}_d, \cdots, \tilde{\W}_1, \tilde{\w}_0): \tilde{\W}_d \cdots \tilde{\W}_1 = \tilde{\W}_d^* \cdots \tilde{\W}_1^* \}$ of cardinality $(r!)^{d-1}$ as well. Furthermore, \textbf{Case 4} is a proper subset of it:
	\begin{align} \label{eq:105}
	\begin{split}
	&\{ (\tilde{\W}_d, \cdots, \tilde{\W}_1, \tilde{\w}_0): ( \tilde{\W}_d, \cdots, \tilde{\W}_1 ) \neq ( \tilde{\W}_d^*, \cdots, \tilde{\W}_1^* ), \tilde{\W}_d \cdots \tilde{\W}_1 = \tilde{\W}_d^* \cdots \tilde{\W}_1^*\} \\
	&\subsetneq \{ (\tilde{\W}_d, \cdots, \tilde{\W}_1, \tilde{\w}_0): \tilde{\W}_d \cdots \tilde{\W}_1 = \tilde{\W}_d^* \cdots \tilde{\W}_1^*\} \\
	\implies& \\
	&| \{ (\tilde{\W}_d, \cdots, \tilde{\W}_1, \tilde{\w}_0): ( \tilde{\W}_d, \cdots, \tilde{\W}_1 ) \neq ( \tilde{\W}_d^*, \cdots, \tilde{\W}_1^* ), \tilde{\W}_d \cdots \tilde{\W}_1 = \tilde{\W}_d^* \cdots \tilde{\W}_1^*\} | \\
	&< |\{ (\tilde{\W}_d, \cdots, \tilde{\W}_1, \tilde{\w}_0): \tilde{\W}_d \cdots \tilde{\W}_1 = \tilde{\W}_d^* \cdots \tilde{\W}_1^*\}| = (r!)^{d-1} \\
	\end{split}
	\end{align}
	
	Now we show \textbf{Case 4} is very small compared to the entire hypothesis class.
	\begin{align}
	\begin{split}
	&\log | \G_{p,d} \setminus \{ (\tilde{\W}_d, \cdots, \tilde{\W}_1, \tilde{\w}_0): \textbf{Case 4} \} | \\
	>& \log \left( |\G_{p,d}| - (r!)^{d-1} \right) 
	= \log \left( 2^p \cdot (r!)^d - (r!)^{d-1} \right) \\
	=& \log \left( \left( 2^p \cdot r! - 1 \right) \cdot (r!)^{d-1} \right) 
	= \log \left( 2^p \cdot r! - 1 \right) + (d-1) \sum_{i=1}^r \log(i) \\
	\approx& \log \left( 2^p \cdot r! \right) + (d-1) \sum_{i=1}^r \log(i) \\
	=& p \log2 + d \sum_{i=1}^r \log(i)
	\in \Theta(p + dr)
	\end{split}
	\end{align}
	which is of the same order as $\log |\G_{p,d}|$ in Restriction \textbf{R1}.
	
	\vline
	
	\noindent \textbf{Case 5:} $\tilde{\w}_0 = \tilde{\w}_0^*, ( \tilde{\W}_d, \cdots, \tilde{\W}_1 ) \neq ( \tilde{\W}_d^*, \cdots, \tilde{\W}_1^* ), \tilde{\W}_d \cdots \tilde{\W}_1 \neq \tilde{\W}_d^* \cdots \tilde{\W}_1^*$.
	
	Without loss of generality, we assume
	\begin{align}
	\begin{split}
	\tilde{\W}_d^* = \cdots = \tilde{\W}_1^* = \begin{bmatrix}
	\mathbf{I}_r & \mathbf{0} \\
	\mathbf{0} & c \mathbf{I}_{p-r}
	\end{bmatrix}, 
	\tilde{\w}_0^* = \tilde{\w}_0 = \left( \frac{1}{\sqrt{2}}, \frac{1}{\sqrt{4}}, \cdots, \frac{1}{\sqrt{2^{p-2}}}, \frac{1}{\sqrt{2^{p-1}}}, \frac{1}{\sqrt{2^{p-1}}} \right)
	\end{split}
	\end{align}. 
	
	As $\tilde{\W}_d \cdots \tilde{\W}_1 \neq \tilde{\W}_d^* \cdots \tilde{\W}_1^*$, $\tilde{\w}_{[1:r]} = \boldsymbol{\pi}(\tilde{\w}_0)_{[1:r]}$ wil not be the same as $\tilde{\w}^*_{[1:r]}$, that is, their entries will consist of the same values, but their values will not be in the same order and thus will not achieve the maximum $1$ by the rearrangement inequality. 
	
	Futhermore, the second largest value $\tilde{\w}_{[1:r]}^{\top} \tilde{\w}^*_{[1:r]} =  \boldsymbol{\pi}(\tilde{\w}_0)_{[1:r]}^{\top}  \tilde{\w}^*_{[1:r]}$ could obtain is by an $r$-permutation $\boldsymbol{\pi}$ that switches the two smallest entries, that is, $\boldsymbol{\pi}$ that exchanges $\frac{1}{\sqrt{2^{r-1}}}$ and $\frac{1}{\sqrt{2^{r}}}$ in $\tilde{\w}_0$. This $\boldsymbol{\pi}$ would give us $\tilde{\w}_{[1:r]}$ such that
	\begin{align}
	\begin{split}
	\tilde{\w}^{\top} \tilde{\w}^* 
	=& \left( \frac{1}{\sqrt{2}} \right)^2 + \left( \frac{1}{\sqrt{2^2}} \right)^2 + \cdots + \left( \frac{1}{\sqrt{2^{r-2}}} \right)^2 + \left( \frac{1}{\sqrt{2^{r}}} \right) \left( \frac{1}{\sqrt{2^{r-1}}} \right) + \left( \frac{1}{\sqrt{2^{r-1}}} \right) \left( \frac{1}{\sqrt{2^{r}}} \right) \\
	&+ \left( \frac{c^{d}}{\sqrt{2^{r+1}}} \right)^2 + \cdots + \left( \frac{c^{d}}{\sqrt{2^{p-2}}} \right)^2 + \left( \frac{c^{d}}{\sqrt{2^{p-1}}} \right)^2 + \left( \frac{c^{d}}{\sqrt{2^{p-1}}} \right)^2 \\
	=& 1 - \frac{1}{2^{r-2}} + \frac{1}{\sqrt{2^{2r-3}}} +  \frac{c^{2d}}{2^r}
	\end{split}
	\end{align}
	By comparing this to the excess risk lower bound in \eqref{eq:114}, it is easy to see that $\left( 1 - \frac{1}{2^r} \right) + c^{2d} \left( \frac{1}{2^r} - \frac{1}{2^{p-2}} \right)$ in $\eqref{eq:114}$ is still greater than $1 - \frac{1}{2^{r-2}} + \frac{1}{\sqrt{2^{2r-3}}} +  \frac{c^{2d}}{2^r}$ for $c = \frac{1}{p-r+1}$. 
	
	Therefore, after examining the five cases of $\tilde{\W} = (\tilde{\W}_d, \cdots, \tilde{\W}_1, \tilde{\w}_0)$ together with any fixed $\tilde{\W}^* = (\tilde{\W}_d^*, \cdots, \tilde{\W}_1^*, \tilde{\w}_0^*)$, we know except for one case that has negligible size compared with the entire hypothesis class $\G_{p,d}$, the largest excess risk lower bound is given by \textbf{Case 1} in \eqref{eq:114}, 
	\begin{align} \label{eq:127} 
	\begin{split}
	R(\tilde{\W})  - R(\tilde{\W}^*) 
	=& \frac{1}{2} \left[ 1 - \text{erf} \left( \frac{ \tilde{\w}^{\top} \tilde{\w}^* }{\sqrt{2 \sigma^2 \tilde{\w}^{\top} \left( \I_p - \M_d(\tilde{\W}^*) \right)^{-1} \tilde{\w} }} \right) \right] \\
	&- \frac{1}{2} \left[ 1 - \text{erf} \left( \frac{ \| \tilde{\w}^* \|_2^2 }{\sqrt{2 \sigma^2 \tilde{\w}^{\top} \left( \I_p - \M_d(\tilde{\W}^*) \right)^{-1} \tilde{\w} }}\right) \right] \\
	\geq& \frac{1}{2} \left[ \text{erf} \left( \frac{ 1 - \frac{1}{2^r} + \left( \frac{c^{2d}}{2^r} \right) }{\sqrt{2 \sigma^2 \tilde{\w}^{\top} \left( \I_p - \M_d(\tilde{\W}^*) \right)^{-1} \tilde{\w} }}\right) 
	- \text{erf} \left( \frac{ \tilde{\w}^{\top} \tilde{\w}^* }{\sqrt{2 \sigma^2 \tilde{\w}^{\top} \left( \I_p - \M_d(\tilde{\W}^*) \right)^{-1} \tilde{\w} }} \right) \right] \\
	=& \frac{\text{erf}(c_1) - \text{erf}(c_0)}{2},
	\end{split}
	\end{align}
	and for $\tilde{\W} \in \G_{p,d}$ such that $\tilde{\W} \neq \tilde{\W}^*$ but $\tilde{\w} = \tilde{\w}^*$, $R(\tilde{\W}) = R(\tilde{\W}^*)$. Thus we have the equation \eqref{eq:23} in the lemma.
	
\end{proof}

\subsubsection{Proof of Theorem 3.3 in the main text}

Now we have analyzed the excess risk $R(\tilde{\W}) - R(\tilde{\W}^*)$ and found that $\tilde{\w} = \tilde{\w}^*$ or not decides whether the excess risk is positive or not, we remove the indicator function $\mathds{1}\{ \tilde{\w} \neq \tilde{\w}^* \}$ by applying the distance-based Fano's inequality (Lemma 3.8 in the main text). 

\begin{proof}
	
	By Theorem 3.1 in the main text, we know that
	\begin{align}
	\begin{split}
	n \leq \frac{\sigma^2 \left[ d \left( \sum_{i=1}^{r} \log(i) \right) + p \log(2) - \log(4) \right] }{4}
	\implies& 
	\xi_1(\hat{f}, \mathbb{P}) := P_{(\tilde{\W}^*, S) \sim \mathbb{P}}(\hat{f}(S) \neq \tilde{\W}^* ) \geq \frac{1}{2}
	\end{split}
	\end{align}
	for a fixed truth $\tilde{\W}^* \in \G_{p,d}$, any decoder $\hat{f} \in \Psi(\G_{p,d})$ and any $\mathbb{P} \in \mathcal{P}' := \mathcal{P}_{\tilde{\W}, S}( \{\text{Uniform}(\mathcal{G}_{p, d})\}, \mathcal{P}_{(\x, y) | \tilde{\W}}^{\text{Id}, \mathcal{Q}_{\x}} )$ as described in Theorem 3.1 in the main text.
	
	By Lemma \ref{lemma:4.13} we know that
	\begin{align}
	\begin{split}
	\tilde{\W} \neq \tilde{\W}^* 
	\implies 
	R(\tilde{\W}) - R(\tilde{\W}^*) \geq 
	\mathds{1} \{ \tilde{\w} \neq \tilde{\w}^* \} \cdot \frac{\text{erf}(c_1) - \text{erf}(c_0)}{2}
	\end{split}
	\end{align}
	
	However, the indicator function $\mathds{1} \{ \tilde{\w} \neq \tilde{\w}^* \}$ is undesired in an excess risk lower bound. Also it signals an identifiability issue, as it is possible to have distinct $\tilde{\W} = (\tilde{\W}_d, \cdots, \tilde{\W}_1, \tilde{\w}_0)$ and $\tilde{\W}^* = (\tilde{\W}_d^*, \cdots, \tilde{\W}_1^*, \tilde{\w}_0^*)$ such that $\tilde{\W}_d \cdots \tilde{\W}_1 \tilde{\w}_0 = \tilde{\W}_d^* \cdots \tilde{\W}_1^* \tilde{\w}_0^*$. This is discussed in the proof of Lemma \ref{lemma:4.13}, where we showed such scenario does exist, though for any fixed truth $\tilde{\W}^*$, the set of $\tilde{\W} \neq \tilde{\W}^*$ such that $\tilde{\w} = \tilde{\w}^*$ has exponentially small cardinality. 
	
	To resolve this issue, we employ a distance-based Fano's inequality (Lemma 3.8 in the main text) and we achieve a positive excess risk lower bound with a sample complexity lower bound losing a factor of $d$.
	
	We first define a symmetric function $\rho: \mathcal{V} \times \mathcal{V} \to \mathbb{R}$, and show it is a metric (though $\rho$ is not required to be a metric for the distance-based Fano's inequality), and then we show for $t = 1$ we can achieve the claimed sample complexity lower bound.
	
	Let $\rho(\tilde{\W}, \tilde{\W}') := \mathds{1} \{\tilde{\W} \neq \tilde{\W}'\} + \mathds{1} \{\tilde{\w} \neq \tilde{\w}'\}$ for any $\tilde{\W}, \tilde{\W}' \in \G_{p,d}$ as defined in restriction \textbf{R1}, where $\tilde{\w} = \tilde{\W}_d \cdots \tilde{\W}_1 \tilde{\w}_0$. It is easy to see that $\rho$ is symmetric and nonnegative, and equals to zero if and only if $\tilde{\W} = \tilde{\W}'$. Then it remains to verify the triangle inequality. We verify this case by case.
	
	\noindent \textbf{Case 1:} $\tilde{\W} = \tilde{\W}'$. Then for any $\tilde{\W}'' \in \G_{p,d}$, it is trivial that $\rho(\tilde{\W}, \tilde{\W}') = 0 \leq \rho(\tilde{\W}, \tilde{\W}'') + \rho(\tilde{\W}'', \tilde{\W}')$.
	
	\noindent \textbf{Case 2:} $\tilde{\W} \neq \tilde{\W}'$ and $\tilde{\w} = \tilde{\w}'$. The latter means $\rho(\tilde{\W}, \tilde{\W}') = 1$. Then for any $\tilde{\W}'' \in \G_{p,d}$, without loss of generality, there are three possibilities, and for all three possibilities the triangle inequality $\rho(\tilde{\W}, \tilde{\W}') \leq \rho(\tilde{\W}, \tilde{\W}'') + \rho(\tilde{\W}'', \tilde{\W}')$ holds. 
	
	(i) $\tilde{\W}'' = \tilde{\W}'$, then $\rho(\tilde{\W}, \tilde{\W}'') = \rho(\tilde{\W}, \tilde{\W}') = 1$ and $\rho(\tilde{\W}'', \tilde{\W}') = 0$. 
	
	(ii) $\tilde{\W}'' \neq \tilde{\W}'$ and $\tilde{\w}'' = \tilde{\w}'$, then $\rho(\tilde{\W}, \tilde{\W}'') = \rho(\tilde{\W}, \tilde{\W}') = 1$ and $\rho(\tilde{\W}'', \tilde{\W}') = 1$. 
	
	(iii) $\tilde{\W}'' \neq \tilde{\W}'$ and $\tilde{\w}'' \neq \tilde{\w}'$, then $\rho(\tilde{\W}, \tilde{\W}'') = \rho(\tilde{\W}, \tilde{\W}') = 2$ and $\rho(\tilde{\W}'', \tilde{\W}') = 2$. 
	
	\noindent \textbf{Case 3:} $\tilde{\W} \neq \tilde{\W}'$ and $\tilde{\w} \neq \tilde{\w}'$. The latters means $\rho(\tilde{\W}, \tilde{\W}') = 2$. Then for any $\tilde{\W}'' \in \G_{p,d}$, without loss of generality, there are also three possibilities, and the triangle inequality $\rho(\tilde{\W}, \tilde{\W}') \leq \rho(\tilde{\W}, \tilde{\W}'') + \rho(\tilde{\W}'', \tilde{\W}')$ holds. 
	
	(i) $\tilde{\W}'' = \tilde{\W}'$, then $\rho(\tilde{\W}, \tilde{\W}'') = \rho(\tilde{\W}, \tilde{\W}') = 2$ and $\rho(\tilde{\W}'', \tilde{\W}') = 0$. 
	
	(ii) $\tilde{\W}'' \neq \tilde{\W}'$ and $\tilde{\w}'' = \tilde{\w}'$, then $\rho(\tilde{\W}, \tilde{\W}'') = \rho(\tilde{\W}, \tilde{\W}') = 2$ and $\rho(\tilde{\W}'', \tilde{\W}') = 1$. 
	
	(iii) $\tilde{\W}'' \neq \tilde{\W}'$ and $\tilde{\w}'' \neq \tilde{\w}'$, then $\rho(\tilde{\W}, \tilde{\W}'') = \rho(\tilde{\W}, \tilde{\W}') = 2$ and $\rho(\tilde{\W}'', \tilde{\W}') = 2$. 
	
	Now we calculate $N_t^{\max}$ and $N_t^{\min}$ for $t = 1$. Note $N_1^{\max} = \max_{v \in \mathcal{V}} \{ \text{card} \{ v' \in \mathcal{V}: \rho(v, v') \leq 1 \} \} = \max_{\tilde{\W} \in \G_{p,d}} \{ 	\text{card} \{ \tilde{\W}' \in \G_{p,d}: \tilde{\w}' = \tilde{\w}\} \} = (r!)^{d-1}$. That is, for any $\tilde{\W} \in \G_{p,d}$, the set $\{ \tilde{\W}' \in \G_{p,d}: \tilde{\w}' = \tilde{\w}\}$ has same cardinality $(r!)^{d-1}$ as shown in eq \eqref{eq:105}. Thus $N_1^{\min} = (r!)^{d-1}$ as well. Obviously $|\G_{p,d}| - N_1^{\min} > N_1^{\max}$.
	
	Now apply the distace-based Fano's inequality, for any truth $\tilde{\W}^*$ uniformly chosen by nature from $\G_{p,d}$ and any hypothesis $\tilde{\W} \in \G_{p,d}$ obtained by any procedure, we have
	\begin{align}
	&P(\rho(\tilde{\W}, \tilde{\W}^*) > 1) 
	\geq 1 - \frac{\mathbb{I}(\tilde{\W}^*;S) + \log 2 }{\log \frac{|\G_{p,d}|}{N_1^{\max}}} 
	\geq 1 - \frac{\frac{2n}{ \sigma^2 } + \log 2 }{\log \frac{ \left( r! \right)^{d}  \cdot 2^p }{(r!)^{d-1}}} \stackrel{\text{set}}{\geq} \frac{1}{2} 
	\impliedby
	n \leq \frac{\sigma^2}{4} \left[ p \log(2) + \sum_{i=1}^{r} \log(i) \right]
	\end{align}
	
	Observe that the event $\{ \rho(\tilde{\W}, \tilde{\W}^*) > 1 \}$ is equivalent to $\{ \tilde{\W} \neq \tilde{\W}^* \text{ and } \tilde{\w} \neq \tilde{\w}'\}$, which in turn implies eq \eqref{eq:127}, $R(\tilde{\W}) - R(\tilde{\W}^*) \geq \frac{1}{2} \left[ \text{erf}(c_1) - \text{erf}(c_0) \right]$. Therefore we have
	\begin{align}
	\begin{split}
	&n \leq \frac{\sigma^2}{4} \left[ p \log(2) + \sum_{i=1}^{r} \log(i) \right] 
	\implies 
	\xi_2(\hat{f}, \mathbb{P}) := P_{(\tilde{\W}^*, S) \sim \mathbb{P}} \left(\tilde{R}(\hat{f}(S)) \geq
	\frac{\text{erf} \left( c_1 \right) - \text{erf} \left( c_0 \right) }{2} \right) \geq \frac{1}{2} \\
	\end{split}
	\end{align}
\end{proof}

\subsection{Linear approximation of the excess risk lower bound in Theorem 3.3 in the main text}

As the arguments in both $\text{erf}$ functions in \eqref{eq:23}, $c_1$ and $c_0$, have numerator less than $1$ and the denominator is $\sigma \sqrt{2 \left[(d+1) \left( 1 - \frac{1}{2^r} \right) + \left( \frac{1-c^{2(d+1)}}{1 - c^2} \right) \left( \frac{ c^{2d}}{2^r} \right) \right]}$, which is roughly $2 \sigma \sqrt{d}$, with $\sigma$ being a constant reflecting the variance of input data $\x$ in eq (19) in the main text.

We argue that a linear approximation of the erf function is acceptable. Both $c_0$ and $c_1$ in \eqref{eq:23} are roughly $\frac{1}{2 \sigma \sqrt{d}}$, the derivative of the erf function is $\frac{d \text{erf}(x)}{dx} = \frac{2 \exp(-x^2)}{\sqrt{\pi}}$, and for small $x$, e.g. $\frac{1}{2 \sigma \sqrt{d}}$ (in deep networks, $d$ is large), $\exp(-x^2)$ would be close to 1, thus we can use $\frac{d \text{erf}(x)}{dx} \rvert _{x=0} = 2 / \sqrt{\pi}$ for a linear approximation of the erf function. Thus, we have
\begin{align}
\begin{split}
R(\tilde{\W})  - R(\tilde{\W}^*) 
&\geq \frac{\text{erf} \left( c_1 \right) - \text{erf} \left( c_0 \right) }{2} \\
&\approx \frac{2}{\sqrt{\pi}} \frac{ c_1 - c_0 }{2} \\
&= \frac{\frac{1}{\sqrt{2 \pi}} \cdot \frac{c^{2d}}{2^{p-2}}}{ \sigma \sqrt{ (d+1) \left( 1 - \frac{1}{2^r} \right) + \left( \frac{1-c^{2(d+1)}}{1 - c^2} \right) \left( \frac{ c^{2d}}{2^r} \right)} } \\
&\geq \frac{\frac{1}{\sqrt{2 \pi}} \cdot \frac{1}{2^{p-2}} \frac{1}{p^{2d}} }{ \sigma \sqrt{(d+1) \left( 1 - \frac{1}{2^r} \right) + \left( \frac{1-c^{2(d+1)}}{1 - c^2} \right) \left( \frac{ c^{2d}}{2^r} \right)}} 
\end{split}
\end{align}
where $c = \frac{1}{p-r+1}$ and $c \geq  \frac{1}{p}$ is used for the last inequality.

Now we further simplify this approximate lower bound. Observe, for any $k \in \mathbb{N}$, if $r \leq \frac{p}{k} + 1$, then
\begin{align}
\begin{split}
&p - r + 1 \geq p + 1 - \frac{p}{k} - 1 = \frac{k-1}{k} p 
\implies \frac{1}{p} = \frac{1}{p-r+1} = c \leq \frac{k}{k-1} \frac{1}{p} \\
\implies& \frac{1}{1-c^2} \leq \frac{1}{1 - \left[ \left( \frac{k}{k-1} \right) \frac{1}{p} \right]^2} = \frac{p^2}{p^2 - \left( \frac{k}{k-1} \right)^2} = 1 + \frac{\left( \frac{k}{k-1} \right)^2}{p^2 - \left( \frac{k}{k-1} \right)^2} \leq \frac{p^2}{p^2-4} \leq \frac{3^2}{3^2-4} = \frac{9}{5}
\end{split}
\end{align}
where the second last inequality is attained by the maximum of $\left( \frac{k}{k-1} \right)^2$ at $k = 2$, and the last inequality is attained by the maximum of $\frac{p^2}{p^2-4}$ by $p = 3$ (if $p = 2$, this term is not defined). 

Therefore the risk gap could be further lower bounded, 
\begin{align}
\begin{split}
R(\tilde{\W})  - R(\tilde{\W}^*) 
&\gtrapprox
\frac{\frac{1}{\sqrt{2 \pi}} \cdot \frac{1}{2^{p-2}} \frac{1}{p^{2d}} }{ \sigma \sqrt{ \left[ (d+1) \left( 1 - \frac{1}{2^r} \right) + \frac{9}{5} \left( 1-c^{2(d+1)} \right) \left( \frac{ c^{2d}}{2^r} \right) \right]}} \\
&\geq \frac{\frac{1}{\sqrt{2 \pi}} \cdot \frac{1}{2^{p-2}} \frac{1}{p^{2d}} }{ \sigma \sqrt{(d+1) \left( 1 - \frac{1}{2^r} \right) + \frac{9}{5} \left( \frac{ c^{2d}}{2^r} \right)}} \\
&\geq \frac{\frac{1}{\sqrt{2 \pi}} \cdot \frac{1}{2^{p-2}} \frac{1}{p^{2d}} }{ \sigma \sqrt{(d+1) \left( 1 - \frac{1}{2^r} \right) + \frac{9}{5} \left( \frac{k}{k-1} \frac{1}{p} \right)^{2d} \left( \frac{1}{2^r} \right)}} \\
&\geq \frac{\frac{1}{\sqrt{2 \pi}} \cdot \frac{1}{2^{p-2}} \frac{1}{p^{2d}} }{ \sigma \sqrt{(d+1) \left( 1 - \frac{1}{2^r} \right) + \frac{9}{5} \left( \frac{2}{3} \right)^{2d} \left( \frac{1}{2^r} \right)}} \\
&\geq \frac{\frac{1}{\sqrt{2 \pi}} \cdot \frac{1}{2^{p-2}} \frac{1}{p^{2d}} }{ \sigma \sqrt{ (d+1) \left( 1 - \frac{1}{2^r} \right) + \frac{9}{10} \left( \frac{2}{3} \right)^{2d}}} \\
&\geq \frac{\frac{1}{\sqrt{2 \pi}} \cdot \frac{1}{2^{p-2}} \frac{1}{p^{2d}} }{ \sigma \sqrt{ (d+1) + \frac{2}{5}}} \\
&= \frac{\frac{1}{\sqrt{2 \pi}} \cdot \frac{1}{2^{p-2}} \frac{1}{p^{2d}} }{ \sigma \sqrt{ d + \frac{7}{5}}} \in \Theta \left( \frac{1}{2^p \cdot p^{2d} \cdot \sigma \sqrt{d}} \right)
\end{split}
\end{align}
where the first inequality uses $1 - \frac{1}{2^r} \leq 1$, and the second inequality uses an assumption that $r \leq \frac{p}{k} + 1$ for any $k \geq 2$, and the third inequality follows as $k \geq 2$ and $p \geq 3$. The fourth inequality uses $\frac{1}{2^r} \leq \frac{1}{2}$, and the fifth inequality uses $d \geq 1$.

Now let $\epsilon = \frac{c_1'}{2^p \cdot p^{2d} \cdot \sigma \sqrt{d}}$ for some constant $c_1'$. We first hold $d$ fixed, then $\epsilon = \frac{c_2'}{2^p p^{2d}}$ for some constant $c_2'$, where $2^p$ is the dominant term. Therefore we write $\epsilon = \frac{c_3'}{2^p}$ for some other constant $c_3'$, i.e. $\frac{1}{\epsilon} \in \Theta \left( 2^p \right)$, which gives $p \in \Theta\left( \log \frac{1}{\epsilon} \right)$. Then we hold $p$ fixed, then $\epsilon = \frac{c_4'}{p^{2d} \sqrt{d + 7/5}} \approx \frac{1}{p^{2d}}$ as $p^{2d}$ is the dominant factor. Thus we have $2d \log(p) \in \Theta \left( \log \frac{1}{\epsilon} \right)$, which in turn gives $d = \frac{\log \frac{1}{\epsilon}}{2 \log(p)} \in \Theta\left( \frac{\log \frac{1}{\epsilon}}{2 \log \frac{1}{\epsilon}} \right) = \Theta(1)$. 

Combine $p \in \Theta\left( \log \frac{1}{\epsilon} \right)$ and $d = \Theta(1)$, and assume $r \propto p$, the sample complexity lower bound we found in Theorem 3.1 in the main text, $n \in \Omega \left( d r \log(r) + p \right)$ becomes $n \in \Omega \left(\log \frac{1}{\epsilon} \cdot \log \left( \log \frac{1}{\epsilon} \right) + \log \frac{1}{\epsilon} \right)$.

\end{document}